\documentclass[letterpaper,11pt]{article}
\usepackage[top=1in,bottom=1.25in, left=1in, right=1in]{geometry}

\usepackage{amsmath,amsfonts,bm}

\def\Secref#1{Sec.~\ref{#1}}

\def\1{\bm{1}}

\def\vzero{{\bm{0}}}

\def\vmu{{\bm{\mu}}}
\def\vtheta{{\bm{\theta}}}
\def\vnu{{\bm{\nu}}}
\def\vphi{{\bm{\phi}}}
\def\vepsilon{{\bm{\varepsilon}}}
\def\veps{{\bm{\varepsilon}}}
\def\valpha{{\bm{\alpha}}}
\def\vpi{{\bm{\pi}}}

\def\vb{{\bm{b}}}
\def\vc{{\bm{c}}}

\def\ve{{\bm{e}}}
\def\vf{{\bm{f}}}

\def\vh{{\bm{h}}}

\def\vm{{\bm{m}}}

\def\vp{{\bm{p}}}
\def\vq{{\bm{q}}}
\def\vr{{\bm{r}}}

\def\vu{{\bm{u}}}
\def\vv{{\bm{v}}}
\def\vw{{\bm{w}}}
\def\vx{{\bm{x}}}
\def\vy{{\bm{y}}}
\def\vz{{z}}

\def\mA{{\bm{A}}}
\def\mB{{\bm{B}}}
\def\mC{{\bm{C}}}

\def\mE{{\bm{E}}}

\def\mG{{\bm{G}}}
\def\mH{{\bm{H}}}
\def\mI{{\bm{I}}}
\def\mJ{{\bm{J}}}

\def\mP{{\bm{P}}}

\def\mU{{\bm{U}}}
\def\mV{{\bm{V}}}
\def\mW{{\bm{W}}}
\def\mX{{\bm{X}}}

\def\mSigma{{\bm{\Sigma}}}
\def\mTheta{{\bm{\Theta}}}

\DeclareMathAlphabet{\mathsfit}{\encodingdefault}{\sfdefault}{m}{sl}
\SetMathAlphabet{\mathsfit}{bold}{\encodingdefault}{\sfdefault}{bx}{n}

\def\gL{{\mathcal{L}}}

\def\gN{{\mathcal{N}}}

\def\gT{{\mathcal{T}}}

\def\gV{{\mathcal{V}}}

\def\gZ{{\mathcal{Z}}}

\newcommand{\E}{\mathbb{E}}

\newcommand{\R}{\mathbb{R}}
\renewcommand{\P}{\mathbb{P}}

\newcommand{\softmax}{\mathrm{softmax}}

\newcommand{\calT}{\mathcal{T}}
\newcommand{\calS}{\mathcal{S}}

\newcommand{\vxi}{\boldsymbol{\xi}}

\usepackage{graphicx} 

\usepackage[numbers]{natbib}

\usepackage[dvipsnames]{xcolor}
\usepackage{enumitem}
\usepackage{subcaption}
\usepackage{caption}
\usepackage{bbm}
\usepackage{algorithm}
\usepackage{mathtools}
\usepackage{algpseudocode}
\usepackage{tikz}
\usepackage{amssymb}
\usepackage{amsthm}
\usepackage{booktabs}
\usepackage{multirow}
\usepackage{hyperref}
\usepackage{comment}
\usepackage{wrapfig}
\usepackage{siunitx}
\usepackage[normalem]{ulem}
\usepackage[utf8]{inputenc} 
\usepackage[T1]{fontenc}    
\usepackage{url}            
\usepackage{nicefrac}       
\usepackage{microtype}      
\usepackage{tcolorbox}
\usepackage{authblk}
\usetikzlibrary{positioning,calc,fit,arrows.meta}

\newtheorem{rmk}{Remark}
\newtheorem{defn}{Definition}

\newtheorem{prop}{Proposition}
\newtheorem{thm}{Theorem}
\newtheorem{cor}{Corollary}
\newtheorem{lemma}{Lemma}

\hypersetup{hidelinks}

\title{Task Vector Geometry Underlies Dual Modes of Task Inference in Transformers\thanks{Code is available at \url{https://github.com/ezyhdxm/mini-ICL}.}}
\author[1]{Hao Yan\thanks{Correspondence to: \href{mailto:hyan84@wisc.edu}{\texttt{hyan84@wisc.edu}}, \href{mailto:yiqiao.zhong@wisc.edu}{\texttt{yiqiao.zhong@wisc.edu}}.}}
\author[2]{Haolin Yang}
\author[1]{Yiqiao Zhong\protect\footnotemark[2]}
\affil[1]{University of Wisconsin--Madison}
\affil[2]{University of Chicago}

\date{}

\begin{document}

\maketitle


\begin{abstract}

Transformers are effective at inferring the latent task from context via two inference modes: recognizing a task seen during training, and adapting to a novel one. Recent interpretability studies have identified from middle-layer representations task-specific directions, or \textit{task vectors}, that steer model behavior. However, a lack of rigorous foundations hinders connecting internal representations to external model behavior: existing work fails to explain how task-vector geometry is shaped by the training distribution, and what geometry enables out-of-distribution (OOD) generalization.
In this paper, we study these questions in a controlled synthetic setting by training small transformers from scratch on latent-task sequence distributions, which allows a principled mathematical characterization. We show that two inference modes can coexist within a single model. In-distribution behavior is governed by Bayesian task retrieval, implemented internally through convex combinations of learned task vectors. OOD behavior, by contrast, arises through extrapolative task learning, whose representations occupy a subspace nearly orthogonal to the task-vector subspace. 
Taken together, our results suggest that task-vector geometry, training distributions, and generalization behaviors are closely related.

\end{abstract}

\section{Introduction}\label{sec:intro}

Large language models (LLMs) exhibit impressive performance, yet the mechanism underlying their generalization remains unclear. Since the emergence of in-context learning, empirical studies have suggested two modes of task inference \citep{brown2020language,pan2023context}: \emph{memorization-based task retrieval}, where the model updates an implicit posterior over in-distribution (ID) training tasks and retrieves previously learned behaviors, and \emph{context-based task generalization}, where it solves out-of-distribution (OOD) tasks whose latents lie outside the training support. However, the internal representations supporting these two modes remain obscure. This motivates two central questions: \textit{How do hidden states summarize context and retrieve memorized tasks? How does the representation geometry change for novel tasks outside the training support?}


\begin{figure}[t!]
\centering
\includegraphics[width=0.98\textwidth]{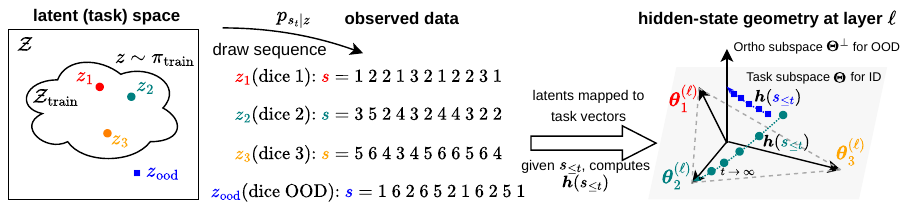}
\caption{
\textbf{Connections between data distribution and representation geometry.} 
This diagram illustrates the synthetic \textit{rolling biased dice} experiment. Each sequence is generated from an unobserved latent $z$ controlling the outcome distribution. A transformer trained on mixture data learns internal task vectors that encode these latents. Two near-orthogonal subspaces largely determine a model's inference mode and generalization behavior.} 
\label{fig:main}
\end{figure}


To understand these external behaviors, researchers have analyzed internal representations, broadly centering on the 
\textit{linear representation hypothesis} (LRH) \citep{mikolov-etal-2013-linguistic, mikolov2013distributed, parklinear}, which posits that hidden states encode concepts as linear subspaces. For example, \cite{hendel2023context} discovered that in ICL, LLMs use in-context examples internally to form task-encoding vectors, commonly called \textit{task vectors}. Concretely, given a prompt $s = (x_1, f(x_1), x_2, f(x_2), \ldots, x_n, f(x_n), x_{\mathrm{test}})$, an LLM can infer the mapping $f$ from the demonstrations and correctly predict $f(x_{\mathrm{test}})$. Internally, the hidden state in a middle layer contains a task-specific component $\vtheta_f$ that steers the model toward the correct continuation. 

\paragraph{Challenges: lack of rigorous foundations for representation geometry.} While prior work on internal representations and task vectors provides valuable insights \citep{geva2021transformer, bricken2023monosemanticity}, the mechanistic analysis is often model-dependent and its applicability is rarely verified. For example, various notions of task vectors are proposed \citep{hendel2023context, toddfunction, merullo2024languagemodelsimplementsimple}, sometimes leading to contradictory results in the literature. A central missing piece is a theoretical framework that rigorously defines task vectors and clarifies how they relate to the geometry of internal representations. Without such a foundation, existing results are largely heuristic: task vectors are often introduced operationally or extracted empirically, but their precise meaning, validity, and connection to representation structure remain unclear. As a result, current analyses fail to provide a clear mechanistic picture of how pretrained LLMs internally encode task information, how this encoding is shaped by the training data distribution, and how internal representations mediate the link between external statistical structure and model behavior.




\paragraph{Bridging the gap via latent-variable modeling.} To develop foundations for task vector representation geometry grounded in rigorous statistical principles, we mainly study synthetic experiments where we train small transformers from scratch on carefully controlled data. As shown in Figure~\ref{fig:main}, at a high level, we view tasks as latent variables (or simply, latents) in a space $\gZ$. We model training data as a latent-task mixture: for each sequence, we first sample an unobserved latent $z \in \gZ_{\mathrm{train}} \subset \gZ$ and then sample the sequence from its task-specific distribution $p_z$. Echoing the ``multitask learner'' heuristic of GPT-2 \citep{radford2019language}, this latent-variable view is standard in classical text analysis (e.g., topic models) and has recently provided useful insights on ICL \citep{xieexplanation, garg2022can}, induction heads \citep{olsson2022context, edelman2024evolution}, etc.

This setup allows us to broadly define task vectors as mappings of latents, and characterize how a model infers tasks from interactions between hidden states and task vectors. We adopt a ``gray-box'' perspective that assumes access only to hidden states, making our analysis agnostic to architectural details. Our approach unifies prior work on representation geometry \cite{shai2024transformers, shai2026transformers} and inference modes \cite{carroll2025dynamics, lu2025asymptotic, park2025competition}, and identifies a counterexample that exposes a failure case of existing heuristic approaches to defining and extracting task vectors, thereby illustrating the value of our rigorous characterization of task vectors (\Secref{sec:motivating-example}). 
Our main findings are summarized below.



\begin{enumerate}
    \item \textbf{A principled framework unifies representation geometry and generalization.} We mathematically define task vectors as representations of latent tasks, and  characterize two distinct inference modes: Bayesian task retrieval and extrapolative task learning (\Secref{sec:math}).
    \item \textbf{Task-vector geometry governs Bayesian task retrieval}. For ID sequences, the model infers latent tasks primarily by adjusting the coefficients of task vectors in its hidden states, which eventually converge to the true task vector as sequence length grows (\Secref{sec:task-vectors}).
    \item \textbf{Near-orthogonal subspace governs extrapolative task learning.} For OOD sequences, the model can often generalize by using context-related statistics encoded through near-orthogonal subspaces, which emerge only at sufficiently large task diversity (\Secref{sec:two-modes}).
\end{enumerate}

\section{Related work}\label{sec:related}

\paragraph{Task vectors in language models.} Task vectors have emerged as a prominent paradigm for explaining how language models extract task information from context and use it to steer subsequent predictions. However, where such vectors should be obtained from, and which internal activations truly constitute task vectors, remains underexplored and still somewhat controversial. Existing work typically extracts task vectors using heuristic procedures such as dummy task prompts \citep{hendel2023context}, averaging across multiple task prompts \citep{toddfunction}, or opaque optimization-based selection \citep{li2024incontextlearningstatevector}, and does so from sources including intermediate ICL hidden states \citep{liu2024incontextvectorsmakingcontext}, attention-head outputs \citep{yin2025attention}, and MLP activations \citep{merullo2024languagemodelsimplementsimple}. However, these construction procedures are largely ad hoc: candidate vectors are typically selected from a small set of layers or components and validated mainly by intervention outcomes, rather than derived from a principled characterization of the underlying data distribution or representation geometry.

\paragraph{Bayesian inference and OOD generalization.} Early empirical studies observe that LLMs can operate in two modes: retrieve seen tasks and learn new tasks in context \citep{pan2023context, wei2023larger}. Explanations such as Bayesian beliefs \citep{xieexplanation, shai2024transformers} and induction heads \citep{elhage2021mathematical, olsson2022context, doi:10.1073/pnas.2417182122} focus on either ID or OOD generalization, without reconciling the coexistence of both inference modes. Synthetic experiments in \cite{garg2022can, carroll2025dynamics, lu2025asymptotic, park2025competition} reveal that task diversity in training shapes the inference mode in specific settings, yet no analysis of internal representations is provided.

\paragraph{Near-orthogonal representation geometry.} 
A central idea in mechanistic interpretability (MI) is the \textit{superposition principle} \citep{elhage2022superposition}: models often accommodate a large number of feature vectors by arranging them in near-orthogonal positions, echoing the classical literature in sparse coding \citep{donoho1989uncertainty} and compressed sensing \citep{donoho2006compressed, candes2006robust}. Near-orthogonal geometry is empirically confirmed in transformers \citep{song2023uncovering, shai2026transformers} and serves as the foundation for MI work \citep{bricken2023monosemanticity, templeton2024scaling, park2025the, gurnee2026models}. However, near-orthogonality has not been established as a geometric foundation for the coexistence of ID and OOD inference modes.

\section{Synthetic experiment setup}\label{sec:setup}

\paragraph{Data generation.} In all synthetic experiments, we first draw a latent $z$ from a finite training set $\gZ_{\mathrm{train}} \subset \gZ$ and then sample a sequence from the latent-specific distribution.

\paragraph{E1.~Rolling biased dice.} Each latent $z$ corresponds to a probability vector $\vp$ drawn from a Dirichlet prior $\mathrm{Dir}(\boldsymbol{1}_6)$. For each sequence, we first sample $\vp$ and then sample tokens independently from $\vp$.

\paragraph{E2.~In-context linear regression.} Following \cite{raventos2023pretraining}, we take the latent $z:= \vw \sim \gN(\vzero, \mI_D)$ as the $D$-dim regression weight vector ($D = 6$). For each sequence, we first sample $\vw$ and then $T/2$ input-output pairs (treating each vector/scalar as a token) $\vx_1,y_1,\vx_2,y_2,\ldots,\vx_{T/2}, y_{T/2}$, where
$
\vx_t \sim \mathcal{N}(\vzero,\mI_D),
y_t = \vx_t^\top \vw + \varepsilon_t,$ and $\varepsilon_t \sim \mathcal{N}(0, 0.25)\,.$

\paragraph{E3.~Mixture of Markov chains.} A latent $z := \mP \in \R^{V \times V}$ ($V = 6$) is a transition matrix for a first-order Markov chain. As in \cite{edelman2024evolution}, we draw rows of $\mP$ independently from a Dirichlet prior $\mathrm{Dir}(\boldsymbol{1}_V)$, then sample the Markov sequence $s_{1:T}$ using $\mP$, starting from a uniform initial distribution.

\begin{rmk}[on Markov property]
In synthetic setup \texttt{E1}--\texttt{E3}, sequences satisfy the first-order Markov property given $z$: $p(s_{t+1} \mid z, s_{\le t}) = p(s_{t+1} \mid z, s_t)$. Thus we expect a performant model to infer the unobserved $z$ by internally compressing the context into a `belief' about $z$.
\end{rmk}



\paragraph{Model and training.} We use a decoder-only transformer with rotary positional embedding (RoPE) \citep{su2024roformer}.
For experiments \texttt{E1} and \texttt{E3}, the transformer has $6$ layers, $2$ heads, and hidden dimension $d=128$; for \texttt{E2}, we follow \cite{akyurek2022learning} by increasing the number of layers to $16$ (other hyperparameters remain the same). 
All models are trained autoregressively with AdamW \citep{loshchilov2018decoupled}; a complete list of hyperparameters is provided in \Secref{app:hyperparams}.
\section{A mathematical framework for representation geometry}\label{sec:math}


This section develops the mathematical framework used throughout the paper. \Secref{sec:task_retrieval} defines task vectors and four geometric properties of hidden states; \Secref{sec:bayesian-mode} links these properties to Bayesian task retrieval, while \Secref{sec:generalization-mode} studies extrapolative task learning and shows when it must move beyond the task-vector subspace. Secs.~\ref{sec:task-vectors}--\ref{sec:two-modes} provide empirical evidence for the two-mode picture.

\subsection{Theoretical properties of task-vector geometry}
\label{sec:task_retrieval}
Let $\vz \in \gZ$ be the unobserved latent and $s_1,s_2,\ldots$ be the observed sequence generated from the distribution $P_z$. 
For a fixed transformer layer \(\ell\), denote by
\(\vh_t^{(\ell)} \in \mathbb R^d\) the residual-stream hidden state at
position \(t\) after processing the prefix \(s_{\le t}\).
Throughout this section, we fix a layer and suppress the superscript $\ell$. Since the results below do not depend on the specific parameter space of any experiment, we identify the training latents $z$ with the index set $[K]:=\{1,\ldots,K\}$ and write $\vmu_t := \E[\vh_t]$ for the mean hidden state at position~$t$.
As noted in \Secref{sec:setup}, tasks \texttt{E1}--\texttt{E3} satisfy the first-order Markov property: $\P(s_{t+1}\mid z, s_{\le t}) = \P(s_{t+1}\mid z, s_t)$. 

Under the Markov property, $\P(s_{t+1}\mid z, s_{\le t})$ depends only on $(z, s_t)$, so a well-trained model's hidden state should likewise be approximately determined by these two factors. Define cell means $\vmu_{t,k,a} := \E[\vh_t \mid z=k, s_t=a]$ and write
\begin{equation}\label{eq:cell-means}
\vh_t \;=\; \vmu_{t,z,s_t} \;+\; \veps_t,
\end{equation}
where $\veps_t := \vh_t - \vmu_{t,z,s_t}$ is the residual not explained by $(z, s_t)$. We state four properties characterizing the hidden-state geometry when task retrieval succeeds.

\paragraph{P0.~Long-context stability.}
The residual in Eq.~\ref{eq:cell-means} vanishes asymptotically: $\E\|\veps_t\|^2 \to 0$ as $t\to\infty$. In other words, the hidden state becomes a deterministic function of $(z, s_t)$, meaning the model has successfully summarized the context. While heuristic task-vector extractions \cite{toddfunction, hendel2023context} implicitly assume \texttt{P0}, we show in Sec.~\ref{sec:motivating-example} that this property is not always satisfied. 

\paragraph{P1.~Long-context decoupling.}
Given \texttt{P0}, we can further ask how the cell means $\vmu_{t,k,a}$ are structured. The task-specific component naturally gives rise to the definition of \emph{task vectors} \citep{hendel2023context}.
\begin{defn}\label{def:task-vec}
For a given layer, \emph{task vectors} are $\{\vtheta_k\}_{k \le K} \subset \R^d$ defined by
\begin{equation}\label{eq:task-vec:defn}
    \vtheta_k = \lim_{t \to \infty}\big(\E[\vh_t \mid z=k] - \vmu_t\big),
\end{equation}
whenever the limit exists. Centering ensures $\sum_{k \le K} \vtheta_k = \vzero$.
\end{defn}
\texttt{P1} requires that the cell means decompose additively into task and token components. There exist token-encoding vectors $\{\vnu_a\}_{a \in \gV} \subset \R^d$ such that
\begin{equation}
\E \Bigl\|\vh_t - \bigl(\underbrace{\vmu_t}_{\text{global mean}} + \underbrace{\vtheta_z}_{\text{task vector}} + \underbrace{\vnu_{s_t}}_{\text{token encoding}}\bigr)\Bigr\|_2^2 \to 0
\qquad \text{as } t \to \infty.
\label{eq:long_context_stability_main}
\end{equation}
This additive structure is a form of the \textit{linear representation hypothesis} \citep{parklinear}; in ANOVA terms, \texttt{P1} requires that the interaction between $z$ and $s_t$ is negligible.

\paragraph{P2.~Finite-context interpolation.}
At finite context, the model represents its knowledge about latent tasks via a convex combination of task vectors. There exist nonnegative coefficients $\{\beta_{t,k}\}_{t\ge 1,\, k\le K}$ satisfying $\sum_{k\le K}\beta_{t,k}=1$ (depending on context $s_1,\ldots,s_t$) such that
\begin{equation}
\vh_t \approx \vmu_t + \sum_{k\le K}\beta_{t,k}\,\vtheta_k + \vnu_{s_t}.
\label{eq:additive_interp_main}
\end{equation}
\texttt{P2} formalizes the \textit{superposition principle} \citep{elhage2022superposition}: hidden states are linear combinations of task vectors.

\paragraph{P3.~Bayesian posterior alignment.}
\texttt{P3} strengthens \texttt{P2} by pinning down the same coefficients $\beta_{t,k}$: they approximate the Bayesian posterior,
\begin{equation}
\beta_{t,k} \approx \alpha_{t,k} := \P(z=k \mid s_{\le t}).
\label{eq:posterior_alignment_main}
\end{equation}
This provides a mechanistic characterization of \textit{Bayesian inference}: while the Bayesian perspective has been used to explain ICL \citep{xieexplanation}, its internal geometry has remained unclear. These properties are progressively stronger: \texttt{P3} $\Rightarrow$ \texttt{P2}, \texttt{P1} and \texttt{P1} $\Rightarrow$ \texttt{P0} in the long-context limit; see \Secref{app:property-proofs}.

\subsection{Task retrieval: task-vector geometry mirrors Bayesian inference}\label{sec:bayesian-mode}

When a model has memorized a task $z$ during training, it can retrieve the latent task $z$ from test prompts via approximate Bayesian  prediction; we call this the \textbf{Bayesian task retrieval mode} (\texttt{M1}). Formally, let $\pi_{\mathrm{train}}$ be the prior distribution over $\gZ_{\mathrm{train}} \subset \gZ$. The optimal prediction under \texttt{M1} is
\begin{equation}\label{eq:bayes}
    \P(s_{t+1}\!=\!a \mid s_{\le t}) \!=\! \int_{\gZ'} \P(s_{t+1}\!=\!a \mid z, s_{t})\, p(z \mid s_{\le t}) \mathrm{d}z, ~~ p(z \mid s_{\le t}) = \frac{p_z(s_{\le t})\,\pi_{\mathrm{train}}(z)}{\int_{\gZ'} p_z(s_{\le t})\,\pi_{\mathrm{train}}(z)},
\end{equation}
where $\gZ' = \gZ_{\mathrm{train}}$ and $p(z \mid s_{\le t})$ is the exact $\alpha_{t,k}$ in Eq.~\ref{eq:posterior_alignment_main}. 
By Eqs.~\ref{eq:additive_interp_main} and~\ref{eq:posterior_alignment_main}, \(\vh_t\) carries the same posterior weights as Eq.~\ref{eq:bayes}: after a shared centering map removes \(\vmu_t\), the informative component is \(\sum_{k\le K}\alpha_{t,k}(\vtheta_k+\vnu_{s_t})\). Thus Bayesian retrieval reduces to a finite lookup-and-mix computation: lookup \(\P(\cdot\mid z=k,s_t)\) from the task/token anchors and mix the results with \(\alpha_{t,k}\). \Secref{app:transformer-construct} gives the constructive proof.

\begin{thm}[Informal; task-vector subspace can implement Bayesian inference]
\label{thm:bayes_realization_from_hidden}
Suppose that at some layer \(\vh_t\) satisfies \texttt{P0}--\texttt{P3}. Under the structural conditions formalized in \Secref{app:transformer-construct}, two additional transformer blocks and a suitable unembedding can make the ID prediction distribution approximate the Bayesian predictive distribution in Eq.~\ref{eq:bayes}.
\end{thm}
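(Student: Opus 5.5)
The plan is a direct construction. We take the structured hidden state $\vh_t \approx \vmu_t + \sum_k \alpha_{t,k}\vtheta_k + \vnu_{s_t}$, which holds by \texttt{P2}--\texttt{P3}, and build two blocks and an unembedding that turn it into $\sum_k \alpha_{t,k}\,\P(\cdot\mid z=k,s_t)$. We assume the structural conditions needed for this. First, the task vectors $\{\vtheta_k\}$ span a $(K-1)$-dimensional subspace. Second, the token encodings $\{\vnu_a\}$ are linearly independent of that subspace. Third, $\vmu_t$ is available to the network, either because it is asymptotically constant or because it is computable from position (e.g.\ via a bias).

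\textbf{Block 1: centering and linear decoding.} An attention head that attends uniformly or trivially, or simply an MLP with bias, subtracts $\vmu_t$. A linear map then recovers the coefficients. Because $\sum_k\vtheta_k=\vzero$ and $\sum_k \alpha_{t,k}=1$, we can pick vectors $\vb_k$ in the dual basis (of the affine hull of $\{\vtheta_k\}$ together with the span of $\{\vnu_a\}$) so that $\langle \vb_k, \vh_t-\vmu_t\rangle \approx \alpha_{t,k} - 1/K$. Adding the constant $1/K$ gives $\alpha_{t,k}$. Similarly, a second set of dual vectors reads off the one-hot vector $\ve_{s_t}$. Both readouts are written into fresh residual coordinates, which requires $d \ge K + |\gV| + (\text{existing rank})$.

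\textbf{Block 2: lookup and mixing.} We need the bilinear quantity $\sum_k \alpha_{t,k}\, \P(a\mid k,s_t)$. The plan is to have an MLP compute the products $\alpha_{t,k}\mathbb{1}[s_t=b]$. This is exact with ReLU: $\alpha_{t,k}\mathbb{1}[s_t=b] = \mathrm{ReLU}(\alpha_{t,k} + \mathbb{1}[s_t=b] - 1)$ whenever $\alpha\in[0,1]$ and the indicator is in $\{0,1\}$. With $K|\gV|$ hidden units, the MLP writes $\vm_t = \sum_{k,b}\alpha_{t,k}\mathbb{1}[s_t=b]\,\vc_{k,b}$. The second block's attention is set to the identity, which is allowed because attention can be a no-op via a zero value matrix. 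The mixing itself is then linear. An unembedding cannot apply softmax to a mixture of probabilities and recover a mixture, so we use a final MLP, or alternatively interpret the "suitable unembedding" as a linear readout. Concretely, we let $\vc_{k,b}$ store $\P(\cdot\mid k,b)$ and set the logits to $\log$ of the linear readout $\sum_{k}\alpha_{t,k}\P(\cdot\mid k,s_t)$. The log can be approximated on $[\epsilon,1]$ by an MLP (universal approximation), assuming the transition probabilities are bounded below by some $\epsilon>0$.

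\textbf{Error propagation.} Write $\delta_t$ for the approximation error in \texttt{P2}/\texttt{P3} plus the residual $\veps_t$. Each map above is Lipschitz on the relevant bounded domain: the ReLU products have constant Lipschitz constant, and the log is $1/\epsilon$-Lipschitz. So the total-variation error of the predictive distribution is $O(\|\delta_t\|\cdot C)$, where $C$ depends on the condition number of the dual basis and on $\epsilon$. The main obstacle is the decoding step when $\vh_t$ is only approximately structured. A token-readout error could flip the indicator, and the dual basis may be ill-conditioned. To handle the indicator, we will use a margin condition (the $\vnu_a$ are separated) and threshold with a ReLU step so that the indicator is exact once $\|\delta_t\|$ is below half the margin. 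For the rest, we state the bound in terms of the smallest singular value of $[\vtheta_1,\dots,\vtheta_K,\vnu_a]$.
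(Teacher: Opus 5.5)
Your core mechanism matches the paper's. The unit $\sigma(\alpha_{t,k}+\mathbf{1}\{s_t=b\}-1)$ is exactly the width-$KV$ lookup of Lemma~\ref{lem:mlp_lookup_convex}, written there as $\sigma(\langle\vphi_{ij},\vu\rangle-1)$ with $\vphi_{ij}=\vtheta_i^*+\vnu_j^*$. Your log-on-$[\varepsilon,1]$ MLP followed by softmax is Lemma~\ref{lem:approx_recover_prob_subspace}. Your affine readout of $\alpha_{t,k}-1/K$ in the centered gauge plays the role of the paper's switch to an uncentered gauge, in which $\{\vtheta_k\}\cup\{\vnu_a\}$ is linearly independent.

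Two points of bookkeeping, however, do not deliver the statement as written.

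First, you use three nonlinear stages (decoding block, product block, and a ``final MLP'' for the log), not two blocks. Your Block~1 is purely affine (centering, dual readout, adding $1/K$), so it should not occupy a block. Fold it into the first layer of the product MLP, as the paper does by composing the $\vphi_{ij}$ with a fixed linear map $\mC$; the second block is then free for the log. The paper also avoids fresh coordinates. Its lookup outputs $\mE\vp_{k,a}-\vtheta_k-\vnu_a$, which overwrites the task/token component with $\mE\vq_{t+1}$ in place. Its second MLP outputs $\mE\,G(\mW_{\mathrm{U}}\vy)-\mE\mW_{\mathrm{U}}\vy$ (with $G\approx\log$), so the logits are exactly $G(\vq_{t+1})$.

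Second, your removal of $\vmu_t$ fails as stated. An MLP bias is the same at every position, so it can only subtract a constant. An attention head, whether uniform or attending to itself, returns a weighted average of value-projected hidden states of the sequence, not the population mean $\vmu_t$. Only your ``asymptotically constant'' option survives, and it gives an $o(1)$ approximation rather than arbitrary accuracy uniformly over the positions considered.

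The paper never subtracts $\vmu_t$. It assumes $\operatorname{span}\{\vmu_t:t\in\mathcal I\}$ meets $\operatorname{span}(\{\vtheta_k\}\cup\{\vnu_a\})$ only at $0$. It reads the task/token part through a fixed $\mC$ that annihilates every $\vmu_t$, carries $\vmu_t$ through both blocks, and constructs $\mW_{\mathrm{U}}$ to kill it. In your language: choose the dual functionals to vanish on $\operatorname{span}\{\vmu_t\}$.

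Finally, the formal theorem assumes the decomposition holds exactly, in a LayerNorm-free model (your construction also implicitly needs this), so no error propagation is required. If you keep your robustness sketch, drop the ReLU thresholding of the token indicator. It would cost yet another layer, and it is unnecessary, since $|\sigma(\alpha+x-1)-\sigma(\alpha'+x'-1)|\le|\alpha-\alpha'|+|x-x'|$: a perturbed indicator cannot ``flip'' anything, it only perturbs each unit linearly.
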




\subsection{Task learning: OOD inference operates outside task-vector subspace}
\label{sec:generalization-mode}

How can a model infer \(\vz' \notin \gZ_{\mathrm{train}}\) from context and achieve OOD generalization? One possibility is that it applies Eq.~\ref{eq:bayes} with an uninformative prior over the full space \(\gZ\) rather than only the training support, extrapolating beyond seen latents. We call this the \textbf{extrapolative task-learning mode} (\texttt{M2}).

\texttt{M2} relies on context statistics (empirical unigram for \texttt{E1}, ridge solution for \texttt{E2}, and empirical bigram for \texttt{E3}) rather than memorized task vectors (\Secref{app:approx-posterior}).
The key question is whether such extrapolative inference behavior is enabled by a different geometry compared with \texttt{M1}. 
We give a theoretical result: when the OOD prediction map has intrinsic dimension $d_0 > k_\star$, where $k_\star$ is the task-vector subspace dimension, a model implementing \texttt{M2} cannot confine its representations to that subspace.


More formally, suppose $\gV$ is discrete and fix \(a_\star\in\gV\). Let
$\vp(\vz):=\P(\cdot\mid \vz,a_\star)$ denote the next-token distribution.
Assume \texttt{P0} holds and that, for $\vz$ near an interior point
\(\vz_0\in\gZ\), the limit
$\vmu_\star(\vz):=\lim_{t\to\infty}\vmu_{t,\vz,a_\star}$ exists. Define the
task subspace
$\calT:=\operatorname{span}(\vtheta_2-\vtheta_1,\dots,\vtheta_K-\vtheta_1)$,
set $k_\star:=\dim\calT$ and $\calS:=\vnu_{a_\star}+\calT$. Let
$d_0:=\operatorname{rank}\nabla\vp(\vz_0)$ be the local intrinsic dimension
of $\vp$ at $\vz_0$.

\begin{thm}[Informal, distance to task subspace lower bound]
\label{thm:task-distance-lower}
Assume \texttt{P0}. Suppose that on some neighborhood \(U\ni \vz_0\), the prediction \(\hat{\vp}(\vz)\) decoded from \(\vmu_\star(\vz)\) is locally \(L\)-Lipschitz:
$
\|\hat{\vp}(\vz)-\hat{\vp}(\vz')\|_2
\le
L\,\|\vmu_\star(\vz)-\vmu_\star(\vz')\|_2,
$
for all
$
\vz,\vz'\in U.
$
Additionally, assume that
$
\|\hat{\vp}(\vz)-\vp(\vz)\|_2\le \delta,
$
for all
$
\vz\in U
$
and \(\{\vmu_\star(\vz):\vz\in U\}\) is bounded. Then there exist local constants \(c,C>0\) such that
\[
\sup_{\vz\in U} \inf_{\vc \in \calS} \|\vmu_\star(\vz)-\vc\|_2
\ge
\frac{c-\delta}{L}
-
\frac{C}{2^{d_0/k_\star}-1}
.
\]
\end{thm}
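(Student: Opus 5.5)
The plan is a covering-number (volume) argument that compares the ``richness'' of the true prediction map $\vp$ near $\vz_0$ with the richness of what can be decoded from an affine set of dimension $k_\star$. Set $D:=\sup_{\vz\in U}\inf_{\vc\in\calS}\|\vmu_\star(\vz)-\vc\|_2$ and assume $D$ is finite (otherwise there is nothing to prove).

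\textbf{Step 1: a packing lower bound from the rank of $\vp$.} Since $\operatorname{rank}\nabla\vp(\vz_0)=d_0$, choose $d_0$ coordinate directions on which $\nabla\vp(\vz_0)$ is injective. By the constant-rank / inverse function theorem applied to the restriction to this $d_0$-dimensional slice, after shrinking $U$ the map $\vp$ is bi-Lipschitz onto its image on a small $d_0$-dimensional ball $B\subset U$. Hence for every scale $\varepsilon\le c$ (with $c$ depending on the radius of $B$ and the lower singular value $\sigma_{d_0}$), the image $\vp(B)$ contains an $\varepsilon$-separated set $\{\vp(\vz_i)\}_{i\le N}$ with $N\gtrsim (c/\varepsilon)^{d_0}$. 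I would use this at a single scale: a set of $2^{d_0}$ points, pairwise separated by $c$, obtained by the dyadic refinement of a cube in the slice. Since $\|\hat\vp-\vp\|\le\delta$, the decoded predictions $\hat\vp(\vz_i)$ are pairwise separated by $c-2\delta$ (I would absorb constants into $c$ so that the statement reads $c-\delta$).

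\textbf{Step 2: transfer to the hidden states.} By the Lipschitz assumption, $\|\vmu_\star(\vz_i)-\vmu_\star(\vz_j)\|\ge (c-\delta)/L$. Project each $\vmu_\star(\vz_i)$ to its nearest point $\vc_i\in\calS$, so that $\|\vmu_\star(\vz_i)-\vc_i\|\le D$ and therefore $\|\vc_i-\vc_j\|\ge (c-\delta)/L-2D$. Boundedness of $\{\vmu_\star\}$ implies that the $\vc_i$ lie in a ball of radius $R$ (a local constant) inside the $k_\star$-dimensional affine space $\calS$.

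\textbf{Step 3: pigeonhole in $\calS$.} A $k_\star$-dimensional ball of radius $R$ cannot contain $2^{d_0}$ points that are pairwise $r$-separated unless $2^{d_0}\le (1+2R/r)^{k_\star}$, which is the standard volume bound. Taking $k_\star$-th roots gives $r\le 2R/(2^{d_0/k_\star}-1)$. Setting $r=(c-\delta)/L-2D$ (when this quantity is positive; otherwise the bound is immediate) yields
\[
D\ge \tfrac12\Bigl(\tfrac{c-\delta}{L}-\tfrac{2R}{2^{d_0/k_\star}-1}\Bigr),
\]
which is the claim after renaming constants ($c\to c/2$ adjusted, $C:=R$).

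\textbf{Main obstacle.} The delicate part is Step 1: obtaining exactly $2^{d_0}$ points that are uniformly separated by a local constant $c$ within $U$, while keeping the separation in $\vp$-space bounded below via the local bi-Lipschitz property. I would handle this with the inverse function theorem on the $d_0$-slice and the $2^{d_0}$ vertices of a small cube, whose images are separated by roughly $\sigma_{d_0}\cdot$side$/2$. \texttt{P0} is used only to ensure that $\vmu_\star$ is a well-defined deterministic function of $\vz$, so that the decoder applies pointwise.
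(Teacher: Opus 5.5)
Your proposal is correct and follows essentially the paper's argument. It has the same ingredients: co-Lipschitz control of $\vp$ on a $d_0$-dimensional slice, $2^{d_0}$ separated latents, transfer through the $\delta$-accurate $L$-Lipschitz decoder, projection onto $\calS$, and a $k_\star$-dimensional volume packing bound. The paper differs in two details. It takes a maximal packing at scale $\rho=r_{\mathrm{loc}}/2$, and it uses the sharper Pythagorean separation $\sqrt{\lambda^2-4\varepsilon^2}$ instead of your $\lambda-2D$. Its corollary then relaxes this via $\sqrt{A^2-B^2}\ge A-B$ and lands on exactly your inequality, with $c=\alpha r_{\mathrm{loc}}/4$ and $C=M$.

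One small caution concerns your cube-vertex construction. The nearest vertices of a cube that fits in a ball of radius $r$ are at most $2r/\sqrt{d_0}$ apart, so your $c$ picks up a $1/\sqrt{d_0}$ factor. The maximal-packing choice (your own first observation in Step~1) gives $2^{d_0}$ points with separation $r/2$ and keeps $c$ independent of $d_0$. The paper relies on that independence when it reads the bound in the regime $d_0\gg k_\star$.
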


See \Secref{sec:simplex-packing-obstruction} for the formal theorem. 
The second term in the lower bound decays exponentially in \(d_0/k_\star\), so when \(d_0 \gg k_\star\) the model cannot implement \texttt{M2} accurately while staying close to $\calS$; smoother decoding (smaller \(L\)) forces a larger deviation. This provides a theoretical basis for \texttt{M2} representations to leave the task subspace. 
Sec.~\ref{sec:two-modes} will in fact show a stronger empirical result: \texttt{M2} is implemented by a near-orthogonal subspace.



\section{Task-vector geometry represents Bayesian posterior beliefs}\label{sec:task-vectors}
Section \ref{sec:task_retrieval} predicts the following behavior under the Bayesian task-retrieval mode (\texttt{M1}).

\begin{tcolorbox}
\textbf{Task-vector geometry represents Bayesian inference}: For ID sequences, the model represents training tasks as learned task vectors and updates their mixture weights from context, approximately tracking the Bayesian posterior. 
\end{tcolorbox}

\subsection{Evaluation results of properties \texttt{P0}--\texttt{P3}}\label{sec:emp-task-vector}
We show that the stated properties of task vectors are mostly satisfied for \texttt{E1}--\texttt{E3}. Fix $K=3$. For each experiment, we estimate task vectors by replacing the expectation in Eq.~\ref{eq:task-vec:defn} with empirical averages $\hat \E$ over 1,024 independent sequences per task; estimated quantities use hat notation.

\paragraph{Evaluating long-context stability.} We assess how much the variance of hidden states is attributed to latent $z \in [K]$ and the current token $a \in \gV$. For experiments \texttt{E1} and \texttt{E3} with discrete tokens, we calculate $\hat \vmu_{t,k, a} := \hat \E[\vh_t \mid z=k, s_t = a]$ and $\hat \vmu_t := \hat \E[\vh_t]$. By a balanced-cell ANOVA diagnostic,
\[
\underbrace{\vphantom{\sum_{k,a}}\hat \E \|\vh_{t} - \hat \vmu_t\|^2 \vphantom{\sum_{k,a}} }_{\mathrm{SS}_{\mathrm{total}}:~\widehat{\mathrm{Var}}(\vh_t)}
\!=\!
\underbrace{(KV)^{-1}\sum_{k,a} \hat \E \big[\|\hat \vmu_{t,k,a} - \hat \vmu_t\|^2\big]}_{\mathrm{SS}_{\mathrm{between}}}
\!+\!
\underbrace{(KV)^{-1}\sum_{k,a} \hat \E \big[\|\vh_t - \hat \vmu_{t,k,a}\|^2 | z=k, s_t=a\big]}_{\mathclap{\mathrm{SS}_{\mathrm{within}}:\ \widehat{\mathrm{Var}}(\vh_t \mid z, s_t)}}.
\]
and we measure the residual variance ratio $ \mathrm{SS}_{\mathrm{within}} / \mathrm{SS}_{\mathrm{total}}$. This ratio is zero if and only if hidden states $\vh_{t}$ are completely determined by latents and current tokens $z, a$. For experiment \texttt{E2} where tokens are continuously valued, we approximate the proportion of conditional variance using analysis of covariance (ANCOVA) and report the $R^2$ of the linear fit $\vh_t \approx \vmu_t + \vtheta_k + \mB_k \vx_t$ where $\mB_k$ is a task-specific coefficient matrix. See~\Secref{app:additive-separability} for details.

\begin{table}[t]
\centering
\scriptsize
\renewcommand{\arraystretch}{0.9}
\setlength{\tabcolsep}{4pt}
\caption{Hidden-state variance decomposition at the final context position.
\textit{(a)} Residual variance ratio $\mathrm{SS}_{\mathrm{within}}/\mathrm{SS}_{\mathrm{total}}$: unexplained variance; lower is better.
\textit{(b)} Interaction proportion $\eta^2_{\mathrm{interaction}}$: share of $\mathrm{SS}_{\mathrm{between}}$ from task--token interaction; lower supports additivity.}
\begin{tabular}{@{}lcccccc@{\hspace{10pt}}lcccccc@{}}
\toprule
\textbf{Layer} & 0 & 1 & 2 & 3 & 4 & 5 & \textbf{Layer} & 4 & 6 & 8 & 10 & 12 & 14 \\
\midrule
\multicolumn{14}{l}{\textit{(a) Residual variance ratio $\mathrm{SS}_{\mathrm{within}} / \mathrm{SS}_{\mathrm{total}}$}} \\[2pt]
\texttt{E1} (Dice)
& 0.012 & 0.021 & 0.025 & 0.017 & 0.006 & 0.004
& \texttt{E2} (Linear)
& 0.075 & 0.084 & 0.061 & 0.043 & 0.040 & 0.045 \\
\texttt{E3} (Markov)
& 0.037 & 0.071 & 0.029 & 0.012 & 0.007 & 0.005
&  &  &  &  &  &  \\
\midrule
\multicolumn{14}{l}{\textit{(b) Interaction proportion $\eta^2_{\mathrm{interaction}}$}} \\[2pt]
\texttt{E1} (Dice)
& 0.003 & 0.006 & 0.012 & 0.013 & 0.007 & 0.004
& \texttt{E2} (Linear)
& ${<}0.001$ & ${<}0.001$ & 0.007 & 0.040 & 0.081 & 0.117 \\
\texttt{E3} (Markov)
& ${<}0.001$ & 0.004 & 0.019 & 0.125 & 0.420 & 0.512
& & & & & & & \\
\bottomrule
\end{tabular}
\label{tab:p0_p1_combined}
\end{table}

Table~\ref{tab:p0_p1_combined}(a) confirms that at the last context position the residual variance ratio is small across all layers and all three experiments, so latents and last tokens together account for most of the hidden-state variance at large $t$. Figure~\ref{fig:task_vector_r2} (\Secref{app:additive-separability}) shows the full residual-ratio curves across context positions: the ratio is larger at early positions where limited context hinders task inference, and smaller in later layers where depth aids it.

\paragraph{Evaluating long-context decoupling.}
We assess the  additive separability as in Eq.~\ref{eq:long_context_stability_main} using ANOVA, splitting $\mathrm{SS}_{\mathrm{between}}$ into task, token, and interaction components (\Secref{app:additive-separability}).
Table~\ref{tab:p0_p1_combined}(b) reports the interaction proportion $\eta^2_{\mathrm{interaction}}$ at the last context position: it is small across most layers in all three experiments, confirming the additive model and validating \texttt{P1}.

\paragraph{Evaluating finite-context interpolation.}
We next test \texttt{P2} at finite context by fitting the interpolation model in Eq.~(\ref{eq:additive_interp_main}) with simplex-constrained coefficients $\beta_{t,k} \ge 0$, $\sum_k \beta_{t,k} = 1$ (Figure~\ref{fig:averaging_r2}). Across all three experiments, $R^2$ is close to $1$ in most layers and positions. Late layers of \texttt{E2} and \texttt{E3}, however, develop task-specific token encodings (large $\eta^2_{\mathrm{interaction}}$ in Table~\ref{tab:p0_p1_combined}(b) and Figure~\ref{fig:additive_sep} in~\Secref{app:additive-separability}), causing $R^2$ to degrade, for instance plateauing at $0.60$ and $0.52$ in the last two layers of \texttt{E3}. \Secref{app:ols-probe-e3} provides a finer-grained variance decomposition showing that the unique contribution of task identity peaks at the middle layer and decays as later layers devote capacity to output prediction, consistent with the observation that task vectors are most effectively extracted from middle layers~\citep{hendel2023context}. Overall, the interpolation model provides substantial support for \texttt{P2}.

\begin{figure}[t]
\centering
\includegraphics[width=0.9\linewidth]{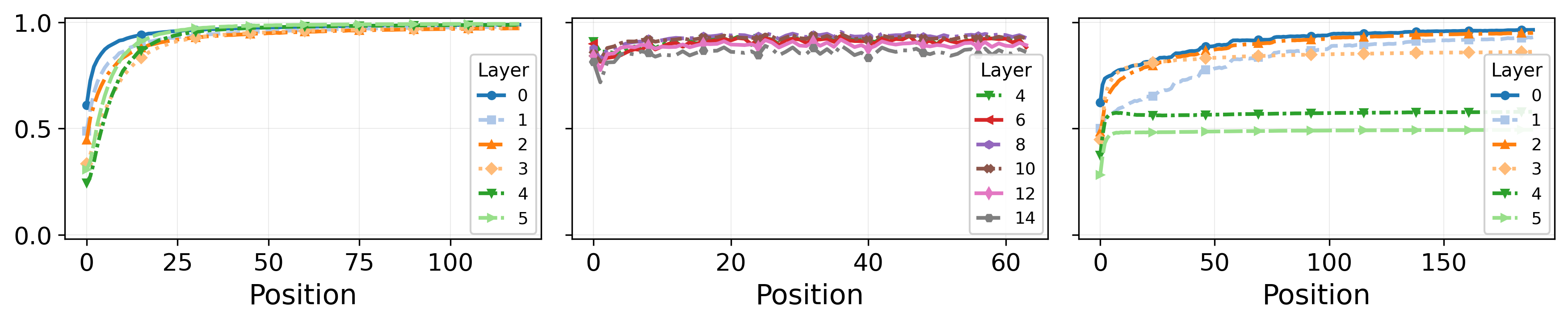}
\caption{\textbf{Finite-context interpolation approximately holds.} $R^2$ of the interpolation model (Eq.~\ref{eq:additive_interp_main}) across context positions and layers for \texttt{E1, E2, E3}. Values are close to $1$ except in late layers and early positions.
}
\label{fig:averaging_r2}
\end{figure}

\paragraph{Evaluating Bayesian posterior alignment.} We compare the ground-truth posterior $\alpha_{t,k}$ (\Secref{app:posterior-computation}) with the model's simplex-projected coefficients $\beta_{t,k}$ (Figure~\ref{fig:beta_alpha_trajectory}). Across all three tasks, $\bar\beta_{t,k}$ closely tracks $\alpha_{t,k}$ as context accumulates. \Secref{app:posterior-alignment-layers} confirms that the agreement strengthens in later layers and is not an artifact of the simplex projection.

\begin{figure}[t]
\centering
\includegraphics[width=0.9\linewidth]{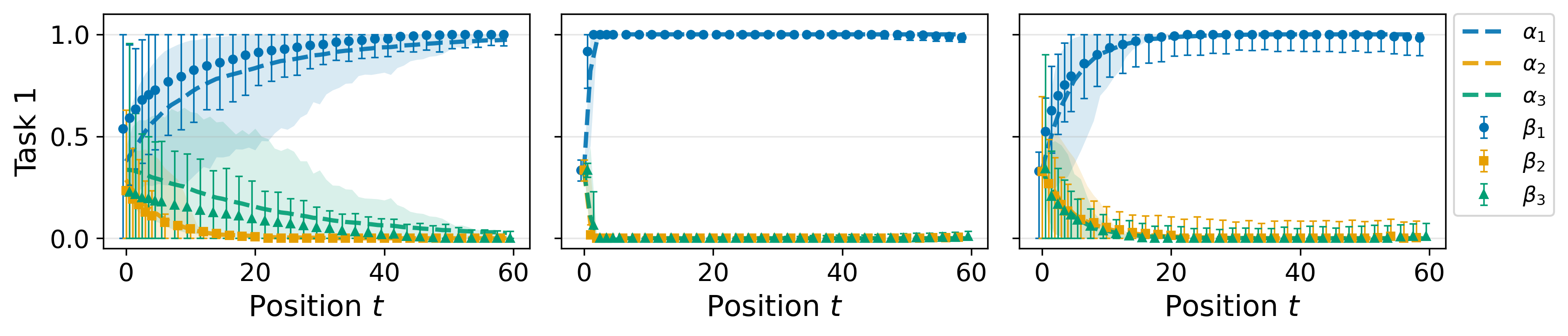}
\caption{\textbf{Bayesian posterior alignment.} Simplex-projected coefficients $\beta_{t,k}$ (markers, 10--90\textsuperscript{th} percentile error bars) vs.\ ground-truth posterior $\alpha_{t,k}$ (dashed lines, shaded bands) for \texttt{E1, E2, E3}.
}
\label{fig:beta_alpha_trajectory}
\end{figure}

\subsection{Intervention analysis of task vectors} 
\label{sec:task-vec:interv}
To go beyond observational analysis, we intervene on hidden representations to test whether task vectors causally control predictions. At each position, we replace the task-subspace component with a target interpolation
$
\sum_k \alpha_{t,k}^{*}\, \hat{\vtheta}_k,
$
where $\alpha_{t,k}^{*}$ is randomly drawn from the simplex (see~\Secref{app:alpha_injection}).  
This intervention steers the hidden representation to a chosen point in the task-vector simplex. 
We calculate the difference between steered model outputs and the theoretical predictions based on $\alpha_{t,k}^{*}$; we also include the unsteered model as a baseline.

Figure~\ref{fig:injection_simplex} shows that steering moves the model's outputs toward the corresponding mixture predictions: KL divergence drops from $0.11$ to $0.03$ in \texttt{E1} and from $0.23$ to $0.06$ in \texttt{E3}, and RMSE drops from $2.8$ to $1.1$ in \texttt{E2}. This demonstrates that the task-vector subspace provides a controllable representation through which the model interpolates between tasks. See~\Secref{app:alpha_injection} for details and baselines.

\begin{figure}[t]
\centering
\includegraphics[width=0.9\linewidth]{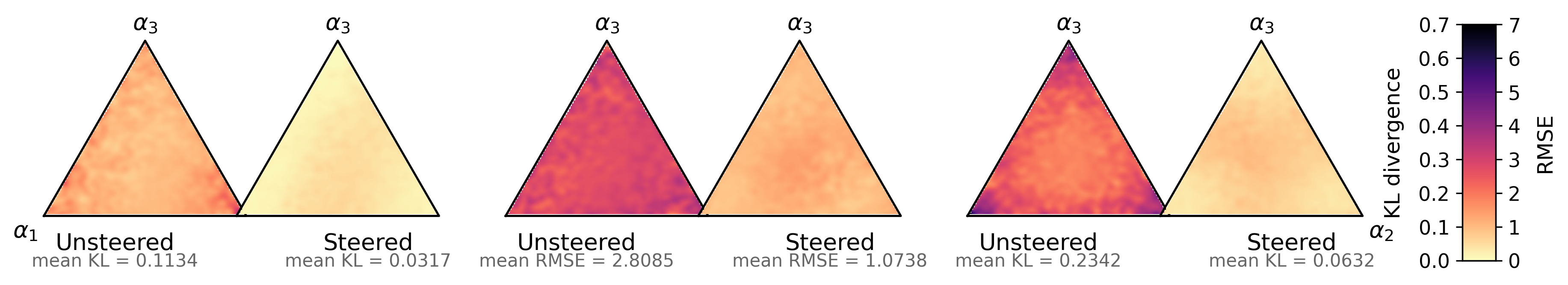}
\caption{\textbf{Causal intervention via the task-vector simplex.} Substituting $\beta_{t,k}$ with randomly drawn $\alpha_{t,k}^*$ steers model outputs to align with the corresponding mixture predictions, confirming causal effects.
}
\label{fig:injection_simplex}
\end{figure}

\section{Coexistence and  geometry of two inference modes}\label{sec:two-modes}

\label{sec:two-mode}

\Secref{sec:task-vectors} established that task-vector geometry implements Bayesian retrieval for memorized tasks. We now ask whether extrapolative task learning can coexist with retrieval, and what geometry supports it.
Sec.~\ref{sec:bayesian-mode} and~\ref{sec:generalization-mode} hinted that increasing the support of $\gZ_{\mathrm{train}}$ shifts the model toward \texttt{M2} mode, since the task-vector subspace alone cannot carry task inference over a high-dimensional latent space.
To investigate this, we consider the training mixture $\gZ_{\mathrm{train}} = \gZ_{\mathrm{major}} \cup \gZ_{\mathrm{minor}}$: 3 major tasks in $\gZ_{\mathrm{major}}$ receive prior 0.9 and $N_{\mathrm{minor}}=2^m$ minor tasks share 0.1, as described in \Secref{app:training-mixture}.
This split underlies the geometric analysis of \Secref{sec:representation_geometry}: the major tasks fix a task-vector subspace, and we test whether \texttt{M2} operates within it by projecting OOD representations onto this subspace. That geometric question is meaningful only if both modes arise, so we first verify behaviorally that sweeping the diversity exponent $m$ drives the predicted \texttt{M1}-to-\texttt{M2} transition.

\paragraph{Task diversity drives the model's inference mode.} We train transformers under varying $m$ and measure each model's predictive distribution against the ideal predictions from both \texttt{M1} and \texttt{M2} (Eq.~\ref{eq:bayes}), yielding two KL divergences under varying $m$ and training steps. 
Plotting $\log(\mathrm{KL}_{\mathrm{bayesian}}/\mathrm{KL}_{\mathrm{extrapolative}})$ in Figure~\ref{fig:kl_combined} reveals clear phase transitions across \texttt{E1}--\texttt{E3}: early in training and at low diversity the model is better described by \texttt{M1}, whereas later in training and at higher diversity it is better described by \texttt{M2}. The diagonal boundary indicates that greater task diversity induces the shift earlier in training; red regions also coincide with stronger OOD performance (Sec.~\ref{app:id-ood-loss}).
These results indicate that increasing task diversity drives a transition from Bayesian task retrieval toward extrapolative task learning, echoing the results in \cite{raventos2023pretraining, lu2025asymptotic, park2025competition}. 

\begin{figure}[t]
    \centering
    
    \includegraphics[width=0.9\linewidth]{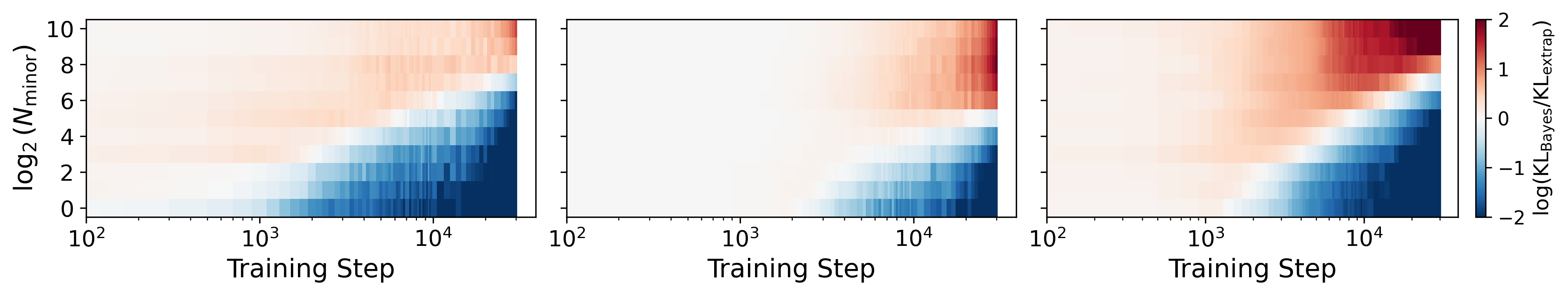}
    \caption{\textbf{Phase transition between two inference modes.} Panels (left to right): \texttt{E1}, \texttt{E2}, \texttt{E3}. 
    {\color{blue} Blue} (negative) indicates that the model is closer to Bayesian task retrieval \texttt{M1}, {\color{red} red} (positive) indicates closer to the extrapolative task learning \texttt{M2}, and white indicates no clear preference. Higher task diversity generally promotes \texttt{M2}. 
    } 
    \vspace{-10pt}
    \label{fig:kl_combined}
\end{figure}

\subsection{Coexisting near-orthogonal subspaces}
\label{sec:representation_geometry}

We hypothesize a stronger geometry for the \texttt{M2} inference mode than Theorem~\ref{thm:task-distance-lower} suggests.

\begin{tcolorbox}
\textbf{Near-orthogonal representation hypothesis}: a transformer internally encodes both inference modes via subspaces, with Bayesian task retrieval carried by the task-vector subspace and extrapolative task learning by a nearly orthogonal subspace. 
\end{tcolorbox}
To test this hypothesis, we train transformers for varying $m$ and extract task vectors as in \Secref{sec:task-vectors}. For averaged hidden states from OOD sequences, we compute the projection $R^2$ onto the major task subspace $\operatorname{col}(\hat{\mTheta})$: $R^2=0$ indicates orthogonality, $R^2=1$ full alignment.

\paragraph{Results.}
Figure~\ref{fig:ood_r2_combined} reports $R^2$ over training for varying $N_{\mathrm{minor}}$.
At low diversity (small $N_{\mathrm{minor}}$), $R^2$ remains high: the major task subspace still explains partial computation under OOD sequences.
As $m$ grows, $R^2$ drops consistently across \texttt{E1}--\texttt{E3}; at large diversity, OOD representations are nearly orthogonal to $\operatorname{col}(\hat{\mTheta})$.
Such separation is \emph{learned}: early in training all curves cluster at high $R^2$, and the drop emerges progressively only for large $N_{\mathrm{minor}}$ (see also Sec.~\ref{app:traj-simplex} for a complementary simplex-trajectory view).
Comparing Figures~\ref{fig:kl_combined} and~\ref{fig:ood_r2_combined} reveals a two-phase dynamic consistent across \texttt{E1}--\texttt{E3}: an initial undifferentiated phase (no KL preference between \texttt{M1}/\texttt{M2}, uniformly higher $R^2$) followed by diversity-driven specialization in which high-$m$ models transition toward \texttt{M2} and $R^2$ drops toward zero; the ID and OOD loss curves exhibit the same two-phase pattern (\Secref{app:id-ood-loss}). This staged pattern, consolidation of a common scaffold followed by mode specialization, echoes the two-phase dynamics observed in grokking~\citep{wang2024grokked} and progressive attention-head differentiation~\citep{wang2024differentiation}.

\paragraph{Hidden-state trajectories.}
Figure~\ref{fig:real_llm_traj} shows hidden-state simplex trajectories in a trained transformer (\texttt{E3}) and a pretrained LLM (Qwen2.5-7B, layer 20; \citep{qwen2025qwen25technicalreport}). For the latter, we use three ID word-function tasks (English$\to$French, Antonyms, Present$\to$Past) and three OOD tasks with unnatural mappings.
In both models, ID prompts converge toward the true task vertex as context accumulates, while OOD prompts yield hidden states nearly orthogonal to the task subspace (small $R^2$), suggesting analogous two-mode geometry in a pretrained LLM (details in \Secref{app:real_llm}). The LLM experiment provides a qualitative external validation of our two-mode framework, although a full characterization of representations on natural-language texts is beyond the scope of this paper.

\begin{figure}[t]
    \centering
   \includegraphics[width=0.9\linewidth]{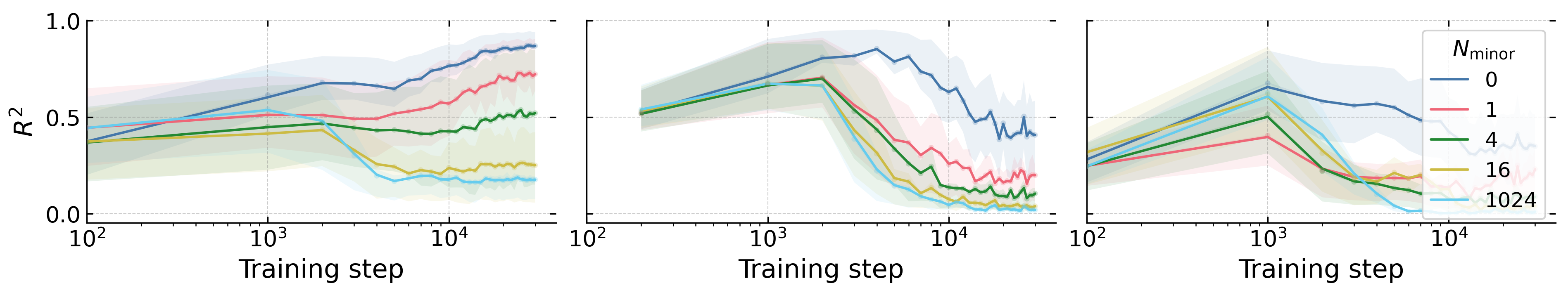}
   \caption{\textbf{Projection $R^2$ of OOD hidden states on major task subspace.} Panels: \texttt{E1}, \texttt{E2}, \texttt{E3}.
    As task diversity increases, projection $R^2$ becomes smaller, supporting the near-orthogonal representation hypothesis.}
    \label{fig:ood_r2_combined}
\end{figure}

\begin{figure}[t]
\centering
\begin{subfigure}{0.45\textwidth}
\centering
\includegraphics[width=\textwidth]{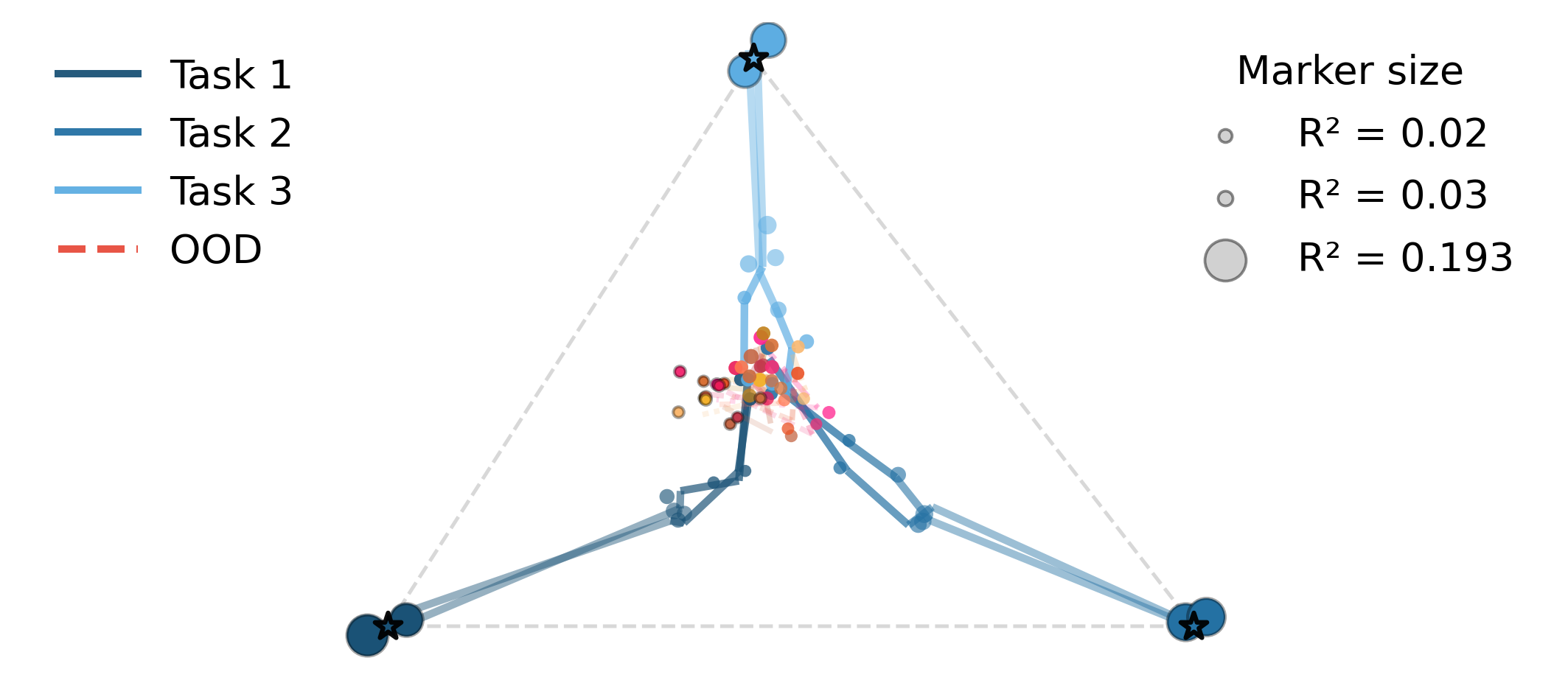}
\end{subfigure}
\hfill
\begin{subfigure}{0.45\textwidth}
\centering
\includegraphics[width=\textwidth]{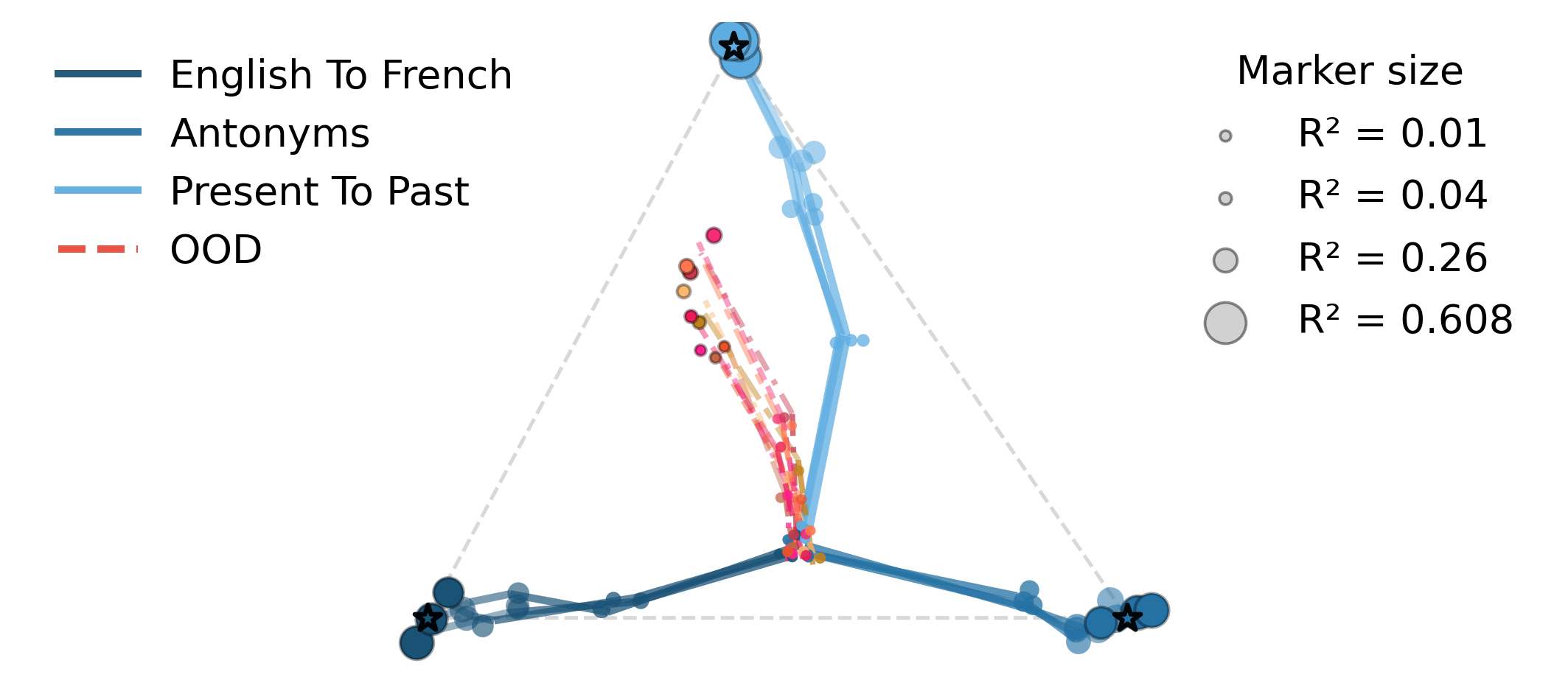}
\end{subfigure}
\caption{
\textbf{Task-subspace trajectories for \texttt{E3} (left) and Qwen2.5-7B (right).} Task vectors corresponding to 3 tasks form a triangle, and we project hidden states onto this task-vector subspace.
ID prompts: hidden state $\vh_t$ converges toward the true task vertex as $t$ grows. OOD prompts: hidden states remain near-orthogonal to the task subspace (small $R^2$) throughout the trajectory.
}
\label{fig:real_llm_traj}
\end{figure}

\subsection{Causal interventions on subspaces}
\label{sec:orth_ablation_main}

To validate the causal role of the two subspaces, we intervene on hidden representations at inference time and measure task-specific losses.
Let $\Delta\gL_{\mathrm{mode}} / g_{\mathrm{mode}}$ denote the fractional loss degradation for $\mathrm{mode} \in \{\mathrm{maj},\mathrm{ood}\}$, where $g_{\mathrm{mode}}$ is the total in-context loss reduction (so $\Delta\gL_{\mathrm{mode}} / g_{\mathrm{mode}} \approx 100\%$ means the intervention has erased all in-context learning for the corresponding mode).


\paragraph{Task-subspace intervention.}
With $\mP_{\rm task}$ projecting onto $\operatorname{col}(\hat{\mTheta})$,
$\vh\!\leftarrow\!\vh-\gamma\mP_{\rm task}\vh$ selectively degrades major-task performance while largely preserving OOD losses (Table~\ref{tab:interventions}, left), showing that Bayesian task retrieval depends on $\operatorname{col}(\hat{\mTheta})$. Layerwise results are in Sec.~\ref{app:intervention_perlayer}.


\paragraph{Orthogonal subspace intervention.}
Within $\operatorname{col}(\hat{\mTheta})^\perp$ we identify a low-rank subspace $\hat{\mV}_\mathrm{opt}$ via optimization on held-out minor-task data and apply $\vh \leftarrow \vh - \gamma \hat{\mV}_\mathrm{opt}\hat{\mV}_\mathrm{opt}^\top \vh$ ($\gamma>1$).
Table~\ref{tab:interventions} (right) shows large, selective loss increases for OOD tasks, approaching $100\%$ of the OOD in-context gain, while major-task loss and a random same-rank baseline remain near zero.
This double dissociation establishes a near-orthogonal mechanism for extrapolative task learning.
See Sec.~\ref{app:orth_ablation} for implementation; Sec.~\ref{app:orth_decomp} shows that $\hat{\mV}_\mathrm{opt}^\top\vh$ linearly encodes running context statistics.

\begin{table}[t]
\centering
\scriptsize
\renewcommand{\arraystretch}{0.95}
\setlength{\tabcolsep}{5pt}
\caption{%
\textbf{Causal intervention on subspaces confirms near-orthogonal representations.}
Relative loss degradation $\Delta\gL_{\mathrm{mode}} / g_{\mathrm{mode}} \times 100\%$, averaged across the middle-layer range where orthogonal separation is cleanest.
``Rand.''\ is the mean over $5$ random same-rank directions from $\operatorname{col}(\hat{\mTheta})^{\perp}$. See Sec.~\ref{app:orth_ablation} for the exact construction.
}
\label{tab:interventions}
\begin{tabular}{l l cc ccc}
\toprule
& &
\multicolumn{2}{c}{Task Subspace Interv.\ (\%)} &
\multicolumn{3}{c}{Orth.\ Subspace Interv. (\%)} \\
\cmidrule(lr){3-4}\cmidrule(lr){5-7}
Exp. & Layers &
\multicolumn{1}{c}{Maj.} & \multicolumn{1}{c}{OOD} & 
\multicolumn{1}{c}{Maj.} & \multicolumn{1}{c}{OOD} & 
\multicolumn{1}{c}{Rand.} \\
\midrule
\texttt{E1} & $1$--$4$  & 211.1 &    12.1 &    
4.0 &  95.8 & 
2.9 \\
\texttt{E3} & $2$--$4$  & 132.0 &    13.1 &    
2.1 &  99.6 & 
3.0 \\
\texttt{E2} & $9$--$12$ &  83.6 & $-$1.0  & 
3.0 & 85.4 & 
1.9 \\
\bottomrule
\end{tabular}
\end{table}

\section{Non-Markovian case: when hidden states don't summarize context}\label{sec:motivating-example}
\vspace{-2mm}
The previous sections rely on a Markovian premise. 
In contrast, here we present a non-Markovian counterexample to show why that premise matters---when long-range structure breaks local summarization, even commonly assumed property \texttt{P0} fails. This demonstrates that the validity of common heuristics depends sensitively on the underlying data-generating assumptions, highlighting the need for a principled framework and the value of the theoretical grounding it provides.

\begin{figure}[!t]
\centering
\includegraphics[width=0.9\linewidth]{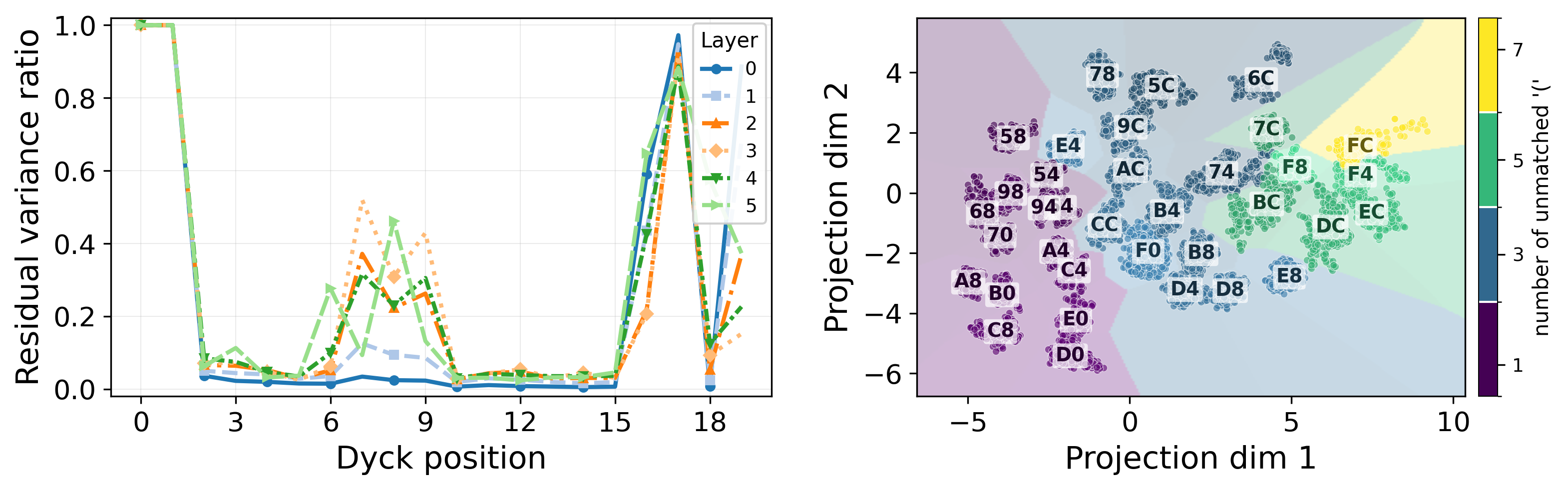}
\caption{\textbf{Planted Dyck language (\texttt{E4}) exhibits long-context dependence and prefix memorization.}
\textbf{\textit{Left}}: residual variance ratio of $\vh_t$ after conditioning on $z$ and the last 3 planted characters at each Dyck position. Unlike \texttt{E1}--\texttt{E3}, $z$ and a local window leave substantial residual variance, with spikes at positions requiring long-range bracket matching.
\textbf{\textit{Right}}: 2D projection of final-layer hidden states at Dyck prefix length $l=7$ ($35$ prefix classes). The well-separated clusters show that the model encodes the full Dyck prefix history. Cluster labels are compact hexadecimal codes for the corresponding bracket prefix; see detailed figure in Figure~\ref{fig:dyck_proj}. 
} 
\label{fig:dyck_combined}
\end{figure}


\paragraph{Planted Dyck language.} A Dyck language (Dyck-1) is the collection of all balanced bracket strings, underlying arithmetic expressions and programming code. For example, the Dyck language of length $6$ consists of $5$ strings:
$
((())), ~~ (()()), ~~(())(), ~~()(()), ~~ ()()().
$
We view latent $z$ as a string in the Dyck language of length $2L$. Conditioning on $z$, we sample a background sequence of $T$ tokens i.i.d.\ uniformly from $\{0,1,\ldots,V-1\}$. We then select each position independently with probability $\rho = 0.25$ and plant the next character of $z$, overwriting the background token. In other words, we plant $z$ into a pure-noise background of $T$ uniformly random symbols. As an example, with $V=2$ and $z = (()())$, one possible sequence is
$
0, 1, (, 0, (, 1, 1), 0, (, 1, 1, ), ).
$
In our experiments, the data-generating process uses $V=6$ and $L=10$, yielding $C_{10}=16796$ possible latents.

\paragraph{Long-context dependence.} We train a small autoregressive transformer to near-optimal loss and, on held-out sequences, measure the variance of $\vh_t$ conditioned on the latent $z$ and a local window of preceding tokens. In Figure~\ref{fig:dyck_combined} (left), the $x$-axis indexes \emph{Dyck position}~$j$ (the $j$-th planted bracket). The residual variance ratio (hidden-state variance \emph{not} explained by $z$ and the last~3 planted characters) spikes to nearly~$1$ at positions requiring long-range bracket matching and drops to near zero elsewhere. The 2D projection in Figure~\ref{fig:dyck_combined} (right) further shows that the model encodes the full Dyck prefix at each position rather than a simple summary such as the running open-minus-close count; see Appendix~\ref{app:dyck} for the prefix-probe construction and quantitative validation accuracies at every prefix length.

\section{Limitations and Future Work}

This work presents a modest attempt to bridge generalization and representation geometry in transformers. First, our analysis of \texttt{E1}--\texttt{E3} relies on the restrictive first-order Markov property on synthetic data; a more realistic setup may extend a single conditioning token $a$ to a local window. Second, our near-orthogonal subspace does not provide a mechanistic account of OOD generalization, which is complex in general. Third, our evaluation and analysis are theory-inspired, but not yet a formal theory of generalization or learning dynamics, which is left to future work.

\section*{Acknowledgments}

Y.Z.~is partially supported by NSF-DMS grant 2412052 and by a Coefficient Giving (formerly Open Philanthropy) grant. 
The authors wish to thank 
Robert Nowak, 
Joshua Cape,
Keith Levin,
Hanbaek Lyu,
Sebastien Roch,
Karl Rohe
for helpful discussions.



\bibliographystyle{plainnat}
\bibliography{refs}


\appendix

\paragraph{Organization of the appendix.}
We organize our appendix into five sections, following the sequential order of the main text. Specifically,
\begin{itemize}
    \item \Secref{app:experiment-details} provides the architecture and training hyperparameters used throughout our synthetic experiments (complementing the Setup in \Secref{sec:setup}).
    \item \Secref{app:theoretical-foundations} contains the theoretical material supporting \Secref{sec:math}: the chain of implications among Properties \texttt{P0}--\texttt{P3} (\Secref{app:property-proofs}), the formal proof of Theorem~\ref{thm:bayes_realization_from_hidden} on Bayesian realizability (\Secref{app:transformer-construct}), explicit formulas for the extrapolative mode \texttt{M2} (\Secref{app:approx-posterior}), and the proof of Theorem~\ref{thm:task-distance-lower} on the simplex packing obstruction (\Secref{sec:simplex-packing-obstruction}).
    \item \Secref{app:task-vec-computation} provides the computational and empirical details behind the evaluation of Properties \texttt{P0}--\texttt{P3} and the simplex intervention reported in \Secref{sec:task-vectors}.
    \item \Secref{app:two-modes-evidence} collects additional evidence and details for the two-mode picture in \Secref{sec:two-modes}, including the major/minor training mixture, ID/OOD loss dynamics, simplex trajectories of hidden states, the pretrained-LLM (Qwen2.5-7B) experiments, and per-layer breakdowns of the orthogonal-subspace causal interventions.
    \item \Secref{app:dyck-section} reports the prefix-probe analysis of the planted Dyck experiment (\texttt{E4}) discussed in \Secref{sec:motivating-example}.
\end{itemize}



\section{Experiment Details}\label{app:experiment-details}

\label{app:hyperparams}


\paragraph{Architecture and training hyperparameters.}
All four experiments use decoder-only transformers with pre-layer-norm, rotary positional encodings, and a $4\times$ MLP expansion ratio, trained autoregressively with AdamW. The biased dice (\texttt{E1}), mixture of Markov chains (\texttt{E3}), and planted Dyck (\texttt{E4}) share a $6$-layer, $128$-dimensional backbone, while the in-context linear regression (\texttt{E2}) uses a deeper $16$-layer model. Per-experiment values for depth, width, context length, optimizer, learning-rate schedule, and other training settings are listed in Tables~\ref{tab:app-arch} and~\ref{tab:app-training}; the resulting checkpoints are the ones reused in every subsequent empirical analysis.

\paragraph{Compute and runtime.}
\label{app:compute-runtime}
All training and analyses are run on a single NVIDIA RTX 3090 (24\,GB). A full $30$\,k--$32$\,k-step training run takes roughly $10$ minutes for \texttt{E1}, $13$ minutes for \texttt{E3} and \texttt{E4}, and $50$ minutes for \texttt{E2} (whose $16$-layer backbone dominates the compute). The two-mode and orthogonal-subspace analyses (\Secref{sec:two-modes} and \Secref{app:two-modes-evidence}) sweep the minor-pool size $N_{\mathrm{minor}} \in \{0, 1, 2, 4, \dots, 1024\}$ and retrain the model for each value, so a single-seed sweep takes about $2$ GPU-hours for \texttt{E1}, $2.5$ for \texttt{E3}, and $9$ for \texttt{E2}. Downstream analyses (task-vector extraction, projection $R^2$, simplex and orthogonal interventions, posterior alignment) all complete in well under one GPU-hour per checkpoint. 

\begin{table}[h]
\centering
\small
\caption{Architecture hyperparameters. All models are decoder-only
transformers with pre-layer-norm and $4\times$ MLP expansion.}
\begin{tabular}{lccccccc}
\toprule
 & Layers & Heads & $d$ & FF dim & $T$ & Pos.\ enc. & Batch \\
\midrule
Biased dice       & 6  & 2 & 128 & 512 & 128 & rotary & 128 \\
Markov chains     & 6  & 2 & 128 & 512 & 192 & rotary & 128 \\
Planted Dyck      & 6  & 2 & 128 & 512 & 192 & rotary & 128 \\
Linear regression & 16 & 2 & 128 & 512 & 128 & rotary & 256 \\
\bottomrule
\end{tabular}
\label{tab:app-arch}
\end{table}

\begin{table}[h]
\centering
\small
\caption{Training hyperparameters. All experiments use AdamW. The
triangle schedule linearly increases the learning rate during warmup and
then linearly decays it to the minimum; the cosine schedule uses cosine
annealing after warmup.}
\begin{tabular}{lccccccc}
\toprule
 & Steps & LR & Weight decay & Warmup & Schedule & Grad clip & Min LR \\
\midrule
Biased dice       & 30\,k & $4\!\times\!10^{-4}$ & $4\!\times\!10^{-4}$ & 15\,k & triangle & --- & $10^{-5}$ \\
Markov chains     & 30\,k & $4\!\times\!10^{-4}$ & $4\!\times\!10^{-4}$ & 15\,k & triangle & --- & $10^{-5}$ \\
Planted Dyck      & 32\,k & $2\!\times\!10^{-4}$ & $2\!\times\!10^{-4}$ & 8\,k  & cosine   & 0.4 & $2\!\times\!10^{-5}$ \\
Linear regression & 30\,k & $2\!\times\!10^{-4}$ & $1\!\times\!10^{-4}$ & 15\,k & triangle & --- & $10^{-5}$ \\
\bottomrule
\end{tabular}
\label{tab:app-training}
\end{table}

\section{Theoretical Foundations of the Property Framework}\label{app:theoretical-foundations}

This section collects the proofs and additional derivations that support the mathematical framework introduced in \Secref{sec:math}.
\begin{itemize}
    \item \Secref{app:property-proofs} establishes the chain of implications among Properties~\texttt{P0}--\texttt{P3}.
    \item \Secref{app:transformer-construct} provides the formal statement and proof of Theorem~\ref{thm:bayes_realization_from_hidden} on Bayesian realizability.
    \item \Secref{app:approx-posterior} derives the closed-form predictors that underlie the extrapolative task-learning mode~\texttt{M2}.
    \item \Secref{sec:simplex-packing-obstruction} proves the simplex packing obstruction (Theorem~\ref{thm:task-distance-lower}).
\end{itemize}

\subsection{Theoretical relationship between properties \texttt{P0}--\texttt{P3}} 
\label{app:property-proofs}

This subsection establishes the implication chain among the four geometric properties \texttt{P0}--\texttt{P3} introduced in \Secref{sec:task_retrieval}: 
\[
\texttt{P3} \;\Rightarrow\; \texttt{P2}, \qquad
\texttt{P3} \;\Rightarrow\; \texttt{P1}\ \text{in the long-context limit}, \qquad
\texttt{P1} \;\Rightarrow\; \texttt{P0}.
\]
This justifies treating the empirical evidence for \texttt{P3} reported in \Secref{sec:emp-task-vector} as evidence for the weaker properties \texttt{P0}--\texttt{P2} as well.

\begin{prop}[\texttt{P3} $\Rightarrow$ \texttt{P2}]
\label{prop:p3-implies-p2}
If \texttt{P3} holds, then \texttt{P2} holds.
\end{prop}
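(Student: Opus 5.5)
This implication is essentially definitional, so the plan is to unpack how \texttt{P3} is phrased and check that its content already contains every ingredient required by \texttt{P2}.

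As stated in \Secref{sec:task_retrieval}, \texttt{P3} is not a standalone condition. It asserts that the hidden state admits the interpolation representation of Eq.~\ref{eq:additive_interp_main} for coefficients $\beta_{t,k}$ that, in addition, approximate the Bayesian posterior as in Eq.~\ref{eq:posterior_alignment_main}. I would therefore read \texttt{P3} formally as the existence of coefficients $\{\beta_{t,k}\}$ satisfying three things:
\begin{itemize}
\item the approximation $\vh_t \approx \vmu_t + \sum_k \beta_{t,k}\vtheta_k + \vnu_{s_t}$;
\item $\beta_{t,k}\approx\alpha_{t,k}$;
\item the simplex constraints inherited from \texttt{P2}.
\end{itemize}
Discarding the second requirement leaves exactly the statement of \texttt{P2}, which proves the claim.

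Readers may instead prefer the stronger interpretation, in which \texttt{P3} only gives the additive form with weights that are approximately posterior but not necessarily exactly on the simplex. In that case I would construct the \texttt{P2} coefficients directly as $\beta'_{t,k} := \alpha_{t,k} = \P(z=k\mid s_{\le t})$. These are automatically nonnegative and sum to one, because they form a posterior distribution over $[K]$, so the simplex constraints of \texttt{P2} hold exactly. They also depend only on $s_1,\ldots,s_t$, as \texttt{P2} requires.

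The remaining step is to bound the approximation error when $\beta$ is replaced by $\alpha$. By the triangle inequality,
\[
\Bigl\|\vh_t - \vmu_t - \sum_k \alpha_{t,k}\vtheta_k - \vnu_{s_t}\Bigr\|_2
\le \Bigl\|\vh_t - \vmu_t - \sum_k \beta_{t,k}\vtheta_k - \vnu_{s_t}\Bigr\|_2 + \max_k\|\vtheta_k\|_2 \sum_k |\beta_{t,k}-\alpha_{t,k}|.
\]
The first term is small by the interpolation part of \texttt{P3}. The second term is small by the posterior alignment of \texttt{P3}, because $K$ is finite and the task vectors (limits in Definition~\ref{def:task-vec}) are fixed vectors with finite norm. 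Hence \texttt{P2} holds with the same notion of ``$\approx$'' up to a constant factor, and this constant depends only on $K$ and on $\max_k\|\vtheta_k\|_2$.

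The only real obstacle is making ``$\approx$'' precise. If it means an $\epsilon$-bound, whether in mean square or uniformly over positions, the triangle-inequality argument above carries over directly, possibly with a Cauchy--Schwarz step when working in mean square. The one point that needs checking is that the task vectors are uniformly bounded, and finiteness of $K$ guarantees this.
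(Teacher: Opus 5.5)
Your proof is correct. It rests on the same key fact as the paper's proof: the posterior weights $\alpha_{t,k}$ form a probability vector on $[K]$. You transfer that fact differently, though.

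The paper keeps the \texttt{P3} coefficients $\beta_{t,k}$. It asserts that because $\beta_{t,k}\approx\alpha_{t,k}$ and $\alpha_{t,\cdot}$ lies in the simplex, the $\beta_{t,k}$ are nonnegative and sum to one. Strictly speaking, $\beta_{t,k}\approx\alpha_{t,k}$ only gives approximate simplex membership, so that step is informal.

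You take two routes instead. First, you read the simplex constraint as already part of \texttt{P3}. This matches the main-text phrasing that \texttt{P3} ``strengthens \texttt{P2} by pinning down the same coefficients'', and it makes the claim purely definitional. Second, you use $\alpha_{t,k}$ itself as the \texttt{P2} coefficients, paying an error of at most $\max_k\|\vtheta_k\|_2\sum_k|\beta_{t,k}-\alpha_{t,k}|$ by the triangle inequality.

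The second route is what makes the implication rigorous when ``$\approx$'' has a quantitative meaning. The simplex constraint then holds exactly, and the only cost is the boundedness of finitely many fixed task vectors, which is automatic.

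One wording point: the reading in which \texttt{P3} does not impose the simplex constraint is the \emph{weaker} hypothesis, not the ``stronger interpretation''. It is the reading under which the implication is nontrivial.
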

\begin{proof}
By \texttt{P3}, the hidden state admits the decomposition $\vh_t \approx \vmu_t + \sum_{k \le K} \beta_{t,k}\,\vtheta_k + \vnu_{s_t}$ with coefficients $\beta_{t,k} \approx \alpha_{t,k} := \P(z=k \mid s_{\le t})$.
Because $\{\alpha_{t,k}\}_{k \le K}$ is a probability distribution, we have $\alpha_{t,k} \ge 0$ and $\sum_{k \le K} \alpha_{t,k} = 1$, so the coefficients $\beta_{t,k}$ are nonnegative and sum to one.
This is precisely the simplex constraint required by \texttt{P2}.
\end{proof}

\begin{prop}[\texttt{P3} $\Rightarrow$ \texttt{P1} in the long-context limit]
\label{prop:p3-implies-p1}
Suppose \texttt{P3} holds and the posterior concentrates on the true latent, i.e., $\alpha_{t,k} \to \delta_{k,z}$ almost surely as $t \to \infty$, where $z$ denotes the true latent.
Then \texttt{P1} holds in the limit $t \to \infty$.
\end{prop}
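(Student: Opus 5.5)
The plan is to plug posterior concentration into the finite-context decomposition of \texttt{P2}/\texttt{P3} and observe that the mixture term collapses onto the true task vector. Since \texttt{P3} is stated with ``$\approx$'', I first make it quantitative in the same mean-square sense as \texttt{P1}. Define the approximation error
\[
\vr_t := \vh_t - \Bigl(\vmu_t + \sum_{k\le K}\beta_{t,k}\vtheta_k + \vnu_{s_t}\Bigr),
\]
and read \texttt{P3} as $\E\|\vr_t\|_2^2 \to 0$ and $\E\sum_k|\beta_{t,k}-\alpha_{t,k}|^2 \to 0$ as $t\to\infty$. This is the natural long-context reading of ``approximately'', and it matches the error criterion in Eq.~\ref{eq:long_context_stability_main}.

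Next I write the \texttt{P1} residual by adding and subtracting the mixture term:
\[
\vh_t - (\vmu_t + \vtheta_z + \vnu_{s_t}) = \vr_t + \sum_{k\le K}(\beta_{t,k}-\alpha_{t,k})\vtheta_k + \sum_{k\le K}(\alpha_{t,k}-\delta_{k,z})\vtheta_k .
\]
By $(a+b+c)^2\le 3(a^2+b^2+c^2)$ and Cauchy--Schwarz, with $M:=\max_k\|\vtheta_k\|_2$ finite because $K$ is finite and the task vectors exist by Definition~\ref{def:task-vec}, the expected squared norm is at most
\[
3\,\E\|\vr_t\|^2 + 3KM^2\,\E\sum_k|\beta_{t,k}-\alpha_{t,k}|^2 + 3KM^2\,\E\sum_k|\alpha_{t,k}-\delta_{k,z}|^2 .
\]
The first two terms vanish by \texttt{P3}.

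The third term is the main step. Almost-sure concentration alone does not give convergence in $L^2$, but here $|\alpha_{t,k}-\delta_{k,z}|\le 1$, so the integrand is bounded by $K$. Dominated convergence therefore turns $\alpha_{t,k}\to\delta_{k,z}$ a.s.\ into $\E\sum_k|\alpha_{t,k}-\delta_{k,z}|^2\to 0$. This settles \texttt{P1}, with the same $\vtheta_k$ and $\vnu_a$ as in \texttt{P3}.

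The remaining subtlety is interpretive. The $\vtheta_k$ appearing in \texttt{P3} should coincide with the task vectors of Definition~\ref{def:task-vec}, and the $\vnu_a$ should be fixed vectors independent of $t$. For consistency I would check this by conditioning the \texttt{P1} approximation on $z=k$. This gives $\E[\vh_t\mid z=k]-\vmu_t \to \vtheta_k + \lim_t \E[\vnu_{s_t}\mid z=k] - \lim_t \E[\vnu_{s_t}]$. So the \texttt{P3} vectors equal the definitional ones up to a token-marginal shift, and that shift can be absorbed by recentering the $\vnu_a$ when the token marginals do not depend on $z$. In general I would simply state that \texttt{P1} holds with the vectors supplied by \texttt{P3}, which is all the proposition needs.
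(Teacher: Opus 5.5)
Your proposal is correct and follows the same route as the paper: substitute $\alpha_{t,k}\to\delta_{k,z}$ into the \texttt{P3} decomposition so that the mixture term collapses to $\vtheta_z$, and carry over the same task and token vectors. The paper does this in one line with an informal $\approx$; your mean-square reading of \texttt{P3}, the three-term Cauchy--Schwarz split, and the dominated-convergence step (using $|\alpha_{t,k}-\delta_{k,z}|\le 1$) supply the bookkeeping needed to reach \texttt{P1}'s $L^2$ criterion, which the paper leaves implicit.
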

\begin{proof}
Substituting $\alpha_{t,k} \to \delta_{k,z}$ into the \texttt{P3} decomposition yields
\[
\vh_t \;\approx\; \vmu_t + \sum_{k \le K} \alpha_{t,k}\,\vtheta_k + \vnu_{s_t}
\;\longrightarrow\;
\vmu_t + \vtheta_z + \vnu_{s_t},
\]
recovering exactly the additive structure required by \texttt{P1}.
The hypothesis $\alpha_{t,k}\to\delta_{k,z}$ a.s.\ holds in our experiments by the ergodic theorem for finite-state Markov chains (\texttt{E1}, \texttt{E3}) and an analogous argument for \texttt{E2} \citep[e.g.,][Ch.~10]{vandervaart1998asymptotic}.
\end{proof}

\begin{prop}[\texttt{P1} $\Rightarrow$ \texttt{P0}]
\label{prop:p1-implies-p0}
If \texttt{P1} holds, then \texttt{P0} holds.
\end{prop}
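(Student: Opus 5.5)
The strategy is to use the conditional-mean characterization: among all (measurable) functions of $(z,s_t)$, the cell mean $\vmu_{t,z,s_t}=\E[\vh_t\mid z,s_t]$ minimizes mean-squared error. Write
\[
\vc_t := \vmu_t + \vtheta_z + \vnu_{s_t}.
\]
This is a deterministic function of $(z,s_t)$, because $\vmu_t$ is a constant and $\vtheta_z$, $\vnu_{s_t}$ are indexed only by $z$ and $s_t$.

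The first step is the orthogonal (Pythagorean) decomposition. Condition on $(z,s_t)$. The residual $\veps_t=\vh_t-\vmu_{t,z,s_t}$ has zero conditional mean, and $\vmu_{t,z,s_t}-\vc_t$ is $(z,s_t)$-measurable. The cross term therefore vanishes by the tower property, giving
\[
\E\|\vh_t-\vc_t\|_2^2=\E\|\veps_t\|_2^2+\E\|\vmu_{t,z,s_t}-\vc_t\|_2^2 .
\]
Both terms on the right are nonnegative. Hence
\[
\E\|\veps_t\|_2^2\le \E\|\vh_t-\vc_t\|_2^2 .
\]

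The second step invokes \texttt{P1}. Eq.~\ref{eq:long_context_stability_main} says exactly that the right-hand side tends to $0$ as $t\to\infty$. It follows that $\E\|\veps_t\|^2\to 0$, which is \texttt{P0}.

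As a by-product, $\E\|\vmu_{t,z,s_t}-\vc_t\|^2\to0$. This says the cell means themselves become asymptotically additive, a fact worth recording for later use.

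The only step needing care is justifying the Pythagorean identity, namely that the cross term $\E\langle\veps_t,\vmu_{t,z,s_t}-\vc_t\rangle$ vanishes. This requires $\E\|\vh_t\|^2<\infty$ so that all expectations are finite. I will state this as a standing assumption: it holds for hidden states of a layer-normed transformer on finite vocabularies, and for \texttt{E2} under Gaussian inputs with bounded-growth layers. With square-integrability in hand, the cross term vanishes by conditioning on $(z,s_t)$ and pulling out the measurable factor. No rate or structural assumption beyond \texttt{P1} is needed.
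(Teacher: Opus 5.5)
Your proof is correct and is the same argument as the paper's. The paper also uses that the cell mean $\vmu_{t,z,s_t}$ minimizes mean-squared error over functions of $(z,s_t)$, and it takes $g(z,s_t)=\vmu_t+\vtheta_z+\vnu_{s_t}$ to bound $\E\|\veps_t\|^2$ by the \texttt{P1} residual. Your explicit Pythagorean identity and square-integrability assumption simply make precise what the paper uses implicitly.
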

\begin{proof}
Write $\veps_t = \vh_t - \vmu_{t,z,s_t}$, where $\vmu_{t,z,s_t} = \E[\vh_t \mid z, s_t]$ is the cell mean.
Because the conditional expectation minimizes mean squared error over all functions of $(z, s_t)$, we have
\[
\E\bigl\|\veps_t\bigr\|^2
= \E\bigl\|\vh_t - \vmu_{t,z,s_t}\bigr\|^2
\;\le\; \E\bigl\|\vh_t - g(z,s_t)\bigr\|^2
\]
for every measurable $g\colon [K] \times \gV \to \R^d$.
Taking $g(z,s_t) = \vmu_t + \vtheta_z + \vnu_{s_t}$, which depends on $(z,s_t)$ alone, gives
\[
\E\bigl\|\veps_t\bigr\|^2
\;\le\; \E\bigl\|\vh_t - (\vmu_t + \vtheta_z + \vnu_{s_t})\bigr\|^2
\;\longrightarrow\; 0
\qquad \text{as } t \to \infty,
\]
where the right-hand side vanishes by the \texttt{P1} guarantee (Eq.~\ref{eq:long_context_stability_main}).
\end{proof}

\subsection{Proof of Theorem~\ref{thm:bayes_realization_from_hidden} in Section~\ref{sec:bayesian-mode} 
}
\label{app:transformer-construct}

This subsection gives the formal statement and proof of Theorem~\ref{thm:bayes_realization_from_hidden} from \Secref{sec:bayesian-mode}. It shows that posterior-aligned representations (properties \texttt{P2}--\texttt{P3}) are sufficient for a transformer to approximate the Bayesian predictive distribution under the task-retrieval mode \texttt{M1}.
The high-level idea is that, once the residual stream encodes the posterior weights $\alpha_{t,k}$ as the coefficients in front of the task vectors $\vtheta_k$, the remaining Bayesian computation
\[
\vq_{t+1}
\;=\;
\sum_{k=1}^K \alpha_{t,k}\,\vp_{k,s_t}
\]
reduces to two operations, each realizable by one transformer block: (i) a finite lookup that mixes the per-pair conditionals $\vp_{k,a}$ into the residual stream, followed by (ii) an approximate coordinatewise logarithm whose softmax recovers $\vq_{t+1}$.
The output unembedding is also part of the construction below, so the theorem does not assume any property of a pre-existing unembedding matrix.
The full construction is given in \Secref{sec:thm-statement-proof} and relies on two technical lemmas stated and proved in \Secref{sec:technical-lemmas}.

\subsubsection{Detailed Statement of Theorem~\ref{thm:bayes_realization_from_hidden} and Proof}
\label{sec:thm-statement-proof}

We now state the formal version of Theorem~\ref{thm:bayes_realization_from_hidden}, an \emph{expressivity} statement.
Posterior-aligned residuals (\texttt{P2}--\texttt{P3}) together with the structural conditions (i)--(iv) below suffice for two appended transformer blocks to approximate the Bayesian predictive $\vq_{t+1}$ within $\ell_\infty$ tolerance $\delta$.
The lookup MLP in Block~1 has hidden width $KV$.
Block~2 has hidden width $m_{\log}(\delta,\varepsilon,V)+2V$, where $m_{\log}$ is the width of a coordinatewise ReLU spline approximation to $\log$ on $[\varepsilon,1]$ at the accuracy used below.
These are constructive upper bounds; we do not claim width optimality, nor do we address whether training finds this configuration or whether conditions (i)--(iv) hold for arbitrary trained models.

\begin{thm}[Internal geometry mirrors Bayesian inference; formal version of Theorem~\ref{thm:bayes_realization_from_hidden}]
Consider a LayerNorm-free transformer with residual-stream dimension~$d$, vocabulary $\gV$ of size $V$ (identified with $[V]$), and training latent set $\gZ_{\mathrm{train}}=[K]$.
Suppose the following conditions hold.
\begin{enumerate}[label=(\roman*)]
\item \textbf{Uniform posterior-aligned representation.}
At some layer~$\ell$, fix a set of positions $\mathcal I$.
For every $t\in\mathcal I$, the residual stream satisfies
\[
\vh_t
=
\vmu_t+\sum_{k=1}^K\alpha_{t,k}\vtheta_k+\vnu_{s_t},
\qquad
\alpha_{t,k}=\P(z=k\mid s_{\le t}),
\]
where $\{\vtheta_k\}_{k=1}^K\subset\R^d$ are task vectors and $\{\vnu_a\}_{a\in\gV}\subset\R^d$ are token-encoding vectors (cf.\ properties \texttt{P2}--\texttt{P3} in \Secref{sec:task_retrieval}).

\item \textbf{Shared centering map.}
Let
\[
\mathcal M_{\mathcal I}:=\operatorname{span}\{\vmu_t:t\in\mathcal I\},
\qquad
\mathcal U:=\operatorname{span}\bigl(\{\vtheta_k\}_{k\le K}\cup\{\vnu_a\}_{a\in\gV}\bigr).
\]
Assume $\mathcal M_{\mathcal I}\cap\mathcal U=\{0\}$.
Equivalently, there exists a fixed linear map $\mC:\R^d\to\R^d$ such that
\[
\mC\vmu_t=0,
\qquad
\mC\vtheta_k=\vtheta_k,
\qquad
\mC\vnu_a=\vnu_a
\]
for all $t\in\mathcal I$, $k\le K$, and $a\in\gV$.
We call such a map a shared centering map.

\item \textbf{Linear independence.}
The family
\[
\{\vtheta_1,\ldots,\vtheta_K\}\cup\{\vnu_a:a\in\gV\}
\]
is linearly independent in $\R^d$; in particular $d\ge K+V$.

\item \textbf{Bounded-away transition probabilities.}
There exists $\varepsilon\in(0,1/V)$ such that
\[
\P(s_{t+1}=b\mid z=k,\,s_t=a)\ge\varepsilon
\]
for all $t\in\mathcal I$, $b\in\gV$, $k\in[K]$, and $a\in\gV$.
\end{enumerate}

Then for every $\delta>0$, there exist an output unembedding $\mW_{\mathrm{U}}\in\R^{V\times d}$ and two additional transformer blocks, all shared across $t\in\mathcal I$, with the blocks appended after layer~$\ell$, such that for every $t\in\mathcal I$ the model's next-token prediction $\hat{\vp}_{t+1}\in\R^V$ satisfies
\[
\|\hat{\vp}_{t+1}-\vq_{t+1}\|_\infty<\delta,
\]
where
\[
\vq_{t+1}:=\sum_{k=1}^K\alpha_{t,k}\vp_{k,s_t},
\quad
\vp_{k,a}:=\bigl(\P(s_{t+1}=b\mid z=k,\,s_t=a)\bigr)_{b\in\gV}.
\]
Moreover, the first appended MLP has hidden width $KV$ and the second has hidden width $m_{\log}(\delta,\varepsilon,V)+2V$.
\end{thm}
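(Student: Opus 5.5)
We build the model explicitly in three pieces: Block~1 (attention switched off, MLP does a lookup), Block~2 (MLP approximates a coordinatewise logarithm), and an unembedding that reads the logarithms so the softmax returns $\vq_{t+1}$. Setting all attention value/output matrices to zero makes each block act positionwise as $\vh\mapsto\vh+\mathrm{MLP}(\vh)$. Positionwise action is legitimate because condition (i) already encodes everything we need, namely $\alpha_{t,\cdot}$ and $s_t$, in the single residual vector $\vh_t$.

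\textbf{Block~1 (bilinear lookup).} By (iii) there are linear functionals $f_k,g_a$ with $f_k(\vtheta_j)=\delta_{kj}$, $f_k(\vnu_b)=0$, $g_a(\vnu_b)=\delta_{ab}$, $g_a(\vtheta_j)=0$. By (ii), composing with $\mC$ kills $\vmu_t$, so $f_k(\mC\vh_t)=\alpha_{t,k}$ and $g_a(\mC\vh_t)=\mathbf 1[s_t=a]$. For each pair $(k,a)$ we use one hidden ReLU unit
\[
u_{k,a}=\mathrm{ReLU}\bigl(f_k(\mC\vh_t)+g_a(\mC\vh_t)-1\bigr).
\]
Since $\alpha_{t,k}\in[0,1]$ and the token indicator is $0/1$, this equals $\alpha_{t,k}\mathbf 1[s_t=a]$ exactly. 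This gives hidden width $KV$. The output matrix writes $\sum_{k,a}u_{k,a}\vp_{k,a}$, which equals $\vq_{t+1}$, into $V$ fresh coordinates, i.e.\ a $V$-dimensional subspace $\mathcal W$. If $d$ is too small for $\mathcal W$ to lie outside $\mathcal U+\mathcal M_{\mathcal I}$, we enlarge the residual stream as the construction allows. Denote by $\pi_{\mathcal W}$ the linear readout of these coordinates; it must also vanish on $\mathcal M_{\mathcal I}$ and $\mathcal U$, and we arrange this by choosing $\mathcal W$ in a complement of $\mathcal U+\mathcal M_{\mathcal I}$.

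\textbf{Block~2 (logarithm).} Condition (iv) and the convexity of $\vq_{t+1}$ give $q_{t+1,b}\in[\varepsilon,1]$ for every $b$. Take a piecewise-linear interpolant $\ell$ of $\log$ on $[\varepsilon,1]$ with $\sup|\ell-\log|\le\eta$. It can be written as a sum of ReLUs in the scalar input, and we share the knots across coordinates. Applying it to each $\pi_{\mathcal W}$ coordinate writes $\ell(q_{t+1,b})$ into another fresh $V$-dimensional block. The hidden width is $m_{\log}$ for the spline units plus $2V$ units. The extra $2V$ units implement $x=\mathrm{ReLU}(x)-\mathrm{ReLU}(-x)$, which lets us carry the linear/affine term or erase the old $\mathcal W$ copy; this accounts for the stated width. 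The unembedding $\mW_{\mathrm U}$ reads only this log block, so the logits are $\ell(q_{t+1,b})$. Everything else, including $\vmu_t$ and any leftover components, is annihilated by construction, so the unembedding is shared across $t\in\mathcal I$.

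\textbf{Error control.} The logits are $\log q_b+e_b$ with $|e_b|\le\eta$. Because $\sum_b q_b=1$,
\[
\hat p_b=\frac{q_be^{e_b}}{\sum_c q_ce^{e_c}}\in\bigl[q_be^{-2\eta},\,q_be^{2\eta}\bigr],
\]
so $|\hat p_b-q_b|\le e^{2\eta}-1$. Choosing $\eta$ with $e^{2\eta}-1<\delta$ proves the bound, and $m_{\log}(\delta,\varepsilon,V)$ is the number of knots needed for accuracy $\eta$ on $[\varepsilon,1]$.

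The main obstacle is the bookkeeping that makes every readout functional exactly blind to the position-dependent $\vmu_t$ and to the cross-talk between subspaces; this is exactly what (ii) and (iii) are for. A secondary obstacle is the exact bilinear product: the ReLU trick works only because one factor is binary and the other lies in $[0,1]$, so that boundedness must be verified explicitly.
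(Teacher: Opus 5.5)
Your Block~1 lookup (dual functionals composed with $\mC$, one unit $\mathrm{ReLU}(\alpha_{t,k}+\mathbf 1[s_t=a]-1)$ per pair, exact because $\alpha_{t,k}\in[0,1]$), the use of (iv) to put $\vq_{t+1}$ in $[\varepsilon,1]^V$, and the softmax error bound all match the paper. The genuine gap is the dimension budget. You write $\vq_{t+1}$, and then its logarithm, into fresh coordinates complementary to $\mathcal U+\mathcal M_{\mathcal I}$, and you propose to enlarge the residual stream when there is no room. The statement does not permit this: $d$ is fixed, the appended blocks act on $\R^d$, and $\mW_{\mathrm U}\in\R^{V\times d}$. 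Hypotheses (ii)--(iii) only guarantee $d\ge K+V+\dim\mathcal M_{\mathcal I}$. This can hold with equality, for instance when $\mathcal I$ is large and the $\vmu_t$ are generic. In that case no $V$-dimensional $\mathcal W$ with $\mathcal W\cap(\mathcal U+\mathcal M_{\mathcal I})=\{0\}$ exists, so your readout $\pi_{\mathcal W}$ cannot be defined. Padding the stream changes the given transformer. As written, your argument therefore only covers models with at least $V$ spare dimensions (more if the log block is also fresh).

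The missing idea is to \emph{overwrite} rather than append, and the convex-linearity of your lookup makes this free. Since $\vu_t:=\mC\vh_t=\sum_k\alpha_{t,k}(\vtheta_k+\vnu_{s_t})$, give unit $(k,a)$ the output vector $\vf(k,a):=\mE\vp_{k,a}-\vtheta_k-\vnu_a$, where $\mE:=(\vnu_1,\ldots,\vnu_V)$. The MLP then outputs $\mE\vq_{t+1}-\vu_t$, and the residual becomes $\vmu_t+\mE\vq_{t+1}$. Because $\operatorname{col}(\mE)\subseteq\mathcal U$, condition (ii) already gives $\operatorname{col}(\mE)\cap\mathcal M_{\mathcal I}=\{0\}$. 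Together with (iii), this yields $\mW_{\mathrm U}\in\R^{V\times d}$ with $\mW_{\mathrm U}\vmu_t=0$ for all $t\in\mathcal I$ and $\mW_{\mathrm U}\mE=\mI_V$. In Block~2, overwrite again with $F(\vy)=\mE\,G(\mW_{\mathrm U}\vy)-\mE\mW_{\mathrm U}\vy$, where $G$ is the coordinatewise log spline. The first term costs $m_{\log}$ units. The $2V$ extra units are exactly what is needed to represent the rank-$\le V$ linear map $-\mE\mW_{\mathrm U}$ via $x=\sigma(x)-\sigma(-x)$. The logits are then $\mW_{\mathrm U}(\vy+F(\vy))=\vq_{t+1}+G(\vq_{t+1})-\vq_{t+1}=G(\vq_{t+1})$. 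This is how the paper achieves the stated widths $KV$ and $m_{\log}+2V$ inside the given $d$. With this change, the rest of your argument, including the error control, goes through unchanged.
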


\begin{proof}
We first fix a gauge for the construction.
Although task and token effects are defined earlier in a centered gauge, the lookup construction is most convenient in an uncentered gauge.
In particular, we do not impose $\sum_{k=1}^K\vtheta_k=0$ or $\sum_{a\in\gV}\vnu_a=0$, since either constraint would create a nontrivial linear dependence and conflict with condition~(iii).
All structural assumptions in the theorem, including the shared-centering map $\mC$, are understood in this uncentered gauge.
This is only a change of coordinates at the level of represented states: setting
\[
\bar{\vtheta}:=K^{-1}\sum_{k=1}^K\vtheta_k,
\qquad
\bar{\vnu}:=V^{-1}\sum_{a\in\gV}\vnu_a,
\]
and replacing
\[
(\vmu_t,\vtheta_k,\vnu_a)
\mapsto
(\vmu_t+\bar{\vtheta}+\bar{\vnu},\,\vtheta_k-\bar{\vtheta},\,\vnu_a-\bar{\vnu})
\]
leaves $\vmu_t+\sum_k\alpha_k\vtheta_k+\vnu_a$ unchanged for every $\valpha\in\Delta^{K-1}$ and token $a$.

We append two transformer blocks after layer~$\ell$.
In both blocks, set the attention output projection to zero, so each attention sublayer contributes only the residual identity.
It remains to construct the two MLP sublayers and the output unembedding.

Let $\vx_t:=\vh_t$ be the input to the first appended block and set
\[
\vu_t:=\mC\vx_t.
\]
By conditions~(i)--(ii),
\[
\vu_t
=
\sum_{k=1}^K\alpha_{t,k}\vtheta_k+\vnu_{s_t}
=
\sum_{k=1}^K\alpha_{t,k}(\vtheta_k+\vnu_{s_t}).
\]
Write
\[
\vu_{k,a}:=\vtheta_k+\vnu_a.
\]
Also define the token-embedding matrix
\[
\mE:=(\vnu_1,\ldots,\vnu_V)\in\R^{d\times V}.
\]
By condition~(iii), $\mE$ has full column rank.

\paragraph{Step 1: Build the Bayesian mixture while keeping $\vmu_t$.}
Define
\[
\vf(k,a):=\mE\vp_{k,a}-\vu_{k,a}
=\mE\vp_{k,a}-\vtheta_k-\vnu_a.
\]
By Lemma~\ref{lem:mlp_lookup_convex}, there is a width-$KV$ two-layer ReLU MLP $\texttt{MLP}_{\mathrm{lookup}}:\R^d\to\R^d$ such that, for $\vu_t=\sum_k\alpha_{t,k}\vu_{k,s_t}$,
\[
\texttt{MLP}_{\mathrm{lookup}}(\vu_t)
=
\sum_{k=1}^K\alpha_{t,k}\vf(k,s_t)
=
\mE\sum_{k=1}^K\alpha_{t,k}\vp_{k,s_t}-\vu_t
=
\mE\vq_{t+1}-\vu_t.
\]
Composing the first-layer affine forms with $\mC$, the map $\vx_t\mapsto\texttt{MLP}_{\mathrm{lookup}}(\mC\vx_t)$ is still a two-layer ReLU MLP with hidden width $KV$.
Therefore the first residual block outputs
\[
\vy_t
:=
\vx_t+\texttt{MLP}_{\mathrm{lookup}}(\mC\vx_t)
=
\vmu_t+\vu_t+(\mE\vq_{t+1}-\vu_t)
=
\vmu_t+\mE\vq_{t+1}.
\]
Thus Block~1 does not cancel the nuisance component $\vmu_t$.
It carries $\vmu_t$ forward and only converts the task/token component into the embedded Bayesian predictive $\mE\vq_{t+1}$.

\paragraph{Step 2: Read out $\vq_{t+1}$, cancel only at the logits, and apply softmax.}
Since $\operatorname{col}(\mE)=\operatorname{span}\{\vnu_a:a\in\gV\}\subset\mathcal U$ and $\mathcal M_{\mathcal I}\cap\mathcal U=\{0\}$, we have
\[
\mathcal M_{\mathcal I}\cap\operatorname{col}(\mE)=\{0\}.
\]
By condition~(iv), $\vq_{t+1}\in\Delta_V^\varepsilon$.
Applying Lemma~\ref{lem:approx_recover_prob_subspace} with $\mathcal M=\mathcal M_{\mathcal I}$ gives a constructed unembedding $\mW_{\mathrm{U}}\in\R^{V\times d}$ and a width-$m_{\log}(\delta,\varepsilon,V)+2V$ two-layer ReLU MLP $\texttt{MLP}_{\mathrm{out}}:\R^d\to\R^d$ such that, for $\vy_t=\vmu_t+\mE\vq_{t+1}$,
\[
\left\|
\softmax\!\left(\mW_{\mathrm{U}}\bigl(\vy_t+\texttt{MLP}_{\mathrm{out}}(\vy_t)\bigr)\right)
-
\vq_{t+1}
\right\|_\infty
<\delta.
\]
Using this MLP as the second block's MLP sublayer proves the claim.
All maps used above, namely $\mC$, $\mE$, $\texttt{MLP}_{\mathrm{lookup}}$, $\mW_{\mathrm{U}}$, and $\texttt{MLP}_{\mathrm{out}}$, are fixed across $t\in\mathcal I$, so the construction is uniform over positions.
\end{proof}

\subsubsection{Technical Lemmas and Proofs}
\label{sec:technical-lemmas}

This subsection states and proves the two technical lemmas invoked in the construction of \Secref{sec:thm-statement-proof}: Lemma~\ref{lem:mlp_lookup_convex} is used in the first appended block, and Lemma~\ref{lem:approx_recover_prob_subspace} is used in the second.
Both isolate a single capability of a two-layer ReLU MLP and are stated in self-contained form, independent of the transformer setting.

\begin{lemma}[Width-$KV$ lookup and convex-linearity in the latent index]
\label{lem:mlp_lookup_convex}
Let $K,V,d_{\mathrm{out}}\in\mathbb N$.
Suppose
\[
\{\vtheta_1,\ldots,\vtheta_K,\vnu_1,\ldots,\vnu_V\}\subset\R^d
\]
is linearly independent, and define
\[
\vu_{k,a}:=\vtheta_k+\vnu_a,
\qquad
(k,a)\in[K]\times[V].
\]
For any map $\vf:[K]\times[V]\to\R^{d_{\mathrm{out}}}$, there exists a two-layer ReLU MLP $L:\R^d\to\R^{d_{\mathrm{out}}}$ with hidden width $KV$ such that
\[
L(\vu_{k,a})=\vf(k,a)
\]
for all $(k,a)$, and, more generally,
\[
L\left(\sum_{k=1}^K\alpha_k\vu_{k,a}\right)
=
\sum_{k=1}^K\alpha_k\vf(k,a)
\]
for every fixed $a\in[V]$ and every $\valpha\in\Delta^{K-1}:=\{\valpha\in\R^K:\alpha_k\ge0,\sum_k\alpha_k=1\}$.
\end{lemma}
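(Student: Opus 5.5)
Linear independence lets us pick dual linear functionals. Since $\{\vtheta_1,\ldots,\vtheta_K,\vnu_1,\ldots,\vnu_V\}$ is linearly independent, there exist vectors $\{\vphi_k\}_{k\le K}$ and $\{\vpsi_a\}_{a\le V}$ in $\R^d$ forming a biorthogonal system:
\[
\vphi_k^\top\vtheta_j=\delta_{kj},\qquad \vphi_k^\top\vnu_b=0,\qquad \vpsi_a^\top\vtheta_j=0,\qquad \vpsi_a^\top\vnu_b=\delta_{ab}.
\]
To construct them, extend the family to a basis of $\R^d$ and take the corresponding rows of the inverse basis matrix. On an input $\vu=\sum_k\alpha_k\vu_{k,a}=\sum_k\alpha_k\vtheta_k+\vnu_a$ with $\valpha\in\Delta^{K-1}$, these give
\[
\vphi_k^\top\vu=\alpha_k,\qquad \vpsi_b^\top\vu=\delta_{ab}.
\]
Thus the input linearly exposes both the mixing weight $\alpha_k$ and a one-hot code of the token $a$.

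The plan is to use one hidden ReLU unit per pair $(k,b)$, giving width $KV$, with pre-activation
\[
g_{k,b}(\vu):=\vphi_k^\top\vu+\vpsi_b^\top\vu-1.
\]
This affine form has weight vector $\vphi_k+\vpsi_b$ and bias $-1$. On the admissible inputs it takes two cases:
\begin{itemize}
\item if $b=a$, then $g_{k,a}(\vu)=\alpha_k+1-1=\alpha_k\ge 0$, so $\mathrm{ReLU}(g_{k,a})=\alpha_k$;
\item if $b\ne a$, then $g_{k,b}(\vu)=\alpha_k-1\le 0$ because $\alpha_k\le 1$, so $\mathrm{ReLU}(g_{k,b})=0$.
\end{itemize}
This is the only point where the simplex constraint is used ($\alpha_k\ge0$ and $\alpha_k\le1$). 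It is also the step that needs care: the bias must gate out wrong tokens without clipping the correct one. The offset $-1$ works exactly, including at the boundary values $\alpha_k\in\{0,1\}$ where the pre-activation is $0$.

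Define the output layer as
\[
L(\vu):=\sum_{k\le K}\sum_{b\le V}\vf(k,b)\,\mathrm{ReLU}\bigl(g_{k,b}(\vu)\bigr),
\]
that is, second-layer weight matrix with columns $\vf(k,b)$ and zero output bias. By the two cases above, $L\bigl(\sum_k\alpha_k\vu_{k,a}\bigr)=\sum_k\alpha_k\vf(k,a)$ for every fixed $a$ and every $\valpha\in\Delta^{K-1}$. Taking $\valpha=\ve_k$ recovers $L(\vu_{k,a})=\vf(k,a)$. The hidden width is $KV$, and $\vf$ is arbitrary, which proves the lemma.

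For the subsequent use in the theorem: precomposing the first-layer affine forms with $\mC$ keeps this a width-$KV$ two-layer MLP. This is consistent with Step 1, where $L$ is applied to $\mC\vx_t=\vu_t$.
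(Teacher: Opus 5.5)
Your proof is correct and takes essentially the same approach as the paper's: a biorthogonal dual family, one ReLU unit per pair $(k,b)$ whose weight is the sum of the corresponding dual vectors with bias $-1$, and output weights $\vf(k,b)$. The paper's case analysis also uses only $0\le\alpha_k\le 1$ to make the gating exact.
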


\begin{proof}
By linear independence, there is a biorthogonal dual family
\[
\{\vtheta_1^*,\ldots,\vtheta_K^*,\vnu_1^*,\ldots,\vnu_V^*\}\subset\R^d
\]
satisfying
\[
\langle\vtheta_i^*,\vtheta_k\rangle=\mathbf 1\{i=k\},
\qquad
\langle\vtheta_i^*,\vnu_a\rangle=0,
\]
and
\[
\langle\vnu_j^*,\vnu_a\rangle=\mathbf 1\{j=a\},
\qquad
\langle\vnu_j^*,\vtheta_k\rangle=0.
\]
For each pair $(i,j)\in[K]\times[V]$, define
\[
\vphi_{ij}:=\vtheta_i^*+\vnu_j^*.
\]
Then
\[
\langle\vphi_{ij},\vu_{k,a}\rangle
=
\mathbf 1\{i=k\}+\mathbf 1\{j=a\}.
\]
Define one hidden unit per pair:
\[
 h_{ij}(\vu):=\sigma\bigl(\langle\vphi_{ij},\vu\rangle-1\bigr),
 \qquad \sigma(r):=\max\{r,0\}.
\]
On the vertices $\vu_{k,a}$, this gives
\[
 h_{ij}(\vu_{k,a})=\mathbf 1\{i=k,\,j=a\}.
\]
Now fix a token $a$ and a simplex vector $\valpha$.
For
\[
\vu_a^{(\valpha)}
:=
\sum_{k=1}^K\alpha_k\vu_{k,a}
=
\sum_{k=1}^K\alpha_k\vtheta_k+\vnu_a,
\]
we have
\[
\langle\vphi_{ij},\vu_a^{(\valpha)}\rangle
=
\alpha_i+\mathbf 1\{j=a\}.
\]
Therefore,
\[
 h_{ij}(\vu_a^{(\valpha)})
=
\begin{cases}
\sigma(\alpha_i)=\alpha_i, & j=a,\\
\sigma(\alpha_i-1)=0, & j\ne a,
\end{cases}
\]
because $0\le\alpha_i\le1$.
Finally define
\[
L(\vu):=\sum_{i=1}^K\sum_{j=1}^V h_{ij}(\vu)\,\vf(i,j).
\]
This is a two-layer ReLU MLP with hidden width $KV$, output weights $\vf(i,j)$, and zero output bias.
The vertex and convex-linearity claims follow from the displayed identities.
\end{proof}

\begin{rmk}[Connection to key--value memory]
The construction above can be interpreted through the lens of key--value memory, as discussed in \citet{geva2021transformer}.
In our setting, the vectors $\{\vphi_{ij}\}$ play the role of keys, which detect whether the input corresponds to a specific pair $(i,j)$ via inner products and thresholding.
The second layer associates each key with a value $\vf(i,j)$, and the MLP output is obtained by aggregating the retrieved values.
\end{rmk}

\begin{lemma}[Probability recovery with a constructed unembedding]
\label{lem:approx_recover_prob_subspace}
Let $\mathcal M\subset\R^d$ be a linear subspace, let $\mE\in\R^{d\times V}$ have full column rank, and assume
\[
\mathcal M\cap\operatorname{col}(\mE)=\{0\}.
\]
For every $\varepsilon\in(0,1/V)$ and $\delta>0$, there exist an output unembedding $\mW_{\mathrm{U}}\in\R^{V\times d}$ and a two-layer ReLU MLP $F:\R^d\to\R^d$ of hidden width $m_{\log}(\delta,\varepsilon,V)+2V$ such that
\[
\sup_{\vm\in\mathcal M,\,\vp\in\Delta_V^\varepsilon}
\left\|
\softmax\!\left(\mW_{\mathrm{U}}\bigl(\vm+\mE\vp+F(\vm+\mE\vp)\bigr)\right)-\vp
\right\|_\infty
<\delta,
\]
where
\[
\Delta_V^\varepsilon
:=
\left\{\vp\in\R^V:p_b\ge\varepsilon\text{ for all }b\in[V],\ \sum_{b=1}^V p_b=1\right\}.
\]
\end{lemma}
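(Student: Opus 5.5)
The plan is to construct a linear "readout" that extracts $\vp$ exactly from $\vm+\mE\vp$ while annihilating $\mathcal M$. Then a ReLU MLP approximates the coordinatewise logarithm of $\vp$ and writes the result into a dedicated output direction. Finally, $\mW_{\mathrm U}$ reads that direction, so softmax returns approximately $\vp$.

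\textbf{Step 1 (linear readout).} Since $\mathcal M\cap\operatorname{col}(\mE)=\{0\}$ and $\mE$ has full column rank, there is a linear map $\mG:\R^d\to\R^V$ with $\mG\vm=0$ for all $\vm\in\mathcal M$ and $\mG\mE=\mI_V$. To build it, choose a complement of $\mathcal M\oplus\operatorname{col}(\mE)$, set $\mG$ to be zero on $\mathcal M$ and on that complement, and set $\mG=\mE^{+}$ on $\operatorname{col}(\mE)$. Then $\mG(\vm+\mE\vp)=\vp$ exactly, uniformly over $\vm\in\mathcal M$. This step is what makes the nuisance component $\vm$, which may be unbounded, harmless.

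\textbf{Step 2 (log MLP and output direction).} Let $\ell$ be a piecewise-linear interpolant of $\log$ on $[\varepsilon,1]$ with breakpoints chosen so that $\sup_{x\in[\varepsilon,1]}|\ell(x)-\log x|\le\eta$. Since $\log$ is concave with $|\log''|\le\varepsilon^{-2}$, a uniform mesh of size $\varepsilon\sqrt{8\eta}$ suffices. Write $\ell(x)=c_0+\sum_r c_r\sigma(x-x_r)$ on $[\varepsilon,1]$. Applying this coordinatewise to $\mG\vy$ uses $V$ times the number of breakpoints in hidden units; this count is what I take $m_{\log}(\delta,\varepsilon,V)$ to denote.

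Next pick a $V$-dimensional block of directions to write into. The cleanest choice is to write into $\operatorname{col}(\mE)$ itself, with $2V$ extra ReLU units of the form $\sigma(\pm (\mG\vy)_b)$. These implement $-\mE\mG\vy$ exactly, cancelling the $\mE\vp$ term; this accounts for the "$+2V$" in the width. The first-layer affine forms are pre-composed with $\mG$, and the output weights are $\mE$ times the log coefficients. The constant $c_0$ goes into the output bias.

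The residual output is then $\vm+\mE\,\ell(\vp)$, with $\ell$ applied coordinatewise. Set $\mW_{\mathrm U}:=\mG$. This gives logits $\mG\vm+\ell(\vp)=\ell(\vp)$.

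\textbf{Step 3 (error).} Every logit lies within $\eta$ of $\log p_b$, and $\softmax(\log\vp)=\vp$ because $\sum_b p_b=1$. Softmax is $1$-Lipschitz from $\ell_\infty$ to $\ell_1$ up to a factor of $2$: shifting each logit by at most $\eta$ changes every probability by a factor in $[e^{-2\eta},e^{2\eta}]$. Hence $\|\softmax(\ell(\vp))-\vp\|_\infty\le e^{2\eta}-1<\delta$ once $\eta$ is small enough.

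The main obstacle is the bookkeeping in Step 1: defining $\mG$ consistently when $\mathcal M$ and $\operatorname{col}(\mE)$ are only complementary within their sum, and keeping the approximation uniform over all of $\mathcal M$. I would handle this by an explicit direct-sum decomposition $\R^d=\mathcal M\oplus\operatorname{col}(\mE)\oplus\mathcal W$, which makes every estimate independent of $\vm$.
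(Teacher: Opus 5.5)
Your proposal is correct and follows the paper's proof essentially step for step. Your $\mG$ is the paper's $\mW_{\mathrm U}=\mP$, the left inverse of $\mE$ that vanishes on $\mathcal M$, extended linearly to $\R^d$. Your MLP is the paper's $F(\vy)=\mE\,G(\mW_{\mathrm U}\vy)-\mE\mW_{\mathrm U}\vy$, with the $2V$ units $\sigma(\pm(\mG\vy)_b)$ realizing the linear cancellation, and your $e^{2\eta}-1$ softmax bound uses the same multiplicative-ratio argument. The differences are cosmetic: you make the log spline explicit (the mesh $\varepsilon\sqrt{8\eta}$ is a valid choice) where the paper cites a standard approximation result. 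The ``$1$-Lipschitz up to a factor of $2$'' aside in Step 3 is loosely worded, but it is not needed, since the ratio bound you state is what carries the estimate.
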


\begin{proof}
Because \(M\cap\operatorname{col}(\mE)=\{0\}\), we have
\(M+\operatorname{col}(\mE)=M\oplus\operatorname{col}(\mE)\).
Thus every vector in this sum has a unique decomposition
\(\vm+\mE\vr\) with \(\vm\in M\) and \(\vr\in\mathbb R^V\).
Define a linear map
\(\mP:M\oplus\operatorname{col}(\mE)\to\mathbb R^V\) by
\[
\mP(\vm+\mE\vr)=\vr.
\]
Extend $\mP$ linearly to all of $\R^d$, and set
\[
\mW_{\mathrm{U}}:=\mP.
\]
Then
\[
\mW_{\mathrm{U}}\vm=\vzero\quad(\vm\in\mathcal M),
\qquad
\mW_{\mathrm{U}}\mE=\mI_V.
\]
In particular, the required unembedding is constructed from the decomposition; no pre-existing output head is assumed.

Choose $\eta>0$ such that $e^{2\eta}-1<\delta$.
Let $G:\R^V\to\R^V$ be a two-layer ReLU network satisfying
\[
\sup_{\vp\in\Delta_V^\varepsilon}\|G(\vp)-\log\vp\|_\infty<\eta.
\]
Such a $G$ can be obtained coordinatewise by a piecewise-linear ReLU spline approximation to $\log$ on the compact interval $[\varepsilon,1]$; equivalently, this follows from standard single-hidden-layer approximation results such as \citet{pinkus1999approximation}.
Let $m_{\log}(\delta,\varepsilon,V)$ denote the width of this coordinatewise approximation.

Define
\[
F(\vy):=\mE\,G(\mW_{\mathrm{U}}\vy)-\mE\mW_{\mathrm{U}}\vy.
\]
The first term is a two-layer ReLU MLP of width $m_{\log}(\delta,\varepsilon,V)$, obtained by composing the input layer of $G$ with $\mW_{\mathrm{U}}$ and the output layer with $\mE$.
The second term $-\mE\mW_{\mathrm{U}}\vy$ is a rank-at-most-$V$ linear map, so it can be represented exactly using at most $2V$ ReLU hidden units.
Indeed, if $\mE\mW_{\mathrm{U}}=\mA\mB$ with $\mB\in\R^{r\times d}$, $\mA\in\R^{d\times r}$, and $r\le V$, then
\[
-\mE\mW_{\mathrm{U}}\vy
=
-\mA\bigl(\sigma(\mB\vy)-\sigma(-\mB\vy)\bigr).
\]
Concatenating the hidden units for the two terms gives a two-layer ReLU MLP of width $m_{\log}(\delta,\varepsilon,V)+2V$.

For $\vy=\vm+\mE\vp$ with $\vm\in\mathcal M$ and $\vp\in\Delta_V^\varepsilon$, we have $\mW_{\mathrm{U}}\vy=\vp$.
Therefore
\[
\mW_{\mathrm{U}}\bigl(\vy+F(\vy)\bigr)
=
\mW_{\mathrm{U}}\vy
+
\mW_{\mathrm{U}}\mE\,G(\mW_{\mathrm{U}}\vy)
-
\mW_{\mathrm{U}}\mE\mW_{\mathrm{U}}\vy
=
G(\vp),
\]
where we used $\mW_{\mathrm{U}}\mE=\mI_V$.
The output distribution is therefore $\softmax(G(\vp))$.
Write $G(\vp)=\log\vp+\ve$ with $\|\ve\|_\infty<\eta$.
For each coordinate $b$,
\[
\softmax(G(\vp))_b
=
\frac{p_b e^{e_b}}{\sum_{c=1}^V p_c e^{e_c}}.
\]
Since $e^{-\eta}\le e^{e_c}\le e^\eta$ for every $c$, the ratio $\softmax(G(\vp))_b/p_b$ lies in $[e^{-2\eta},e^{2\eta}]$.
Thus
\[
\left|\softmax(G(\vp))_b-p_b\right|
\le e^{2\eta}-1
<\delta.
\]
Taking the maximum over $b$ proves the lemma.
\end{proof}

\subsection{Details About Extrapolative Task Learning Mode: Explicit Formulas 
}\label{app:approx-posterior}

This subsection derives the closed-form predictors that define the extrapolative task-learning mode \texttt{M2} introduced in \Secref{sec:generalization-mode}.
Concretely, \texttt{M2} is the Bayesian posterior predictive obtained from Eq.~\ref{eq:bayes} by replacing the prior $\pi_{\mathrm{train}}$ over the finite training support $\gZ_{\mathrm{train}}$ with the \emph{generative} prior over the full latent space $\gZ$, i.e.\ the same parametric distribution that was used to sample tasks during training (symmetric Dirichlet for \texttt{E1} and \texttt{E3}, isotropic Gaussian for \texttt{E2}; cf.\ \Secref{sec:setup}).
For each of \texttt{E1}--\texttt{E3}, conjugacy of this prior with the per-token likelihood collapses the integral over $\gZ$ into a closed-form predictor that depends on the observed sequence only through a low-dimensional context statistic: a smoothed empirical unigram for \texttt{E1}, the ridge solution for \texttt{E2}, and a smoothed empirical bigram for \texttt{E3}.
The derivations below thus pin down the concrete predictive targets that a model implementing \texttt{M2} must encode; whether the trained model actually encodes them, and where in the residual stream, is a separate empirical question taken up in the orthogonal-subspace decomposition of \Secref{app:orth_decomp}.

\paragraph{Biased dice (\texttt{E1}).}
Each latent task $z = \vp \in \gZ = \Delta^{V-1}$ is sampled from the symmetric Dirichlet prior
$\mathrm{Dir}(\boldsymbol{1}_V)$.
More generally, for $\vp \in \Delta^{V-1}$ drawn from $\mathrm{Dir}(\alpha_0 \boldsymbol{1}_V)$, the probability density function is
$$
f\left(p_1, \ldots, p_V ; \alpha_0\right)=\frac{\Gamma(\alpha_0 V)}{\Gamma(\alpha_0)^V} \prod_{a=1}^V p_a^{\alpha_0-1}.
$$
Recall the token count $n_a(t)$ defined in Equation~\eqref{eq:token-count}.
The posterior predictive after observing $s_{\le t}$ is
\[
\begin{aligned}
    \P(s_{t+1}=a \mid s_{\le t}) 
    &= \int_{\vp \in \gZ} \P(s_{t+1}=a \mid \vp, s_{\le t})\, \P(\vp \mid s_{\le t})\, d\vp \\
    &= \frac{\int_{\vp \in \gZ} \P(s_{t+1}=a \mid \vp)\, \P(s_{\le t} \mid \vp)\, \P(\vp)\, d\vp}{\int_{\vp \in \gZ} \P(s_{\le t} \mid \vp)\, \P(\vp)\, d\vp} \\
    &= \frac{\int_{\vp \in \gZ}  p_a \prod_{b=1}^V p_{b}^{n_b(t) + \alpha_0 - 1}\, d\vp}{\int_{\vp \in \gZ} \prod_{b=1}^V p_{b}^{n_b(t) + \alpha_0 - 1}\, d\vp} \\
    &= \frac{\Gamma(n_a(t)+\alpha_0+1) \prod_{b\neq a}^V \Gamma(n_b(t)+\alpha_0)}{\Gamma(t + V\alpha_0 + 1)} \cdot \frac{\Gamma(t + V\alpha_0)}{\prod_{b=1}^V \Gamma(n_b(t)+\alpha_0)}\\
    &=\frac{n_a(t)+\alpha_0}{t + V\alpha_0},
\end{aligned}
\]
which is a smoothed empirical unigram estimator.

\paragraph{Noisy linear regression (\texttt{E2}).}
Each latent task $z = \vw \in \gZ = \R^D$ is drawn from an isotropic Gaussian prior
\[
\vw \sim \mathcal{N}(\vzero,\, \tau^{-2}\sigma^2 \mI_D),
\]
which matches the generative prior of \Secref{sec:setup} when $\tau^2 = \sigma^2$.
Conditional on $\vw$, inputs are i.i.d.\ $\vx_t \sim \mathcal{N}(\vzero, \mI_D)$ and
$y_t \mid \vx_t, \vw \sim \mathcal{N}(\vx_t^\top \vw,\, \sigma^2)$.
For the prediction of $y_{t+1}$ at a fresh covariate $\vx_{t+1}$, let
$\mX_{\le t} = (\vx_1,\ldots,\vx_t)^\top \in \R^{t\times D}$ and
$\vy_{\le t} = (y_1,\ldots,y_t)^\top \in \R^t$ collect the observed pairs.
By Gaussian--Gaussian conjugacy, the posterior over $\vw$ is again Gaussian:
\[
\vw \mid \mX_{\le t}, \vy_{\le t}
\;\sim\;
\mathcal{N}\!\left(\hat{\vw}_{\mathrm{ridge}}(t),\; \sigma^2\,\mSigma_t\right),
\qquad
\mSigma_t := \bigl(\mX_{\le t}^\top \mX_{\le t} + \tau^2 \mI_D\bigr)^{-1},
\]
where $\hat{\vw}_{\mathrm{ridge}}(t)$ is the ridge solution \citep{xieexplanation,raventos2023pretraining,akyurek2022learning}
\[
\hat{\vw}_{\mathrm{ridge}}(t)
\;=\;
\bigl(\mX_{\le t}^\top \mX_{\le t} + \tau^2 \mI_D\bigr)^{-1} \mX_{\le t}^\top \vy_{\le t}.
\]
Marginalising $\vw$ yields the posterior predictive
\[
\begin{aligned}
\P\bigl(y_{t+1} \mid \vx_{t+1}, \mX_{\le t}, \vy_{\le t}\bigr)
&= \int_{\R^D} \P(y_{t+1} \mid \vx_{t+1}, \vw)\, \P(\vw \mid \mX_{\le t}, \vy_{\le t})\, d\vw \\
&= \mathcal{N}\!\left(\vx_{t+1}^\top \hat{\vw}_{\mathrm{ridge}}(t),\;\, \sigma^2\bigl(1 + \vx_{t+1}^\top \mSigma_t\, \vx_{t+1}\bigr)\right).
\end{aligned}
\]
Hence the posterior predictive mean is the ridge regressor evaluated at $\vx_{t+1}$,
and the predictive variance inflates the noise level $\sigma^2$ by the
input-dependent factor $1 + \vx_{t+1}^\top \mSigma_t \vx_{t+1}$ that captures
remaining uncertainty about $\vw$.

\paragraph{Mixture of Markov chains (\texttt{E3}).}
Each latent task $z = \mP \in \gZ = (\Delta^{V-1})^V$ is a row-stochastic
$V \times V$ transition matrix, with each row drawn independently from the
symmetric Dirichlet prior used to generate the training tasks (\Secref{sec:setup}):
\[
\mP_{a, \cdot} \;\stackrel{\text{i.i.d.}}{\sim}\; \mathrm{Dir}(\alpha_0 \boldsymbol{1}_V),
\qquad a = 1, \ldots, V.
\]
For a Markov sequence $s_{\le t}$ generated under $\mP$ from a uniform initial
distribution, let
\[
N_{ab}(t) \;:=\; \sum_{\tau = 2}^{t} \mathbf{1}\{s_{\tau-1} = a,\, s_{\tau} = b\}
\;\in\; \mathbb{N},
\qquad
N_{a\cdot}(t) \;:=\; \sum_{b=1}^{V} N_{ab}(t)
\;\in\; \mathbb{N},
\]
denote the count of $a \to b$ transitions and the row total, respectively.
The likelihood factorises across rows,
\[
\P(s_{\le t} \mid \mP) \;=\; \prod_{a=1}^{V} \prod_{b=1}^{V} P_{a,b}^{N_{ab}(t)},
\]
which is a product of multinomial likelihoods. Combined with the
row-independent Dirichlet prior, Dirichlet--multinomial conjugacy gives a
posterior that factorises across rows,
\[
\mP_{a, \cdot} \mid s_{\le t}
\;\sim\;
\mathrm{Dir}\!\left(\alpha_0 + N_{a1}(t),\, \ldots,\, \alpha_0 + N_{aV}(t)\right),
\qquad a = 1, \ldots, V.
\]
The posterior predictive at the next position, given current state $s_t = a$,
is then
\[
\begin{aligned}
\P\bigl(s_{t+1} = b \mid s_t = a,\, s_{\le t}\bigr)
&= \int_{\gZ} P_{a, b}\, \P(\mP \mid s_{\le t})\, d\mP \\
&= \E\!\left[ P_{a, b} \mid s_{\le t} \right] \\
&= \frac{N_{ab}(t) + \alpha_0}{N_{a\cdot}(t) + V\alpha_0},
\end{aligned}
\]
where the last equality uses the mean of a Dirichlet distribution.
This is a smoothed empirical bigram estimator, exactly analogous to the
smoothed empirical unigram derived for \texttt{E1}: it pools every observed
transition out of state $a$ and interpolates with the uniform prior at rate
controlled by~$\alpha_0$.

\paragraph{Summary.}
In all three experiments, the predictor underlying \texttt{M2} is the exact
Bayesian posterior predictive under the parametric prior used to generate the
training tasks, but evaluated over the \emph{full} latent space $\gZ$ rather
than only the finite training support $\gZ_{\mathrm{train}}$. Conjugacy of
each prior--likelihood pair (Dirichlet--categorical for \texttt{E1},
Gaussian--Gaussian for \texttt{E2}, and row-wise Dirichlet--multinomial for
\texttt{E3}) yields a closed-form predictor that depends on the observed
sequence only through a sufficient context statistic: the empirical unigram
counts $n_a(t)$ for \texttt{E1}, the ridge solution $\hat{\vw}_{\mathrm{ridge}}(t)$
for \texttt{E2}, and the empirical bigram counts $N_{ab}(t)$ for \texttt{E3}.
This explains why these context statistics arise naturally as the targets that
the model must encode in order to implement \texttt{M2}.




\subsection{Proof of Theorem~\ref{thm:task-distance-lower} in Section~\ref{sec:generalization-mode}
}
\label{sec:simplex-packing-obstruction}

This subsection states and proves the formal version of Theorem~\ref{thm:task-distance-lower}.
The result makes precise the intuition that the extrapolative task-learning mode \texttt{M2} cannot be implemented from inside a task-vector subspace whose dimension $k_\star$ is smaller than the intrinsic dimension $d_0$ of the optimal prediction map: a volumetric packing argument forbids the required separation of latents within a $k_\star$-dimensional affine slab unless the alignment error $\varepsilon$ grows accordingly.
Theorem~\ref{thm:finite-context-packing-clean} below gives the explicit lower bound on $\varepsilon$, which informally takes the form $\varepsilon \gtrsim (c-\delta)/L - C/(2^{d_0/k_\star}-1)$ reported in \Secref{sec:generalization-mode}.

\paragraph{Notation.}
\label{par:standing-notation}
We collect the notation used throughout this appendix, extending the conventions of \Secref{sec:math}.
Let $\gZ$ denote the latent space with training set $\gZ_{\mathrm{train}}=[K]$.
At a fixed layer, $\vh_t\in\R^d$ is the hidden state at position~$t$;
$\{\vtheta_k\}_{k\le K}$ and $\{\vnu_a\}_{a\in\gV}$ are the task vectors and token-encoding vectors (Definition~\ref{def:task-vec});
$\vmu_{t,k,a}:=\E[\vh_t\mid z{=}k,\,s_t{=}a]$ are the cell means (Eq.~\ref{eq:cell-means}).
We assume \texttt{P0} together with the existence of the limiting cell mean
\(\vmu(\vz,a):=\lim_{t\to\infty}\vmu_{t,\vz,a}\) for each \((\vz,a)\) in the
neighborhood of interest. (Note that \texttt{P0} alone controls only the
conditional residual variance \(\E\|\vh_t-\vmu_{t,z,s_t}\|^2\) and does not by
itself guarantee convergence of \(\vmu_{t,\vz,a}\); we therefore impose the
existence of the limit explicitly.) Under these assumptions the (long-context)
decoder maps \(\vmu(\vz,a)\) to the Bayes-optimal next-token distribution
\(\P(\cdot\mid \vz,a)\).

\paragraph{Setup.}
We now introduce the symbols specific to the simplex-confinement question, augmenting the notation of Sec.~\ref{sec:generalization-mode}. Fix an anchor token \(a_\star\in\gV\) and write
\[
\vp(\vz):=\P(\cdot \mid \vz,a_\star) \in \Delta^{V-1},
\qquad
\vmu_\star(\vz):=\vmu(\vz,a_\star)\in \R^d,
\]
for the conditional next-token distribution and the limiting cell mean at \(a_\star\). Let
\[
\calT:=\operatorname{span}(\vtheta_2-\vtheta_1,\dots,\vtheta_K-\vtheta_1),
\qquad
\calS:=\vnu_{a_\star}+\calT,
\]
denote the task-vector subspace and its affine translate at \(\vnu_{a_\star}\), and set \(k_\star:=\dim(\calT)\le K-1\). We assume \(k_\star\ge 1\), as otherwise \(\calS\) is a singleton and the conclusions are trivial.

\paragraph{Discrete-output scope.}
The theorem is stated for finite discrete output spaces: \(a_\star \in \mathcal V\) and
\(\vp(z)=\P(\cdot \mid z,a_\star)\in\Delta^{V-1}\). It therefore directly covers the discrete-output settings \texttt{E1} and \texttt{E3}. The linear-regression setting E2 is an empirical continuous-output analogue. A formal continuous-output version would require replacing the simplex-valued map \(\vp\) by an appropriate notion of distance between posterior-predictive regression distributions. We omit this extension for simplicity.

\paragraph{Gauge convention} The affine anchor in \(\calS=\vnu_{a_\star}+\calT\) is a gauge choice. The proof below uses only the affine set \(\calS\) and distances \(\operatorname{dist}(\vmu_\star(z), \calS)\), not the separate decomposition of its anchor into global-mean and token components. Indeed, if \texttt{P1} is written in the centered gauge
\[
    \vh_t \approx \vmu_t+\vtheta_z+\vnu_{s_t}
\]
and \(\vmu_t\to\vmu_\infty\), then the long-context ID cell means at anchor token
\(a_\star\) lie in
\[
    \vmu_\infty+\vnu_{a_\star}+\calT .
\]
We absorb this common translate into the affine anchor, equivalently writing
\[
    \bar{\vnu}_{a_\star}:=\vmu_\infty+\vnu_{a_\star},
    \qquad
    \calS=\bar{\vnu}_{a_\star}+T,
\]
and then drop the bar. This is only a notational convention for interpreting \(\calS\)
under \texttt{P1}; Theorem~\ref{thm:task-distance-lower} itself assumes only the limiting cell means \(\vmu(z,a)\) specified above.

\begin{defn}[Local regularity setup]
\label{def:local-regularity}
Fix a base point \(\vz_0\in \operatorname{int}(\gZ)\) and assume that \(\vp\) is \(C^1\) on a neighborhood of \(\vz_0\). Let
\[
\mJ_0:=\nabla_{\vz}\vp(\vz_0),
\qquad
d_0:=\operatorname{rank}(\mJ_0)\ge 1,
\qquad
\sigma_0:=s_{d_0}(\mJ_0)>0,
\qquad
\mathcal E:=(\ker \mJ_0)^\perp,
\]
where \(s_{d_0}(\mJ_0)\) denotes the smallest nonzero singular value of \(\mJ_0\). Choose a radius \(r_0>0\) with \(B(\vz_0,r_0)\subseteq \gZ\), and define
\[
U_{\mathcal E,0}:=(\vz_0+\mathcal E)\cap B(\vz_0,r_0).
\]
Then \(U_{\mathcal E,0}\) is a \(d_0\)-dimensional ball in the affine subspace \(\vz_0+\mathcal E\) along which \(\vp\) has full local rank; the integer \(d_0\) makes precise the ``number of independent directions along which \(\vp(\vz)\) varies'' from Sec.~\ref{sec:generalization-mode}.
\end{defn}

The next theorem formalizes the informal claim of Theorem~\ref{thm:task-distance-lower} in Sec.~\ref{sec:generalization-mode}. Throughout, we write
\[
\operatorname{dist}(\vx,\calS) := \inf_{\vc \in \calS} \|\vx - \vc\|_2
\]
for the Euclidean distance from a point \(\vx\in\R^d\) to the affine set \(\calS\), and we collect the regularity assumptions used in the theorem.

\begin{defn}[Local decoder regularity]
\label{def:local-decoder-regularity}
Let \(\hat{\vp}(\vz)\) be the predictor obtained by passing \(\vmu_\star(\vz)\) through the model's decoder. We say the setup is \emph{\((M,L,\delta)\)-regular on a neighborhood} \(\mathcal U\subseteq\gZ\) of \(\vz_0\) if there exist constants \(M<\infty\), \(L>0\), and \(\delta\ge 0\) such that, for all \(\vz,\vz'\in \mathcal U\),
\begin{align}
\|\vmu_\star(\vz)-\vmu_\star(\vz')\|_2 &\le M,
\label{eq:local-mu-diameter}\\
\|\hat{\vp}(\vz)-\hat{\vp}(\vz')\|_2
&\le
L\,\|\vmu_\star(\vz)-\vmu_\star(\vz')\|_2,
\label{eq:local-decoder-lipschitz}\\
\|\hat{\vp}(\vz)-\vp(\vz)\|_2 &\le \delta.
\label{eq:local-prediction-error}
\end{align}
We further define the worst-case distance from \(\vmu_\star\) to the affine task subspace on \(\mathcal U\),
\begin{equation}
 \varepsilon := \sup_{\vz\in\mathcal U}\operatorname{dist}\bigl(\vmu_\star(\vz),\calS\bigr).
\label{eq:local-simplex-distance}
\end{equation}
\end{defn}

\begin{thm}[Finite-context packing obstruction]
\label{thm:finite-context-packing-clean}
Assume the setup of Definition~\ref{def:local-regularity} and that the
\((M,L,\delta)\)-regularity of Definition~\ref{def:local-decoder-regularity}
holds on a neighborhood \(\mathcal U\subseteq\gZ\) of \(\vz_0\).
Then there exists \(r_{\mathrm{loc}}\in(0,r_0]\) such that
\[
U_{\mathcal E}:=(\vz_0+\mathcal E)\cap B(\vz_0,r_{\mathrm{loc}})
\subseteq \mathcal U,
\qquad
\alpha:=\sigma_0/2,
\]
and the following hold.

\smallskip\noindent
\textbf{(a) Local co-Lipschitz estimate for the ground-truth map.}
For all \(\vz,\vz'\in U_{\mathcal E}\),
\begin{equation}
\label{eq:finite-direct-colip}
\|\vp(\vz)-\vp(\vz')\|_2
\ge
\alpha\,\|\vz-\vz'\|_2.
\end{equation}

\smallskip\noindent
\textbf{(b) Finite-context packing inequality.}
For any \(0<\rho<r_{\mathrm{loc}}\) such that
\[
\frac{\alpha\rho-2\delta}{L}>2\varepsilon,
\]
one has
\begin{equation}
\label{eq:finite-direct-packing}
\left(\frac{r_{\mathrm{loc}}}{\rho}\right)^{d_0}
\le
\left(
1+\frac{2M}{
\sqrt{
\left(\dfrac{\alpha\rho-2\delta}{L}\right)^2-4\varepsilon^2
}}
\right)^{k_\star}.
\end{equation}

\smallskip\noindent
\textbf{(c) Lower bound on the confinement radius.}
For any \(0<\rho<r_{\mathrm{loc}}\) such that \(\alpha\rho>2\delta\),
\begin{equation}
\label{eq:finite-direct-eps-lower}
\varepsilon
\ge
\frac{1}{2}
\left[
\left(\frac{\alpha\rho-2\delta}{L}\right)^2
-
\left(
\frac{2M}{
\left(r_{\mathrm{loc}}/\rho\right)^{d_0/k_\star}-1
}
\right)^2
\right]_+^{1/2},
\end{equation}
where \([x]_+:=\max\{x,0\}\).
\end{thm}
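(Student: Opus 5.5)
The argument has three parts. Part (a) comes from a first-order Taylor expansion restricted to the rank-$d_0$ directions. Part (b) is a volumetric packing comparison between a $d_0$-dimensional ball of latents and a $k_\star$-dimensional slab of hidden states. Part (c) is an algebraic inversion of (b).

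\emph{Step (a).} For $\vz,\vz'\in\vz_0+\mathcal E$, the difference $\vz-\vz'$ lies in $\mathcal E=(\ker\mJ_0)^\perp$, so $\|\mJ_0(\vz-\vz')\|_2\ge\sigma_0\|\vz-\vz'\|_2$. Since $\vp$ is $C^1$, the integral form of the mean value theorem gives
\[
\vp(\vz)-\vp(\vz')=\int_0^1\nabla\vp(\vz'+s(\vz-\vz'))\,(\vz-\vz')\,ds .
\]
We choose $r_{\mathrm{loc}}\le r_0$ small enough that $\|\nabla\vp(\vw)-\mJ_0\|_{\mathrm{op}}\le\sigma_0/2$ on $B(\vz_0,r_{\mathrm{loc}})$, and also small enough that this ball lies in $\mathcal U$ (possible because $\mathcal U$ is a neighborhood of $\vz_0$). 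The triangle inequality then yields $\|\vp(\vz)-\vp(\vz')\|_2\ge(\sigma_0-\sigma_0/2)\|\vz-\vz'\|_2=\alpha\|\vz-\vz'\|_2$.

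\emph{Step (b).}
\begin{itemize}
\item Take a maximal $\rho$-separated set $\{\vz_i\}_{i\le N}$ in the $d_0$-dimensional ball $U_{\mathcal E}$. The standard volume argument gives $N\ge(r_{\mathrm{loc}}/\rho)^{d_0}$, since balls of radius $\rho$ around the points cover $U_{\mathcal E}$.
\item For $i\ne j$, part (a) together with \eqref{eq:local-prediction-error} gives $\|\hat\vp(\vz_i)-\hat\vp(\vz_j)\|_2\ge\alpha\rho-2\delta$.
\item Then \eqref{eq:local-decoder-lipschitz} gives $\|\vmu_\star(\vz_i)-\vmu_\star(\vz_j)\|_2\ge D:=(\alpha\rho-2\delta)/L$.
\item Write $\vmu_\star(\vz_i)=\vc_i+\vr_i$ with $\vc_i\in\calS$ the orthogonal projection and $\vr_i\perp\calT$, so $\|\vr_i\|_2\le\varepsilon$. (Taking the projection handles the infimum in the definition of $\varepsilon$.)
\item By Pythagoras, $\|\vc_i-\vc_j\|_2^2=\|\vmu_\star(\vz_i)-\vmu_\star(\vz_j)\|_2^2-\|\vr_i-\vr_j\|_2^2\ge D^2-4\varepsilon^2$. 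So the $\vc_i$ are $\eta$-separated in the $k_\star$-dimensional affine space $\calS$, with $\eta:=\sqrt{D^2-4\varepsilon^2}>0$ by hypothesis.
\item Projection is $1$-Lipschitz, so by \eqref{eq:local-mu-diameter} all $\vc_i$ lie in a ball of radius $M$ around $\vc_1$.
\item Disjoint balls of radius $\eta/2$ centered at the $\vc_i$ fit inside a ball of radius $M+\eta/2$. Comparing $k_\star$-dimensional volumes gives $N\le(1+2M/\eta)^{k_\star}$.
\end{itemize}
Combining the lower and upper bounds on $N$ gives \eqref{eq:finite-direct-packing}.

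\emph{Step (c).} If $2\varepsilon\ge D$, the bound holds trivially, because the right side of \eqref{eq:finite-direct-eps-lower} is then at most $D/2\le\varepsilon$. Otherwise (b) applies. Taking $k_\star$-th roots gives $(r_{\mathrm{loc}}/\rho)^{d_0/k_\star}-1\le 2M/\eta$, so $\eta\le 2M/((r_{\mathrm{loc}}/\rho)^{d_0/k_\star}-1)$. Since $\rho<r_{\mathrm{loc}}$, the denominator is positive. Squaring and substituting $\eta^2=D^2-4\varepsilon^2$ yields $4\varepsilon^2\ge D^2-(\cdots)^2$, which is exactly \eqref{eq:finite-direct-eps-lower} after taking the positive part.

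\emph{Main obstacle.} The delicate step is the packing bookkeeping in (b). It requires that the points live in $U_{\mathcal E}\subseteq\mathcal U$, so that all three regularity bounds apply. It also requires the exact volume constants: the covering-number lower bound $(r/\rho)^{d_0}$ needs the separated set to be maximal, and the boundary of the ball must be handled carefully. Getting the $1+2M/\eta$ form requires that the diameter bound $M$ controls the projected points, which follows from the $1$-Lipschitz property of orthogonal projection.
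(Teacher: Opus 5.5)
Your proposal is correct and follows the paper's proof essentially step for step. Both use the same FTC co-Lipschitz estimate, the same maximal $\rho$-separated covering bound $N\ge(r_{\mathrm{loc}}/\rho)^{d_0}$, the same Pythagorean projection onto $\calS$ with a $k_\star$-dimensional volume comparison, and the same algebraic inversion in (c). The differences are cosmetic: you bound $\|\nabla\vp-\mJ_0\|_{\mathrm{op}}$ on the full ball rather than composing with an isometry onto $\mathcal E$, you center the enclosing ball at $\vc_1$ instead of $\Pi_{\calS}\vmu_\star(\vz_0)$, and you argue (c) directly rather than by contradiction.
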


\begin{proof}
Choose a linear isometry
\[
\mW:\R^{d_0}\to \mathcal E.
\]
Define
\[
\widetilde{\vp}(\vu):=\vp(\vz_0+\mW\vu),
\qquad
\vu\in B_{\R^{d_0}}(\vzero,r_0).
\]
Then \(\widetilde{\vp}\) is \(C^1\), and
\[
\nabla \widetilde{\vp}(\vzero)=\mJ_0\mW.
\]
Since \(\mW\) is an isometry onto \(\mathcal E=(\ker \mJ_0)^\perp\), the smallest singular value of
\(\nabla \widetilde{\vp}(\vzero)\) equals \(\sigma_0\). By continuity of
\(\nabla \widetilde{\vp}\) at \(\vzero\), after shrinking \(r_{\mathrm{loc}}\in(0,r_0]\) if necessary,
we may assume both that \(U_{\mathcal E}\subseteq \mathcal U\) and that
\[
\sup_{\|\vu\|_2<r_{\mathrm{loc}}}
\|\nabla \widetilde{\vp}(\vu)-\nabla \widetilde{\vp}(\vzero)\|_{\mathrm{op}}
\le
\frac{\sigma_0}{2}.
\]
Now fix \(\vu,\vv\in B_{\R^{d_0}}(\vzero,r_{\mathrm{loc}})\). By the fundamental theorem of calculus,
\[
\widetilde{\vp}(\vu)-\widetilde{\vp}(\vv)
=
\nabla \widetilde{\vp}(\vzero)(\vu-\vv)
+
\int_0^1
\Bigl(
\nabla \widetilde{\vp}\bigl(\vv+\tau(\vu-\vv)\bigr)-\nabla \widetilde{\vp}(\vzero)
\Bigr)(\vu-\vv)\,d\tau.
\]
Therefore
\begin{align*}
\|\widetilde{\vp}(\vu)-\widetilde{\vp}(\vv)\|_2
&\ge
\|\nabla \widetilde{\vp}(\vzero)(\vu-\vv)\|_2
-
\frac{\sigma_0}{2}\|\vu-\vv\|_2 \\
&\ge
\frac{\sigma_0}{2}\|\vu-\vv\|_2.
\end{align*}
Transporting this estimate back through \(\mW\) gives \eqref{eq:finite-direct-colip}. This proves part~(a).

Fix \(0<\rho<r_{\mathrm{loc}}\), and let
\[
\{\vz_1,\dots,\vz_N\}\subset U_{\mathcal E}
\]
be a maximal \(\rho\)-separated subset. Since the set is maximal,
\[
U_{\mathcal E}
\subseteq
\bigcup_{i=1}^N \bigl((\vz_0+\mathcal E)\cap B(\vz_i,\rho)\bigr).
\]
Taking \(d_0\)-dimensional Euclidean volumes inside the affine space \(\vz_0+\mathcal E\), the standard ball-volume comparison (e.g.\ Corollary 4.2.13 in \cite{vershynin2018high}) yields
\begin{equation}
\label{eq:finite-direct-N-lb}
N\ge \left(\frac{r_{\mathrm{loc}}}{\rho}\right)^{d_0}.
\end{equation}

For \(i\neq j\), \eqref{eq:finite-direct-colip} and
\eqref{eq:local-prediction-error} give
\begin{align*}
\|\hat{\vp}(\vz_i)-\hat{\vp}(\vz_j)\|_2
&\ge
\|\vp(\vz_i)-\vp(\vz_j)\|_2
-
\|\hat{\vp}(\vz_i)-\vp(\vz_i)\|_2
-
\|\hat{\vp}(\vz_j)-\vp(\vz_j)\|_2 \\
&\ge
\alpha \|\vz_i-\vz_j\|_2 - 2\delta \\
&\ge
\alpha\rho-2\delta.
\end{align*}
Using \eqref{eq:local-decoder-lipschitz}, we obtain
\[
\|\vmu_\star(\vz_i)-\vmu_\star(\vz_j)\|_2
\ge
\lambda(\rho),
\qquad
\lambda(\rho):=\frac{\alpha\rho-2\delta}{L}.
\]

Let \(\Pi_{\calS}\) be the orthogonal projection onto \(\calS\), and set
\[
\vy_i:=\Pi_{\calS}\vmu_\star(\vz_i)\in\calS,
\qquad
\vu_i:=\vmu_\star(\vz_i)-\vy_i\in\calT^\perp.
\]
By \eqref{eq:local-simplex-distance},
\[
\|\vu_i\|_2=\operatorname{dist}\!\bigl(\vmu_\star(\vz_i),\calS\bigr)\le \varepsilon.
\]
Since \(\vy_i-\vy_j\in\calT\) and \(\vu_i-\vu_j\in\calT^\perp\), the Pythagorean theorem gives
\[
\|\vy_i-\vy_j\|_2^2
=
\|\vmu_\star(\vz_i)-\vmu_\star(\vz_j)\|_2^2
-
\|\vu_i-\vu_j\|_2^2
\ge
\lambda(\rho)^2-4\varepsilon^2.
\]
If \(\lambda(\rho)>2\varepsilon\), the points \(\vy_1,\dots,\vy_N\) are therefore \(\Delta(\rho):=\sqrt{\lambda(\rho)^2-4\varepsilon^2}\)-separated in~\(\calS\).

Setting \(\vc_0:=\Pi_{\calS}\vmu_\star(\vz_0)\), the \(1\)-Lipschitz property of \(\Pi_{\calS}\) and \eqref{eq:local-mu-diameter} give \(\|\vy_i-\vc_0\|_2\le M\), so
\[
\{\vy_1,\dots,\vy_N\}\subseteq \calS\cap B(\vc_0,M).
\]

Assume now that \(\lambda(\rho)>2\varepsilon\). The balls \(\calS\cap B(\vy_i,\Delta(\rho)/2)\), \(i=1,\dots,N\), are pairwise disjoint and contained in \(\calS\cap B(\vc_0,M+\Delta(\rho)/2)\). Comparing \(k_\star\)-dimensional Euclidean ball volumes once more (Corollary 4.2.13 in \cite{vershynin2018high}) gives
\[
N\le\bigl(1+2M/\Delta(\rho)\bigr)^{k_\star}.
\]
Combining with \eqref{eq:finite-direct-N-lb} yields \eqref{eq:finite-direct-packing}, proving part~(b).

Finally, fix \(0<\rho<r_{\mathrm{loc}}\) with \(\alpha\rho>2\delta\), and define
\[
A(\rho):=
\frac{2M}{
\left(r_{\mathrm{loc}}/\rho\right)^{d_0/k_\star}-1
}.
\]
If \(\lambda(\rho)^2\le A(\rho)^2\), then \eqref{eq:finite-direct-eps-lower} is immediate,
since its right-hand side is zero. Assume therefore that \(\lambda(\rho)^2>A(\rho)^2\).
If, toward a contradiction, \eqref{eq:finite-direct-eps-lower} failed, then we would have
\[
4\varepsilon^2<\lambda(\rho)^2-A(\rho)^2.
\]
In particular, \(\lambda(\rho)>2\varepsilon\), so part~(b) applies and yields
\[
\left(\frac{r_{\mathrm{loc}}}{\rho}\right)^{d_0/k_\star}
\le
1+\frac{2M}{\sqrt{\lambda(\rho)^2-4\varepsilon^2}}.
\]
Rearranging,
\[
\sqrt{\lambda(\rho)^2-4\varepsilon^2}
\le
\frac{2M}{
\left(r_{\mathrm{loc}}/\rho\right)^{d_0/k_\star}-1
}
=
A(\rho),
\]
which contradicts \(4\varepsilon^2<\lambda(\rho)^2-A(\rho)^2\). Hence \(4\varepsilon^2\ge\lambda(\rho)^2-A(\rho)^2\), i.e.,
\[
\varepsilon
\ge
\frac12\bigl[\lambda(\rho)^2-A(\rho)^2\bigr]_+^{1/2},
\]
which is \eqref{eq:finite-direct-eps-lower}. This proves part~(c).
\end{proof}

\paragraph{From Theorem~\ref{thm:finite-context-packing-clean} to the informal Theorem~\ref{thm:task-distance-lower}.}
The informal bound stated in Sec.~\ref{sec:generalization-mode} is the special case
of part~(c) obtained by fixing the inner radius to a constant fraction of the
outer radius and discarding the squares.

\begin{cor}[Distance-to-task-subspace lower bound; formal form of Theorem~\ref{thm:task-distance-lower}]
\label{cor:task-distance-lower-formal}
Under the hypotheses of Theorem~\ref{thm:finite-context-packing-clean}, choose
\(\rho:=r_{\mathrm{loc}}/2\) and assume \(\alpha\rho>2\delta\), where
\(\alpha=\sigma_0/2\) is the local co-Lipschitz constant from
Theorem~\ref{thm:finite-context-packing-clean}. Then
\begin{equation}
\label{eq:informal-form-bound}
\sup_{\vz\in \mathcal U}\operatorname{dist}\bigl(\vmu_\star(\vz),\calS\bigr)
\;\ge\;
\frac{c-\delta}{L}\;-\;\frac{C}{2^{d_0/k_\star}-1},
\end{equation}
with
\(c:=\alpha r_{\mathrm{loc}}/4\) and \(C:=M\).
\end{cor}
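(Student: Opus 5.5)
We plan to derive the corollary directly from part~(c) of Theorem~\ref{thm:finite-context-packing-clean}, specialized to $\rho=r_{\mathrm{loc}}/2$. With this choice, $(r_{\mathrm{loc}}/\rho)^{d_0/k_\star}=2^{d_0/k_\star}$. Writing $\lambda:=(\alpha\rho-2\delta)/L=(\alpha r_{\mathrm{loc}}/2-2\delta)/L$ and $A:=2M/(2^{d_0/k_\star}-1)$, part~(c) gives
\[
2\varepsilon\ge[\lambda^2-A^2]_+^{1/2},
\qquad
\varepsilon=\sup_{\vz\in\mathcal U}\operatorname{dist}(\vmu_\star(\vz),\calS).
\]
The hypothesis $\alpha\rho>2\delta$ is exactly what part~(c) requires, and it also makes $\lambda>0$.

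Next we remove the squares. For $\lambda\ge A\ge0$ we have $\lambda^2-A^2=(\lambda-A)(\lambda+A)\ge(\lambda-A)^2$, so $\sqrt{\lambda^2-A^2}\ge\lambda-A$. Hence $2\varepsilon\ge\lambda-A$. If instead $\lambda<A$, the target inequality has a negative right-hand side and holds trivially because $\varepsilon\ge0$. In either case
\[
\varepsilon\ge\tfrac12(\lambda-A)=\frac{\alpha r_{\mathrm{loc}}/4-\delta}{L}-\frac{M}{2^{d_0/k_\star}-1}.
\]
This is \eqref{eq:informal-form-bound} with $c=\alpha r_{\mathrm{loc}}/4$ and $C=M$.

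The main point requiring care is the bookkeeping of constants. The factor $\tfrac12$ from $2\varepsilon$ turns $\alpha\rho/2=\alpha r_{\mathrm{loc}}/4$ into $c$, turns $2\delta/(2L)$ into $\delta/L$, and turns $2M/2$ into $M$. We must also confirm that the supremum over $\mathcal U$ in the corollary matches the $\varepsilon$ of Definition~\ref{def:local-decoder-regularity}, which it does by definition. No further obstacle is expected, since all of the analytic content (the co-Lipschitz estimate and the two volume comparisons) is already contained in Theorem~\ref{thm:finite-context-packing-clean}.
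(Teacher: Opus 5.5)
Your proposal is correct and follows the paper's own proof essentially line for line. You specialize part~(c) of Theorem~\ref{thm:finite-context-packing-clean} to $\rho=r_{\mathrm{loc}}/2$, drop the squares via $\sqrt{\lambda^2-A^2}\ge\lambda-A$ for $\lambda\ge A\ge 0$ (handling $\lambda<A$ trivially), and halve to obtain $c=\alpha r_{\mathrm{loc}}/4$ and $C=M$.
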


\begin{proof}
With \(\rho=r_{\mathrm{loc}}/2\) one has \((r_{\mathrm{loc}}/\rho)^{d_0/k_\star}=2^{d_0/k_\star}\), and part~(c) of Theorem~\ref{thm:finite-context-packing-clean} reduces to
\[
\varepsilon
\;\ge\;
\frac{1}{2}
\Bigl[\,A^2-B^2\,\Bigr]_+^{1/2},
\qquad
A:=\frac{\alpha r_{\mathrm{loc}}/2-2\delta}{L},
\quad
B:=\frac{2M}{2^{d_0/k_\star}-1}.
\]
Since \(\sqrt{A^2-B^2}\ge A-B\) whenever \(A\ge B\ge 0\) (as \(A+B\ge A-B\)), we obtain
\[
\varepsilon
\;\ge\;
\tfrac12 [A-B]_+
\;\ge\;
\frac{\alpha r_{\mathrm{loc}}/4-\delta}{L}-\frac{M}{2^{d_0/k_\star}-1}.
\]
Identifying the right-hand side with \(\frac{c-\delta}{L}-\frac{C}{2^{d_0/k_\star}-1}\) yields \eqref{eq:informal-form-bound}.
\end{proof}

\noindent
Corollary~\ref{cor:task-distance-lower-formal} thus instantiates the informal
Theorem~\ref{thm:task-distance-lower} of the main text: the local constants
\(c\) and \(C\) depend on the local co-Lipschitz constant \(\alpha\), the
confinement radius \(r_{\mathrm{loc}}\), and the local diameter \(M\), but not
on the latent dimensions \(d_0\) or \(k_\star\). The decay in
\(2^{d_0/k_\star}\) is therefore the genuine source of the obstruction: when
\(d_0\gg k_\star\), the second term in \eqref{eq:informal-form-bound} is
negligible, leaving a distance-to-\(\calS\) lower bound of order
\((c-\delta)/L\). Theorem~\ref{thm:finite-context-packing-clean} is strictly
stronger, since it provides a packing inequality that holds at every scale
\(\rho<r_{\mathrm{loc}}\) and quantifies the interaction between the prediction
error \(\delta\), the decoder Lipschitz constant \(L\), and the confinement
radius \(\varepsilon\).


\section{Empirical Validation of Task-Vector Properties}\label{app:task-vec-computation}

This section provides the computational and empirical details behind the evaluation of the task-vector properties \texttt{P0}--\texttt{P3}.
\begin{itemize}
    \item \Secref{sec:extracting-hiddens} describes the task-vector extraction procedure used throughout our analysis.
    \item \Secref{app:additive-separability} reports the empirical evidence for the long-context properties \texttt{P0}--\texttt{P1}, complemented by an OLS probe decomposition for experiment~\texttt{E3} in \Secref{app:ols-probe-e3}.
    \item \Secref{app:finite-context-properties} provides the computational and empirical details for the finite-context properties \texttt{P2}--\texttt{P3}.
    \item \Secref{app:alpha_injection} details the simplex-intervention experiment of \Secref{sec:task-vec:interv}.
\end{itemize}

\subsection{Extracting and Estimating Task Vectors}
\label{sec:extracting-hiddens}

This subsection records the precise procedure used to extract residual-stream hidden states from the trained transformers and to estimate the task vectors $\hat{\vtheta}_k$ that play the role of simplex anchors throughout \Secref{sec:task-vectors}.
The extraction recipe and averaging-based estimator below are reused without modification by every empirical analysis in \Secref{app:task-vec-computation} (variance and additivity tests, simplex-projection coefficients $\beta_{t,k}$, simplex interventions) and by the orthogonal-subspace interventions in \Secref{app:two-modes-evidence}.
We write $\mH^{(\ell)} = [\vh_1^{(\ell)}, \dots, \vh_T^{(\ell)}]^\top$ for the
collection of residual stream representations across positions.
We consider a Pre-LayerNorm (LN) decoder-only transformer, where each layer updates a residual stream $\mH^{(\ell-1)} \in \R^{T \times d}$ via
\begin{align*}
\widetilde{\mH}^{(\ell)}
&= \mH^{(\ell-1)} + \texttt{MHA}\bigl(\texttt{LN}(\mH^{(\ell-1)})\bigr),
\\
\mH^{(\ell)}
&= \widetilde{\mH}^{(\ell)} + \texttt{MLP}\bigl(\texttt{LN}(\widetilde{\mH}^{(\ell)})\bigr).
\end{align*}

We extract representations from the residual stream after the full layer
update, i.e., from $\mH^{(\ell)}$, and for each position $t$ collect the
layer-wise trajectory $\{\vh_t^{(\ell)}\}_{\ell=0}^{L-1}$.

\paragraph{Extraction positions.}
For discrete-token tasks (\texttt{E1}, \texttt{E3}, \texttt{E4}), the input
sequence is $[s_1, \dots, s_T]$ and we extract at all positions, $t \in \{1, \dots, T\}$. For the linear regression task
(\texttt{E2}), the sequence alternates between inputs and targets,
$[\vx_1, y_1, \vx_2, y_2, \dots, \vx_{T/2}, y_{T/2}]$; we extract only at
input positions $\vx_t$, which are the prediction locations for $y_t$.


Having specified the residual-stream extraction recipe above, we now describe how the task-vector estimates $\hat{\vtheta}_k$ themselves are computed from the extracted hidden states.
This averaging-based estimator produces every $\hat{\vtheta}_k$ used in the empirical evaluations of \Secref{app:additive-separability}--\Secref{app:posterior-alignment-layers} and in the simplex-/orthogonal-subspace interventions of \Secref{app:alpha_injection} and \Secref{app:intervention_perlayer}; it is faithful to Definition~\ref{def:task-vec} in the long-context limit.

\paragraph{Estimation by averaging.} \label{app:task-vec-extract}
We estimate task vectors directly from the definition in~Eq.~\ref{eq:task-vec:defn}.
For each layer $\ell$ and a window of late context positions $\mathcal{T}_{\mathrm{est}}$, we pool hidden states across positions and sequences for each task $k$:
\[
\hat{\vtheta}_k
\;=\;
\underbrace{\frac{1}{|\mathcal{T}_{\mathrm{est}}|\,B}
  \sum_{t \in \mathcal{T}_{\mathrm{est}}}\sum_{i:\,z_i=k} \vh_{t,i}}_{\text{task-conditional mean}}
\;-\;
\underbrace{\frac{1}{K}\sum_{k'=1}^{K}\!\Bigl(\frac{1}{|\mathcal{T}_{\mathrm{est}}|\,B}
  \sum_{t \in \mathcal{T}_{\mathrm{est}}}\sum_{i:\,z_i=k'} \vh_{t,i}\Bigr)}_{\text{grand mean}},
\]
where the sums over $i$ run over all $B$ sequences assigned to each task.
By construction $\sum_{k} \hat{\vtheta}_k = \vzero$.
$\mathcal{T}_{\mathrm{est}}$ is a contiguous window of $\approx 15$--$30$ \emph{late} context positions, chosen so that the Bayesian posterior $\P(z\mid s_{\le t})$ has largely concentrated and $\E[\vh_t\mid z=k]$ approximates its long-context limit (Eq.~\ref{eq:task-vec:defn}); empirically, this is the regime where \texttt{P0}--\texttt{P1} hold to high accuracy (\Secref{app:additive-separability}, Figure~\ref{fig:task_vector_r2}).

\subsection{Long-context stability and decoupling \texttt{P0-P1}}
\label{app:additive-separability}

This subsection provides the empirical details behind the long-context properties \texttt{P0}--\texttt{P1}: we report the residual variance ratio across context positions (\Secref{app:residual-variance}), showing that task identity and the current token account for most of the variance in hidden states, and we test whether the conditional mean decomposes additively via ANOVA\,/\,ANCOVA (\Secref{app:additive-sep-test}).

\subsubsection{Residual variance ratio}\label{app:residual-variance}

We measure long-context stability (\texttt{P0}) and decoupling (\texttt{P1}) jointly through the residual variance ratio $\widehat{\mathrm{Var}}(\vh_t \mid z, s_t) / \widehat{\mathrm{Var}}(\vh_t)$, the fraction of hidden-state variance that is \emph{not} explained by the task identity $z$ together with the current token $s_t$.
A small ratio at large $t$ indicates that the task and current token jointly account for almost all variation in $\vh_t$, which is the empirical content of \texttt{P0}--\texttt{P1}.
This complements Table~\ref{tab:p0_p1_combined}(a) in the main text (which only reports the value at the last position) by displaying the full layer-by-layer trajectory across context positions for \texttt{E1}--\texttt{E3}.

\begin{figure}[h]
\centering
\includegraphics[width=0.9\linewidth]{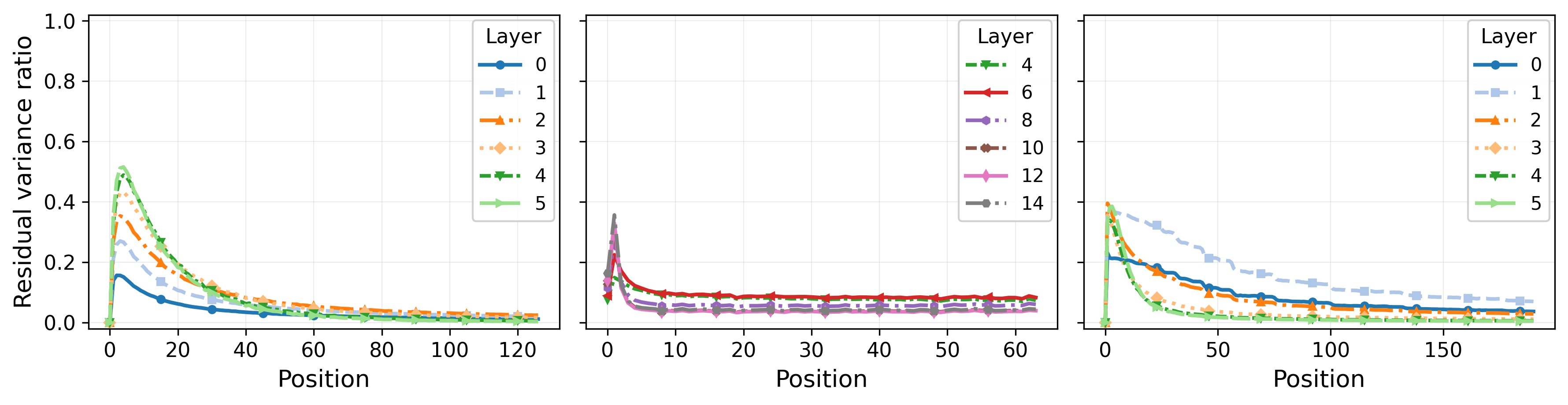}
\caption{\textbf{Latent task and last token explain most variance in hidden states at large $t$.}
Residual variance ratio $\mathrm{SS}_{\mathrm{within}} / \mathrm{SS}_{\mathrm{total}} = \widehat{\mathrm{Var}}(\vh_t \mid z, s_t) / \widehat{\mathrm{Var}}(\vh_t)$ as a function of context position for \texttt{E1} (left), \texttt{E2} (middle), and \texttt{E3} (right).
Each curve corresponds to one layer (post-MLP).
The ratio decreases as context grows (the model gradually infers the latent) and is smaller in later layers (depth aids task inference).}
\label{fig:task_vector_r2}
\end{figure}

Figure~\ref{fig:task_vector_r2} reports the full residual-ratio curves across context positions; Table~\ref{tab:p0_p1_combined}(a) (main text) summarizes the values at the last position.

\subsubsection{Testing additive separability}\label{app:additive-sep-test}

We test whether the conditional mean $\hat \vmu_{t,k,a} := \hat \E[\vh_t \mid z=k, s_t=a]$ decomposes additively as $\hat \vmu_{t,k,a} \approx \hat \vmu_t + \hat \vtheta_k + \hat \vnu_a$, where the task effect $\hat \vtheta_k$ and token effect $\hat \vnu_a$ do not interact. The key idea is to compare an \emph{additive} model against a \emph{full} model that includes a task--token interaction term.

Fix a layer $\ell$ and position $t$. For experiments \texttt{E1} and \texttt{E3} with discrete tokens $a \in \gV$, the additive model is
\[
\vh_t \;=\; \vmu_t + \vtheta_k + \vnu_a,
\]
while the full model allows a separate mean for each $(k, a)$ cell. This is a standard two-way ANOVA. A two-way decomposition of the between-cell variance gives
\[
\underbrace{\sum_{k,a} \|\hat \vmu_{t,k,a} - \hat \vmu_t\|^2}_{\mathrm{SS}_{\mathrm{between}}}
\;=\;
\underbrace{V\sum_k \|\hat \vtheta_k\|^2}_{\mathrm{SS}_{\mathrm{task}}}
\;+\;
\underbrace{K \sum_a \|\hat \vnu_a\|^2}_{\mathrm{SS}_{\mathrm{token}}}
\;+\;
\underbrace{\sum_{k,a}\|\hat \vmu_{t,k,a} - \hat \vmu_t - \hat \vtheta_k - \hat \vnu_a\|^2}_{\mathrm{SS}_{\mathrm{interaction}}},
\]
where $\hat \vtheta_k = (1/V) \sum_a \hat \vmu_{t,k,a} - \hat \vmu_t$
and $\hat \vnu_a = (1/K) \sum_k \hat \vmu_{t,k,a} - \hat \vmu_t$.

For experiment \texttt{E2}, where the current token $\vx_t \in \R^{D}$ is a continuous covariate, the additive model is
\[
\vh_t \;=\; \vmu_t + \vtheta_k + \mB\, \vx_t,
\]
where $\mB$ is a shared slope matrix, while the full model replaces $\mB$ with task-specific slopes $\mB_k$. Both models are fit by ordinary least squares (ANCOVA).

In both cases, we measure the \emph{interaction proportion}: the fraction of explained variance attributable to the interaction term. For discrete tokens this is $\eta^2_{\mathrm{interaction}} = \mathrm{SS}_{\mathrm{interaction}} / \mathrm{SS}_{\mathrm{between}}$. For continuous covariates, we define the analogous quantity $\eta^2_{\mathrm{interaction}} = (R^2_{\mathrm{full}} - R^2_{\mathrm{additive}}) / R^2_{\mathrm{full}}$, i.e.\ the fraction of the full model's explanatory power that is due to task-specific slopes. A small $\eta^2_{\mathrm{interaction}}$ confirms that the additive model is accurate.

\begin{figure}[h]
\centering
\includegraphics[width=\linewidth]{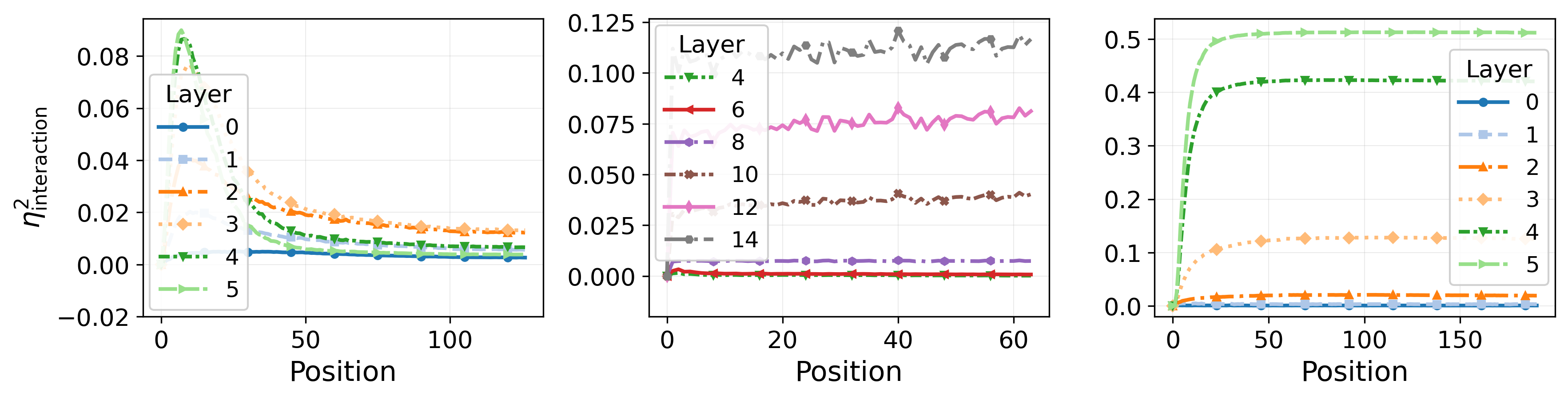}
\caption{\textbf{Additive separability of task and token effects.} All three panels show the interaction proportion $\eta^2_{\mathrm{interaction}}$: the fraction of explained variance due to the task--token interaction term. Left (\texttt{E1}): two-way ANOVA on discrete tokens. Middle (\texttt{E2}): ANCOVA on continuous covariates, $\eta^2_{\mathrm{interaction}} = (R^2_{\mathrm{full}} - R^2_{\mathrm{additive}}) / R^2_{\mathrm{full}}$. Right (\texttt{E3}): two-way ANOVA on discrete tokens. Small values confirm that the additive model is accurate. Additivity holds broadly, with significant departures in late layers of \texttt{E2} and \texttt{E3}.}
\label{fig:additive_sep}
\end{figure}


\subsection{OLS probe decomposition for experiment \texttt{E3}}
\label{app:ols-probe-e3}


This subsection complements the aggregate analysis in \Secref{sec:emp-task-vector} with a layerwise variance decomposition of the hidden states in \texttt{E3}.
The simplex-projected interpolation $R^2$ (Figure~\ref{fig:averaging_r2}) plateaus at $0.60$ and $0.52$ in the last two layers, and the ANOVA interaction proportion (Table~\ref{tab:p0_p1_combined}(b)) flags a parallel breakdown of additivity, but neither metric identifies what the residual interaction encodes.
We regress each hidden state on three factors (the task label, the current token, and the model's own output logits) and use the logits as a diagnostic for residual-stream directions aligned with the next-token distribution.
Including the logits absorbs essentially all of the late-layer interaction, indicating that the apparent breakdown of additivity is \emph{not a genuine task--token interaction} but reflects hidden-state components devoted to computing the next-token prediction.
The decomposition further shows that task identity is most cleanly represented in middle layers and is progressively subsumed into the next-token signal in later layers, providing a quantitative account of why task vectors are best extracted from middle layers~\citep{hendel2023context}.

\paragraph{Probe construction.}
We regress the residual-stream representation $\vh_t$ on three feature groups computed from the same forward pass: the task label $z$, the current token $s_t$, and the model's output logit $\hat{\boldsymbol{\ell}}_t$.
The first two are the variables that property~\texttt{P1} predicts to enter $\vh_t$ additively, so any contribution to the late-layer interaction must be absorbed by the third.
We use $\hat{\boldsymbol{\ell}}_t$ because it is precisely the direction of the residual stream that the unembedding reads off, and therefore serves as a known summary of the next-token-prediction signal carried by $\vh_t$.
Before fitting we subtract the per-position mean across sequences, $\bar{\vh}_t \coloneqq \vh_t - \bar{\vh}_{\cdot,t}$, to suppress position-dependent variation induced by the rotary positional code.
The OLS model is
\begin{equation}
  \bar{\vh}_t
  \;=\;
  \mW_z\,\tilde{\ve}_z
  \;+\;
  \mW_s\,\tilde{\ve}_{s_t}
  \;+\;
  \mW_{\mathrm{logit}}\,\hat{\boldsymbol{\ell}}_t
  \;+\;
  \vb
  \;+\;
  \vepsilon_t,
  \label{eq:ols-probe}
\end{equation}
where $\tilde{\ve}_z\in\{0,1\}^{K-1}$ and $\tilde{\ve}_{s_t}\in\{0,1\}^{V-1}$ are the one-hot encodings of $z\in[K]$ and $s_t$ with the last column dropped (to avoid collinearity with the intercept), $\hat{\boldsymbol{\ell}}_t\in\R^V$ is the output logit vector, and $\vb\in\R^d$ is an intercept.
We fit on $2^{13}$ sequences from the fully trained model for experiment \texttt{E3} ($K{=}3$ major tasks, no minor tasks), evaluated at positions $t\in[170,190)$.

\paragraph{Marginal and partial \texorpdfstring{$R^2$}{R2}.}
For a feature group $A$, the \emph{marginal $R^2$} is the $R^2$ from regressing $\bar{\vh}_t$ on $A$ alone (with intercept), and the \emph{partial $R^2$}
\begin{equation}
  \Delta R^2_A
  \;=\;
  \frac{R^2_{\mathrm{joint}} - R^2_{-A}}{1 - R^2_{-A}}
  \label{eq:partial-r2}
\end{equation}
quantifies the unique contribution of $A$, where $R^2_{-A}$ is the $R^2$ of the reduced model obtained from Eq.~\ref{eq:ols-probe} by dropping the regressors in $A$ while keeping the intercept and the other two groups.
Reporting both is necessary because the regressors overlap: $\hat{\boldsymbol{\ell}}_t$ is a deterministic linear function of the deepest hidden state, so it shares variance with both the task label ($R^2_{\mathrm{task}\leftrightarrow\mathrm{logit}}{=}0.36$) and the current token ($R^2_{\mathrm{token}\leftrightarrow\mathrm{logit}}{=}0.35$); the task and token indicators are nearly orthogonal ($R^2_{\mathrm{task}\leftrightarrow\mathrm{token}}{=}0.028$).
The partial $R^2$'s for task and logits should therefore be read as contributions conditional on the linear span of the other groups.

\begin{figure}[H]
\centering
\includegraphics[width=0.9\linewidth]{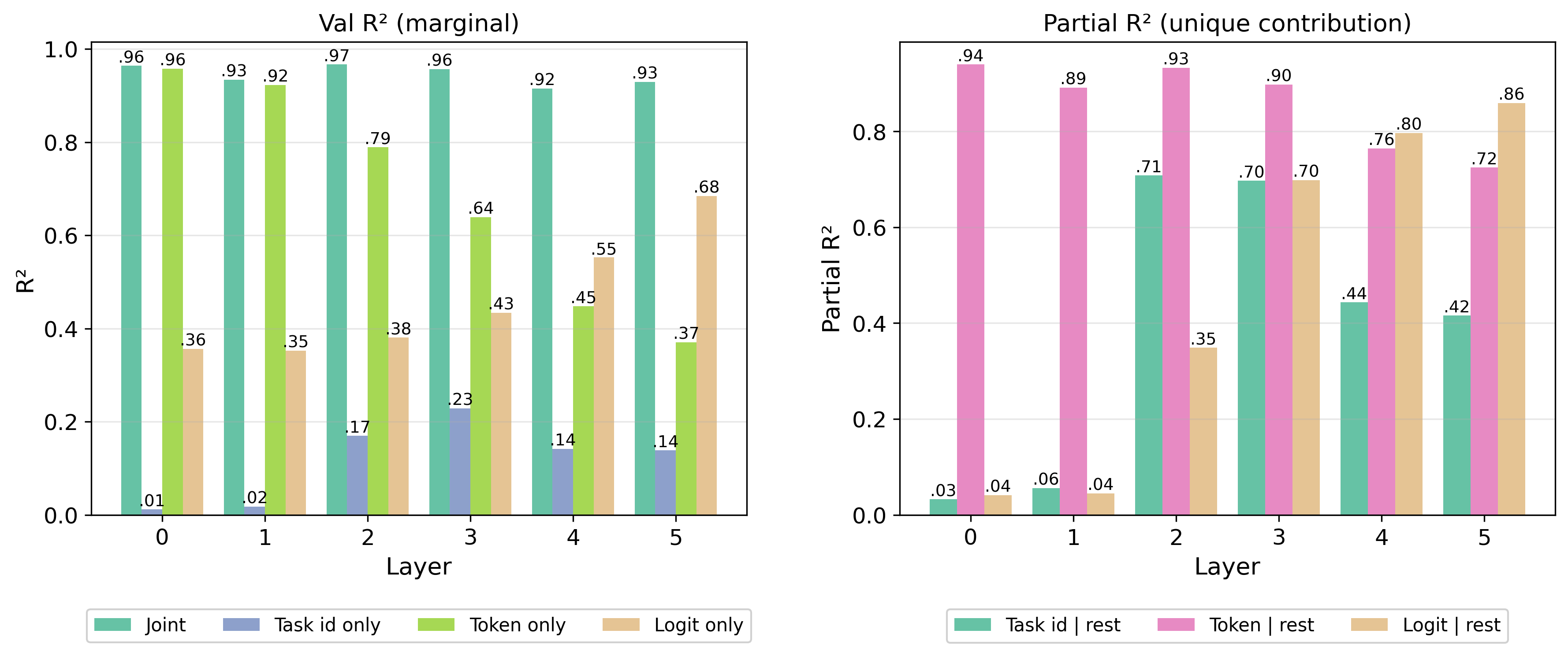}
\caption{
\textbf{OLS probe $R^2$ decomposition for experiment \texttt{E3}.}
\emph{Left}: marginal $R^2$ of each feature group alone.
\emph{Right}: partial $R^2$ (Eq.~\ref{eq:partial-r2}) after controlling for the other two groups.
Results are shown layer-by-layer (post-MLP, layers~$0$--$5$) at context positions $t\in[170,190)$, with per-position mean subtraction applied to $\vh_t$ before fitting.
}
\label{fig:ols-probe-e3}
\end{figure}

\paragraph{Layerwise findings.}
The decomposition pinpoints the source of the late-layer non-additivity: the next-token-prediction signal $\hat{\boldsymbol{\ell}}_t$ \emph{progressively dominates the residual stream at deeper layers}, absorbing both task and token information and thereby driving the apparent breakdown of additivity flagged by Table~\ref{tab:p0_p1_combined}(b) and Figure~\ref{fig:averaging_r2}.
As a corollary, the task label is encoded as a near-independent direction only at middle layers (2--3), while the current token retains a dominant unique contribution at all depths.
Quantitatively, the joint model attains $R^2_{\mathrm{joint}}\in[0.92,0.97]$ at every layer, so the three groups together give a near-complete linear account of $\bar{\vh}_t$.
The next-token-prediction signal $\hat{\boldsymbol{\ell}}_t$ rises monotonically with depth: $\Delta R^2 \approx 0.04$--$0.05$ at layers~0--1, jumping to $0.36$ at layer~2 and reaching $0.86$ at layer~5.
The task label has negligible unique contribution at layers~0--1 ($\Delta R^2 \approx 0.03$--$0.05$), peaks at layers~2--3 ($\Delta R^2 \approx 0.70$--$0.71$), and decays to $\sim\!0.42$ by layer~5.
The current token carries the largest unique contribution at all depths (partial $R^2 \approx 0.89$--$0.94$ through layer~3, declining to $\sim\!0.73$ at layer~5), consistent with property~\texttt{P1}.
The mechanism is now transparent: as $\vh_t$ approaches the unembedding, $\hat{\boldsymbol{\ell}}_t$ absorbs information about both $z$ and $s_t$, the task label becomes redundant given $\hat{\boldsymbol{\ell}}_t$, and the additive interpolation model of Eq.~\ref{eq:additive_interp_main} (which has no access to $\hat{\boldsymbol{\ell}}_t$) loses fitting capacity, which both explains the late-layer non-additivity and identifies layers~2--3 as the natural depth for task-vector extraction~\citep{hendel2023context}.

\subsection{Evaluating Finite-Context Properties (\texttt{P2}--\texttt{P3})}\label{app:finite-context-properties}

This subsection provides the computational and empirical details behind the finite-context properties \texttt{P2}--\texttt{P3}: closed-form expressions for the oracle Bayesian posterior $\alpha_{t,k}$ in each experimental setting (\Secref{app:posterior-computation}), the extraction of the simplex-constrained projection coefficients $\beta_{t,k}$ from hidden states (\Secref{app:lambda-computation}), and additional posterior-alignment results across layers and with unconstrained coefficients (\Secref{app:posterior-alignment-layers}).

%

\subsubsection{Bayesian Posterior Computation under \texttt{M1}}\label{app:posterior-computation}

This subsubsection derives the closed-form expressions for the oracle Bayesian posterior $\alpha_{t,k} := \P(z = k \mid s_1, \ldots, s_t)$ that serve as the comparison target throughout the paper.
Under the task-retrieval mode \texttt{M1} (\Secref{sec:bayesian-mode}) we take the prior to be uniform over the finite training support $\gZ_{\mathrm{train}} = [K]$, so that the integral in Eq.~\ref{eq:bayes} reduces to a finite sum that admits a closed form for each of \texttt{E1}--\texttt{E3}.
The resulting $\alpha_{t,k}$ play two roles in our analyses: they are the targets against which the empirical simplex coefficients $\beta_{t,k}$ are compared to assess Property~\texttt{P3} (\Secref{sec:emp-task-vector}, Figure~\ref{fig:beta_alpha_trajectory}; \Secref{app:posterior-alignment-layers}), and they define the \texttt{M1} predictive distribution used in the KL phase-transition analysis of \Secref{sec:two-mode}.
Throughout, we use the task index convention to refer to the latent variables for simplicity.

\paragraph{Biased Dice (\texttt{E1})} 

Recall that each task $z = k$ is parameterized by a probability vector
$\vp_k = (p_{k,1}, \ldots, p_{k,V}) \in \Delta^{V-1}$ over $V$ outcomes
(the ``faces of the die'').  Tokens $s_1, s_2, \ldots$ are drawn i.i.d.\
from $\mathrm{Categorical}(\vp_k)$.
The posterior at position $t$ factorizes as
\[
\P(z = k \mid s_{\le t})
\;=\; \frac{\P\left(s_{\le t} \mid z=k\right)}{\sum_{k=1}^K \P\left(s_{\le t} \mid z=k\right)} = \frac{\prod_{\tau=1}^t p_{k, s_{\tau}}}{\sum_{k'=1}^K \prod_{\tau=1}^t p_{k', s_{\tau}}}. 
\]
Equivalently, let 
\begin{equation}
\label{eq:token-count}
    n_a(t) := \sum_{\tau=1}^t \mathbf{1}\{s_\tau = a\}
\end{equation}
be the count of token $a$ in the first $t$ tokens, the log-likelihood is
\[
\log \P(s_{\le t} \mid z = k)
\;=\;
\sum_{a=1}^{V} n_a(t)\, \log p_{k,a}.
\]
The posterior is obtained by normalizing across all $K$ tasks:
\begin{equation}
\alpha_{t,k}
\;=\;
\frac{ \prod_{a=1}^{V} p_{k,a}^{\,n_a(t)}}
     {\sum_{k'=1}^{K} \prod_{a=1}^{V} p_{k',a}^{\,n_a(t)}}.
\end{equation}

\paragraph{Noisy Linear Regression (\texttt{E2})} 

Recall that each task $z = k$ is parameterized by a weight vector $\vw_k \in \R^D$.  
The sequence consists of input--output pairs
$(\vx_1, y_1), (\vx_2, y_2), \ldots$ where
$\vx_t \sim \mathcal{N}(\vzero, \mI_D)$ and
$y_t = \vw_k^\top \vx_t + \varepsilon_t$ with
$\varepsilon_t \sim \mathcal{N}(0, \sigma^2)$ independently.

At position $t$, define $\mX_{\le t} = (\vx_1, \ldots, \vx_t)^\top \in \R^{t \times D}$
and $\vy_{\le t} = (y_1, \ldots, y_t)^\top \in \R^t$.  The cumulative
log-likelihood under task $k$ is
\[
\log p(\vy_{\le t} \mid \mX_{\le t}, z = k)
\;=\;
-\frac{t}{2}\log(2\pi\sigma^2)
\;-\;
\frac{1}{2\sigma^2} \sum_{\tau=1}^{t}
(y_\tau - \vw_k^\top \vx_\tau)^2.
\]
We define the cumulative sum of squared residuals
$\mathrm{SSR}_k(t) = \sum_{\tau=1}^{t} (y_\tau - \vw_k^\top \vx_\tau)^2$.
Because the filtering posterior at position $t$ conditions only on past observations
$(\vx_1, y_1), \ldots, (\vx_{t-1}, y_{t-1})$, the SSR is evaluated at $t{-}1$:
\begin{equation}
\alpha_{t,k}
\;=\;
\frac{\exp\bigl(-\tfrac{1}{2\sigma^2}\,\mathrm{SSR}_k(t{-}1)\bigr)}
     {\sum_{k'=1}^{K} \exp\bigl(-\tfrac{1}{2\sigma^2}\,\mathrm{SSR}_{k'}(t{-}1)\bigr)},
\end{equation}
where $\mathrm{SSR}_k(0) = 0$ (no observations available at the first position).

\paragraph{Latent Markov Chain (\texttt{E3})} 

Recall that each task $z = k$ is parameterized by a transition kernel $\mP^{(k)}$.  The state space is $\gV$.
The first tokens carry no task-discriminative information; therefore $\alpha_{t,k} = 1/K$ for
$t = 1$.  For $t > 1$, each new token $s_t$ provides a
likelihood contribution:
\[
\P(s_t = a \mid s_{<t},\, z = k)
\;=\;
\mP^{(k)}_{s_{t-1}, a},
\]
The posterior accumulates these contributions:
\[
\P(z = k \mid s_{\le t})
\;=\;
\frac{\P\left(s_{\le t} \mid z = k\right)}{\sum_{k'=1}^K\P\left(s_{\le t} \mid z = k'\right)}
\]
which yields
\begin{equation}
\alpha_{t,k}
\;=\;
\frac{\prod_{\tau=2}^{t} \mP^{(k)}_{s_{\tau-1}, s_{\tau}}}
     {\sum_{k'=1}^{K} \prod_{\tau=2}^{t} \mP^{(k')}_{s_{\tau-1}, s_{\tau}}}.
\end{equation}


\subsubsection{Computing projection coefficients $\beta_{t,k}$}\label{app:lambda-computation}

This subsubsection details the two-step projection used to extract the simplex-constrained coefficients $\beta_{t,k}$ that appear in the interpolation model of Eq.~\ref{eq:additive_interp_main} (\Secref{sec:task-vectors}).
These coefficients are the empirical proxies for the Bayesian posterior weights $\alpha_{t,k}$ used in the posterior-alignment property \texttt{P3}, and the procedure below is the one used to produce every $\beta_{t,k}$ reported in Figure~\ref{fig:beta_alpha_trajectory}.
Let $\hat{\vtheta}_1,\dots,\hat{\vtheta}_K \in \R^d$ be the centered task vectors (\Secref{sec:extracting-hiddens}), and let
\[
\bar{\vh}_t \;=\; \vh_t^{(\ell)} - \hat{\vmu}_t
\]
be the centered hidden state at position~$t$ and layer~$\ell$, where $\hat{\vmu}_t$ is the empirical grand mean.
Following the main text, we obtain $\beta_{t,k}$ by projecting $\bar{\vh}_t$ onto the task-vector simplex
$\bigl\{\sum_k \beta_k\,\hat{\vtheta}_k : \beta_k \ge 0,\,\sum_k \beta_k = 1\bigr\}$,
implemented in two steps.

\paragraph{Step 1: Affine projection.}
We first find coefficients satisfying
$\bar{\vh}_t \approx \sum_{k=1}^K \beta_{t,k}\,\hat{\vtheta}_k$ with $\sum_k \beta_{t,k} = 1$,
without enforcing nonnegativity.
Using $\hat{\vtheta}_K$ as an anchor reduces this to an unconstrained least-squares problem
in $K-1$ variables, solved via the pseudoinverse of the matrix of pairwise differences
$(\hat{\vtheta}_1 - \hat{\vtheta}_K, \cdots, \hat{\vtheta}_{K-1} - \hat{\vtheta}_K)$.
The resulting affine coefficients satisfy sum-to-one but may be negative.

\paragraph{Step 2: Simplex projection.}
To enforce $\beta_{t,k} \ge 0$, we project the affine coefficients onto $\Delta^{K-1}$
via the standard Euclidean simplex projection~\citep{duchi2008efficient}: sort in decreasing order and apply a uniform threshold so that the result is nonnegative and sums to one.
The projected coefficients $\beta_{t,k}$ are what \Secref{sec:task-vectors} refers to as the simplex-projected interpolation coefficients.

\subsubsection{Additional posterior alignment results}\label{app:posterior-alignment-layers}

Figure~\ref{fig:beta_alpha_trajectory} in \Secref{sec:task-vectors} reports the simplex-projected coefficients $\beta_{t,k}$ at a representative middle layer.
Here we provide two complementary views: alignment at later layers, and unconstrained (pre-projection) coefficients. 
Together, these views demonstrate that posterior alignment (property~\texttt{P3}) is \emph{robust}, in the sense that: it holds across layers (not just the representative middle layer of the main text) and survives removal of the simplex projection (not just under the constrained estimator), confirming that the agreement between $\beta_{t,k}$ and $\alpha_{t,k}$ reflects a property of the hidden states themselves rather than an artifact of layer choice or the projection procedure.

\paragraph{Later layers.}
At later layers the projection coefficients track the Bayesian posterior $\alpha_{t,k}$ even more closely than at the middle layer shown in the main text.

\begin{figure}[!htbp]
\centering
\includegraphics[width=\linewidth]{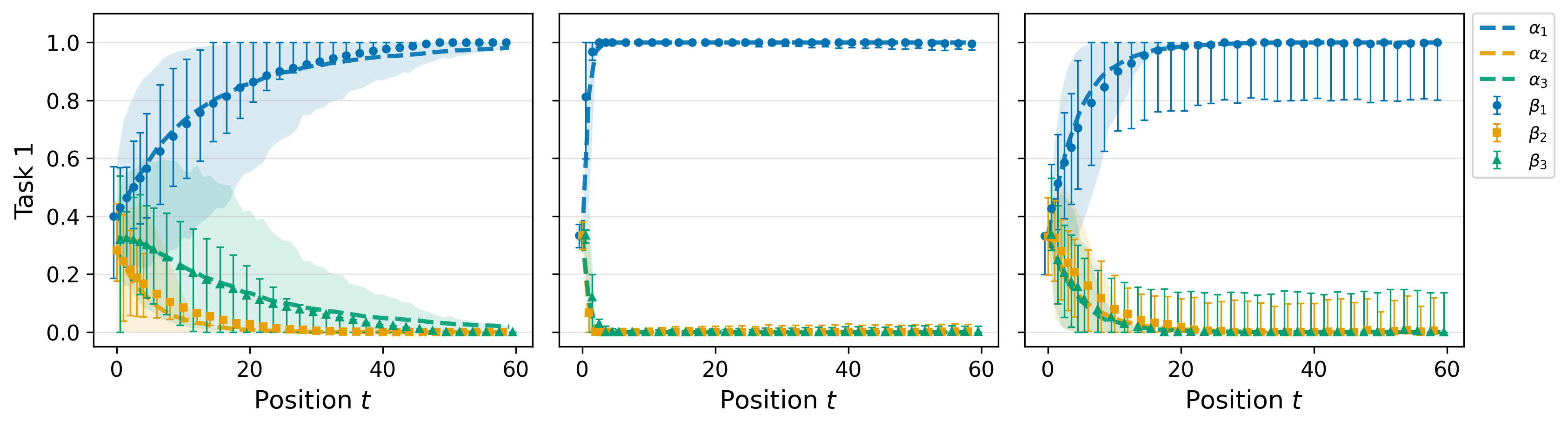}
\caption{Posterior alignment at later layers: simplex-projected coefficients $\beta_{t,k}$ (markers with error bars) vs.\ Bayesian posterior $\alpha_{t,k}$ (dashed lines with shaded 10--90\textsuperscript{th} percentile bands) for data generated from task~1.
Compared to the middle-layer results in Figure~\ref{fig:beta_alpha_trajectory}, the agreement is tighter across all three tasks.
Left: Dice (\texttt{E1}, layer~5). Middle: linear regression (\texttt{E2}, layer~15). Right: latent Markov (\texttt{E3}, layer~5).}
\label{fig:beta_alpha_later_layers}
\end{figure}

\paragraph{Affine (pre-projection) coefficients.}
For completeness, we also report the affine coefficients $\beta_{t,k}^{\mathrm{aff}}$, obtained by the same affine projection described in \Secref{app:lambda-computation} but without the simplex step (sum-to-one is enforced, but coefficients may be negative or exceed~1).
The close match with the posterior even before simplex projection confirms that the alignment is not an artifact of the projection procedure.

\begin{figure}[!htbp]
\centering
\includegraphics[width=\linewidth]{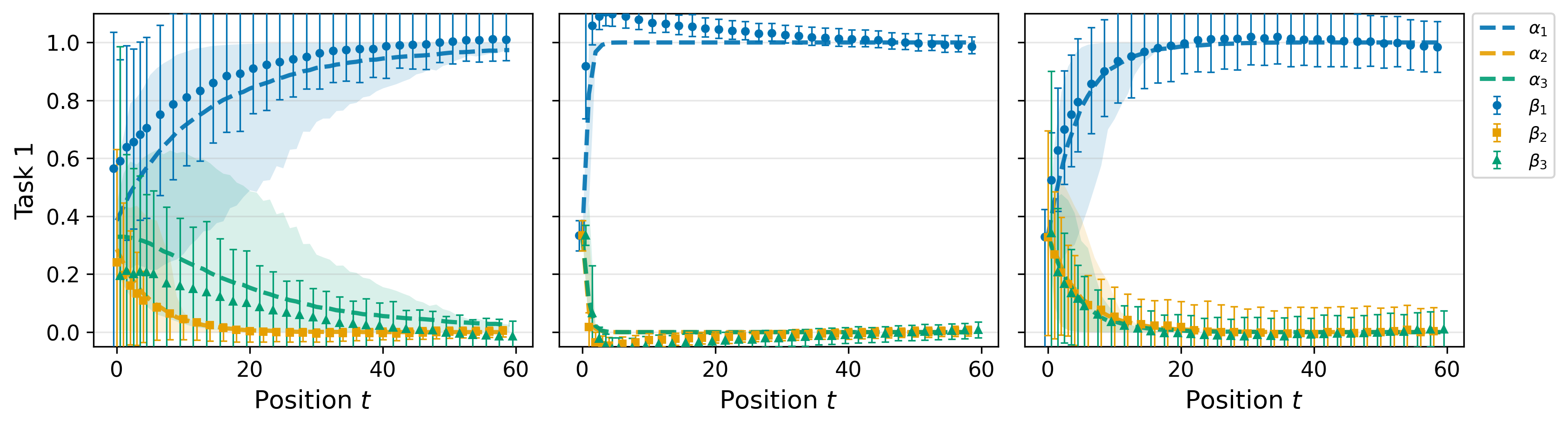}
\caption{Affine projection coefficients $\beta_{t,k}^{\mathrm{aff}}$ (markers with error bars) vs.\ Bayesian posterior $\alpha_{t,k}$ (dashed lines with shaded percentile bands).
No simplex projection is applied; coefficients may be negative or exceed~1.
Left: Dice (\texttt{E1}, layer~3). Middle: linear regression (\texttt{E2}, layer~9). Right: latent Markov (\texttt{E3}, layer~3).}
\label{fig:beta_alpha_aff}
\end{figure}

\paragraph{Summary.}
Taken together, Figures~\ref{fig:beta_alpha_later_layers} and~\ref{fig:beta_alpha_aff} show that posterior alignment $\beta_{t,k}\approx\alpha_{t,k}$ is \emph{not} contingent on the specific layer chosen for visualization in the main text, nor on the simplex-projection step in the estimator: the alignment persists at later layers (and in fact tightens) and survives when the nonnegativity step is removed (Figure~\ref{fig:beta_alpha_aff} uses $\beta_{t,k}^{\mathrm{aff}}$ in place of the simplex-projected $\beta_{t,k}$).


\subsection{Simplex Intervention Details}
\label{app:alpha_injection}

This subsection details the simplex-intervention experiment of \Secref{sec:task-vec:interv}: at a chosen layer $\ell$ we replace the task-subspace component of the residual stream with an arbitrary point $\sum_k \alpha_{t,k}^{*}\hat{\vtheta}_k$ in the task-vector simplex (with $\valpha_t^{*}$ drawn from $\Delta^{K-1}$), leave the orthogonal complement untouched, and measure how closely the steered model's output matches the Bayesian mixture prediction that $\valpha_t^{*}$ dictates.
The purpose is causal: the alignment between $\beta_{t,k}$ and $\alpha_{t,k}$ established by Property~\texttt{P3} (\Secref{sec:emp-task-vector}) is observational, and could in principle reflect a hard task-selection rule rather than a true continuous interpolation.
The simplex steering, paired with the \emph{mode-output} baseline introduced below, addresses both points: it shows that the model's output tracks $\valpha_t^{*}$ throughout the simplex (not merely near vertices), supporting the main-text claim that the task-vector subspace is the variable through which the model interpolates between training tasks.
Let
\begin{equation} \label{eq:task-projection}
    \mP_{\mathrm{task}} = \hat{\mTheta}\,(\hat{\mTheta}^\top\hat{\mTheta})^{\dagger}\hat{\mTheta}^\top
\end{equation}
denote the orthogonal projector onto the task subspace, where $\hat{\mTheta} \in \R^{d \times K}$ stacks the centered task vectors as columns and $\mA^{\dagger}$ denotes the pseudoinverse of the matrix $\mA$.  
We substitute the task-subspace component with a target interpolation, replacing the model's original coefficients $\beta_{t,k}$ with coefficients $\alpha_{t,k}^{*}$ randomly drawn from the simplex $\Delta^{K-1}$:
\[
\vh_t^{(\ell),\mathrm{int}}
=
\hat{\vmu}_t
+ \sum_{k=1}^K \alpha_{t,k}^{*}\, \hat{\vtheta}_k
+ (\mI - \mP_{\mathrm{task}})\bigl(\vh_t^{(\ell)} - \hat{\vmu}_t\bigr).
\]
This steers the hidden representation to a chosen point in the task-vector simplex while leaving the orthogonal complement unchanged.  
The model then completes the forward pass from layer~$\ell$ onward using the modified hidden state.

\paragraph{Comparing with an additional baseline: Mode-output.}
A central claim of \Secref{sec:task-vec:interv} is that task vectors implement a \emph{continuous interpolation} over tasks rather than a discrete selection of the most likely one.
The natural alternative hypothesis is a \emph{hard-selection} (nearest-neighbor) mechanism: the model identifies the dominant task $k^{*} = \arg\max_k \alpha_{t,k}^{*}$ and predicts as if that task were certain.
To rule this out, we compare simplex steering against a \emph{mode-output} baseline that realizes this hard-selection rule using the oracle Bayesian posterior over tasks. 
Across all three experiments, simplex steering tracks the target mixture \emph{uniformly across $\Delta^{K-1}$}, while the mode baseline matches the model \emph{only near the vertices} and degrades sharply in the interior.
This shows that task vectors cannot be reduced to a nearest-neighbor or hard-assignment mechanism, strengthening the main-text conclusion that they encode and combine task information continuously across the simplex.

\paragraph{Mode-output baseline.}
Given target coefficients $\valpha_t^{*}$, the dominant task is
\[
k^{*} \;=\; \arg\max_k \alpha_{t,k}^{*},
\]
and the mode-output prediction is the Bayesian-optimal next-token distribution under that single task, $\P(\cdot \mid z = k^{*}, s_t)$.
This baseline is computed entirely from the ground-truth generative model and does \emph{not} involve a forward pass through the trained model; as such, it provides the strongest realization of the hard-selection alternative defined above and an upper bound on the performance any nearest-neighbor mechanism in task space could achieve.

\paragraph{Empirical comparison.}
Figure~\ref{fig:injection_simplex_compare} reports the prediction error (KL divergence or RMSE, depending on the experiment) of simplex steering and the mode-output baseline against the target mixture across $\Delta^{K-1}$ for \texttt{E1}--\texttt{E3}.
Simplex steering attains uniformly low error throughout the simplex, including the interior region where multiple tasks contribute comparably and any hard-selection rule must discard substantial posterior mass.
The mode baseline, by contrast, matches the target only near the vertices and degrades sharply toward the interior, with error growing roughly with the entropy of $\valpha_t^{*}$.
This pattern is precisely what the interpolation hypothesis predicts and what the hard-selection hypothesis cannot accommodate.

\begin{figure}[H] \centering 
\includegraphics[width=\linewidth]{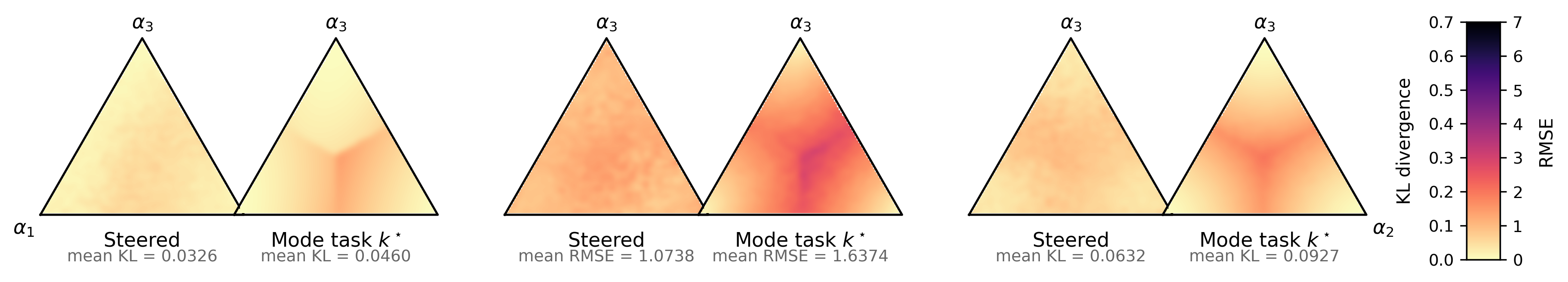}
\caption{\textbf{Simplex steering outperforms the mode-output baseline across the entire task simplex.}
Each panel pair shows error (KL divergence or RMSE) under simplex steering (\emph{Steered}) and the mode-output baseline (\emph{Mode task $k^{*}$}) for \texttt{E1} (left), \texttt{E2} (middle), and \texttt{E3} (right).
Simplex steering achieves uniformly low error throughout the simplex interior, whereas the mode baseline (which selects the pure-task output for the dominant task, and is equivalent to an idealised hard-selection task-vector injection) performs well only near the vertices and degrades in the interior, where task mixtures are more balanced.}
\label{fig:injection_simplex_compare} \end{figure}

\section{Further Details about the Two Inference Modes}
\label{app:two-modes-evidence}

This appendix collects the empirical evidence supporting the two-mode picture
introduced in \Secref{sec:two-modes}, complementing the KL-based transition of
\Secref{sec:two-mode}, the orthogonality results of
\Secref{sec:representation_geometry}, and the causal interventions of
\Secref{sec:orth_ablation_main}.
The material is organized as follows.
\begin{itemize}
\item \Secref{app:training-mixture}: detailed discussion on the training distribution used in experiments of \Secref{sec:representation_geometry}. 
\item \Secref{app:id-ood-loss}: training dynamics of in-distribution and
out-of-distribution loss, providing a performance-based view of the same
transition between \texttt{M1} and \texttt{M2}.
\item \Secref{app:traj-simplex}: a complementary geometric view of the
representation orthogonality, tracing simplex trajectories of hidden-state
projections under low and high task diversity.
\item \Secref{app:real_llm}: pretrained-LLM (Qwen2.5-7B) experiments
that extend the two-mode trajectory analysis from synthetic data to a real
language model.
\item \Secref{app:intervention_perlayer}: full per-layer breakdowns of the
two complementary causal interventions of Table~\ref{tab:interventions},
with implementation details (\Secref{app:orth_ablation}) and a
decomposition of the optimized orthogonal directions
(\Secref{app:orth_decomp}) showing they encode running context statistics.
\end{itemize}

\subsection{Major/Minor Training Mixture}
\label{app:training-mixture}


This subsection describes the major/minor task-mixture training distribution used throughout \Secref{app:two-modes-evidence} for the coexistence-of-modes experiments of \Secref{sec:two-modes}.
The minor-pool size $N_{\mathrm{minor}}$ is the central experimental knob: small $N_{\mathrm{minor}}$ allows every task to be memorized (favoring \texttt{M1}), while large $N_{\mathrm{minor}}$ makes individual minor tasks unmemorizable and forces the model toward in-context generalization (favoring \texttt{M2}).

\paragraph{Task pools.}
Before training begins, two disjoint task pools are sampled once from the
task-specific prior and held fixed throughout:
\begin{itemize}[leftmargin=1.5em]
  \item \textbf{Major pool.} Three tasks $\{z^{\mathrm{maj}}_1, z^{\mathrm{maj}}_2, z^{\mathrm{maj}}_3\}$
    drawn from the prior $\pi$ (e.g.\ $\mathrm{Dir}(\boldsymbol{1}_V)$ for
    \texttt{E1} and \texttt{E3}, $\gN(\vzero, \mI_D)$ for \texttt{E2}).
  \item \textbf{Minor pool.} $N_{\mathrm{minor}}$ tasks
    $\{z^{\mathrm{minor}}_1, \ldots, z^{\mathrm{minor}}_{N_{\mathrm{minor}}}\}$
    drawn independently from the same prior $\pi$, conditioned on being disjoint
    from the major pool. We allow $N_{\mathrm{minor}} = 0$ as a degenerate case
    in which the minor pool is empty and all sequences are drawn from the major
    pool only.
\end{itemize}
The minor-pool size $N_{\mathrm{minor}}$ is the primary control variable. The
full diversity sweep (\Secref{app:hyperparams}) covers
$N_{\mathrm{minor}} \in \{0, 1, 2, 4, 8, 16, 32, 64, 128, 256, 512, 1024\}$;
Figure~\ref{fig:ood_r2_combined} displays the five-value subset
$N_{\mathrm{minor}} \in \{0, 1, 4, 16, 1024\}$ in all three panels (\texttt{E1},
\texttt{E2}, \texttt{E3}), so that the legend values match exactly across
experiments.

\paragraph{Sequence sampling.}
For $N_{\mathrm{minor}} \ge 1$, each training sequence is generated by the
following two-stage procedure:
\begin{enumerate}[leftmargin=1.5em]
  \item \textbf{Pool selection.} With probability $p_{\mathrm{minor}} = 0.1$,
    draw from the minor pool; with probability $1 - p_{\mathrm{minor}} = 0.9$,
    draw from the major pool.
  \item \textbf{Task selection.} Sample a task $z$ uniformly at random from the
    selected pool, then generate a sequence of length $T$ from $p_z$.
\end{enumerate}
For the degenerate case $N_{\mathrm{minor}} = 0$ the minor pool is empty: the
Bernoulli pool-selection step is skipped and every sequence is drawn from the
major pool (equivalently, the effective $p_{\mathrm{minor}}$ collapses to $0$).
For $N_{\mathrm{minor}} \ge 1$, this gives major tasks a dominant $9$:$1$
weight over the minor pool regardless of $N_{\mathrm{minor}}$, so each
individual minor task receives weight
$p_{\mathrm{minor}}/N_{\mathrm{minor}} = 0.1/N_{\mathrm{minor}}$, vanishingly
small for large $N_{\mathrm{minor}}$.

\paragraph{OOD tasks.}
Each OOD task is obtained by drawing a fresh sample from the same
task-specific prior used to construct the major and minor pools: a Dirichlet
draw over categorical distributions for \texttt{E1} and \texttt{E3}, and an
isotropic Gaussian draw over weight vectors for \texttt{E2}.  These draws are
made independently of the stored training pools and are not cached: every
batch of OOD tasks is generated afresh from the prior.  Because the prior is
continuous, with probability one the resulting task does not coincide with
any task in the major or minor pool, so OOD tasks are never seen during
training.  This ensures that OOD performance reflects genuine
out-of-distribution generalization rather than minor-task memorization.

\paragraph{Relation to task diversity.}
When $N_{\mathrm{minor}} \in \{0, 1\}$ the minor pool is empty or contains a
single task, which the model can readily memorize; for small $N_{\mathrm{minor}}$
the model essentially extends task-retrieval behavior to the minor pool. As
$N_{\mathrm{minor}}$ grows, the per-task training weight $0.1/N_{\mathrm{minor}}$
becomes negligible and no individual minor task can be memorized, forcing the
model to develop a context-based generalization strategy. The transition in
representation geometry visible in Figure~\ref{fig:ood_r2_combined} reflects
this shift.

\subsection{In-distribution and out-of-distribution loss}
\label{app:id-ood-loss}

To complement the KL-based comparison in \Secref{sec:two-mode}, we also examine the training dynamics of in-distribution (ID) and out-of-distribution (OOD) loss.
ID sequences are drawn from the major/minor training mixture and OOD sequences from fresh prior draws disjoint from both training pools; see \Secref{app:training-mixture} for the precise sampling procedure.
These plots provide a direct performance-based view of the same \texttt{M1}-to-\texttt{M2} transition: as task diversity increases, the model enters a regime in which it obtains strong OOD performance even though its predictions are no longer best described by \texttt{M1} (the exact Bayesian posterior over the memorized task set).

Figure~\ref{fig:id_ood_loss_combined} reports ID and OOD loss curves for \texttt{E1}, \texttt{E2} and \texttt{E3}.
The main qualitative pattern is that OOD loss remains low, or improves substantially during training, in the same region where the KL comparison favors \texttt{M2}.
This supports the interpretation that \texttt{M2} is not merely a worse fit to the \texttt{M1} baseline but a distinct inference mode that supports effective OOD prediction.

\begin{figure}[h]
\centering
\includegraphics[width=0.7\linewidth]{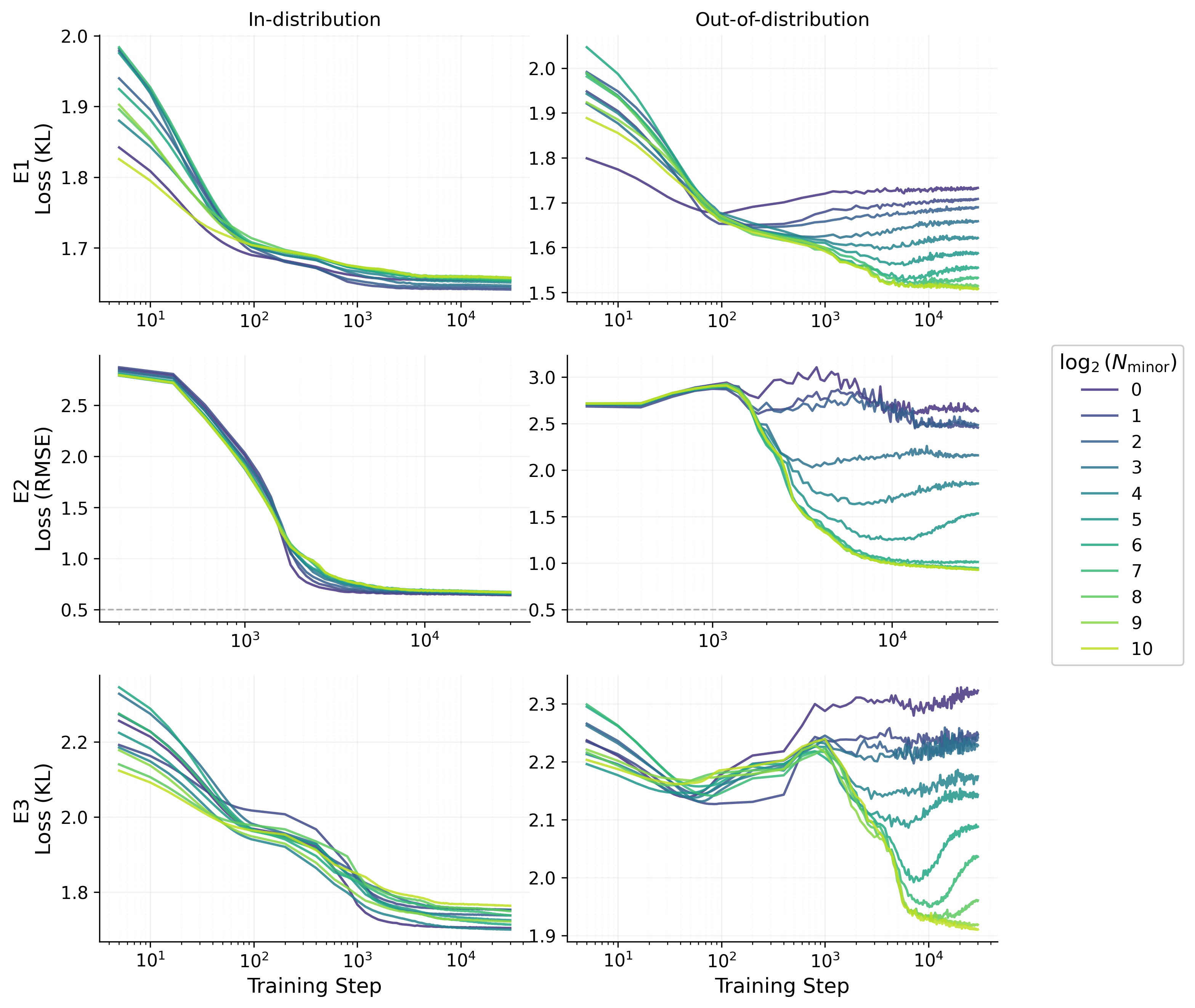}
\caption{\textbf{Training dynamics of in-distribution and out-of-distribution loss.}
Panels (top to bottom): \texttt{E1}, \texttt{E2}, \texttt{E3}. Each panel compares ID and OOD performance over training.
Strong OOD performance in the later training regime of high task diversity is consistent with the emergence of the extrapolative inference mode \texttt{M2} (cf.\ \Secref{sec:two-mode}). 
}
\label{fig:id_ood_loss_combined}
\end{figure}

\subsection{Simplex trajectories of hidden state projections}
\label{app:traj-simplex}

\Secref{sec:representation_geometry} (Figure~\ref{fig:ood_r2_combined}) summarizes the orthogonality between OOD and major-task representations through a single scalar: the projection $R^2$ of OOD hidden states onto the task subspace $\operatorname{col}(\hat{\mTheta})$, tracked over the course of training.
OOD tasks throughout this subsection are sampled as described in \Secref{app:training-mixture}: fresh draws from the same prior used to build the major and minor pools, almost surely disjoint from the training pools because the prior is continuous.
This subsection provides a complementary geometry-resolved view: instead of reporting how much OOD variance lies in $\operatorname{col}(\hat{\mTheta})$, we plot \emph{where} batch-averaged hidden states sit within the affine simplex $\Delta^2$ spanned by the three major-task vectors as the in-context length grows.
This adds three pieces of information beyond Figure~\ref{fig:ood_r2_combined}:
(i) the within-sequence analog of the across-training $R^2$ separation, namely that major-task trajectories converge to the corresponding simplex vertices as context accumulates;
(ii) a check that $\operatorname{col}(\hat{\mTheta})$ genuinely captures major-task structure rather than merely accommodating it as a linear span, since the affine projections of major-task hidden states fall approximately inside $\Delta^2$ even though no nonnegativity constraint is imposed;
(iii) a direct geometric picture of OOD orthogonality across diversity regimes: OOD trajectories drift far outside the simplex at high task diversity, whereas at low diversity they remain inside the simplex with $R^2$ comparable to that of major tasks.

We extract hidden states from the post-\texttt{MLP} residual stream (cf.\ \Secref{sec:extracting-hiddens}) at an intermediate layer chosen per experiment: layer~$3$ of~$6$ for \texttt{E1}, layer~$10$ of~$16$ for \texttt{E2}, and layer~$4$ of~$6$ for \texttt{E3} (0-indexed).
Two objects are computed from these hidden states.

\emph{Task vectors} (simplex vertices).
For each major task $k$, we pool hidden states across all $B=64$ sequences and across a window of late context positions, yielding a single mean vector per task; subtracting the grand mean (the average of these task means) gives the centered task vectors $\hat{\vtheta}_1, \hat{\vtheta}_2, \hat{\vtheta}_3$.
These serve as the three vertices of the reference simplex $\Delta^2$.

\emph{Trajectories} (plotted curves).
Separately, for each task group (whether a major task or an OOD task) and each in-context position $t$, we average hidden states across the $B=64$ sequences to obtain a single \emph{batch-mean} hidden vector, then subtract the grand mean.
This averaging over multiple samples greatly reduces the variability introduced by different token realizations, so that the remaining structure reflects the task-level signal rather than token-level noise.
For each position $t$ we apply the same affine, sum-to-one projection described in \Secref{app:lambda-computation} (Step~1) to this centered batch-mean vector, obtaining the affine coordinates $\beta_{t,k}^{\mathrm{aff}}$ (sum-to-one, no nonnegativity constraint, distinct from the simplex-projected $\beta_{t,k}$ used for posterior alignment in the main text) that are visualized as a point in the barycentric frame of the simplex $\Delta^2$.
Because no nonnegativity is enforced, coordinates may be negative; nevertheless, major-task hidden states tend to fall \emph{roughly inside} the simplex, confirming that $\operatorname{col}(\hat{\mTheta})$ genuinely captures their structure.
Marker size encodes the projection $R^2$; lines trace the trajectory as context grows, with positions sub-sampled for clarity.

\paragraph{Major-task convergence and OOD drift across diversity regimes.}
Figure~\ref{fig:traj_simplex_combined} shows the simplex trajectories for \texttt{E1}--\texttt{E3} at the two endpoints of the diversity range used in Figure~\ref{fig:ood_r2_combined}: the low-diversity model ($N_{\mathrm{minor}}=0$, no minor tasks) and the high-diversity model ($N_{\mathrm{minor}}=2^{10}=1024$ minor tasks).
Across all three experiments, the high-diversity panels exhibit two coupled signatures of mode separation: (a) major-task trajectories (solid blue shades) converge toward the corresponding simplex vertices as in-context length grows, and (b) OOD trajectories (red dashed lines) move to coordinates well outside $\Delta^2$ and carry smaller $R^2$, indicating that OOD hidden states are nearly orthogonal to $\operatorname{col}(\hat{\mTheta})$.
The low-diversity panels show neither clean separation nor OOD drift: trajectories are diffuse, and OOD points remain inside or near the same affine simplex with $R^2$ comparable to major-task trajectories, mirroring the high-$R^2$ regime in Figure~\ref{fig:ood_r2_combined} at small $m$.
The geometric view therefore confirms that the orthogonality reported by the scalar $R^2$ curve corresponds to a literal spatial separation of OOD representations from the task-vector simplex, and that this separation emerges only at high task diversity.

\begin{figure}[!htbp]
  \centering
  \begin{subfigure}[b]{0.45\linewidth}
    \centering
    \includegraphics[width=\linewidth]{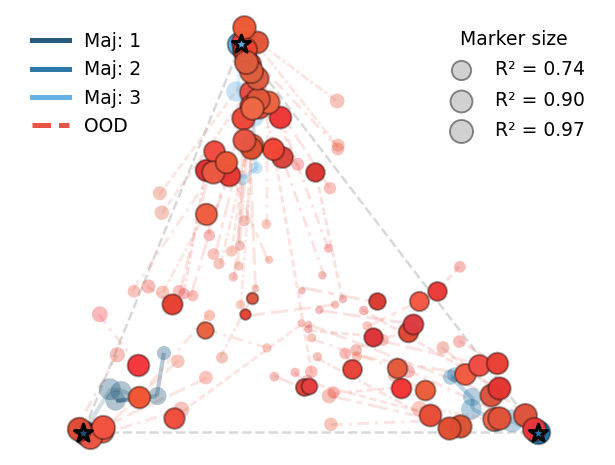}
    \caption*{\texttt{E1}, $N_{\mathrm{minor}}=0$}
  \end{subfigure}
  \hfill
  \begin{subfigure}[b]{0.45\linewidth}
    \centering
    \includegraphics[width=\linewidth]{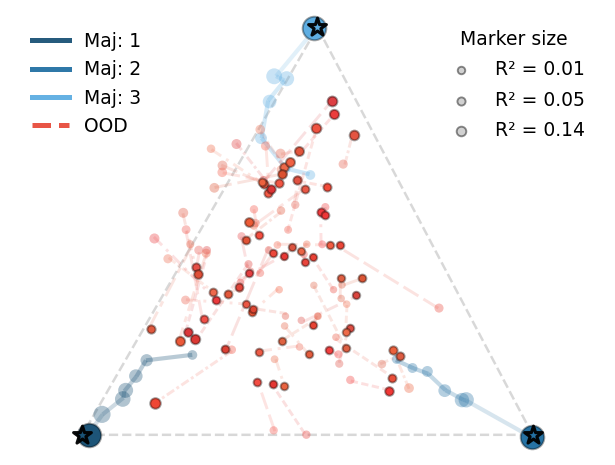}
    \caption*{\texttt{E1}, $N_{\mathrm{minor}}=1024$}
  \end{subfigure}

  \medskip

  \begin{subfigure}[b]{0.45\linewidth}
    \centering
    \includegraphics[width=\linewidth]{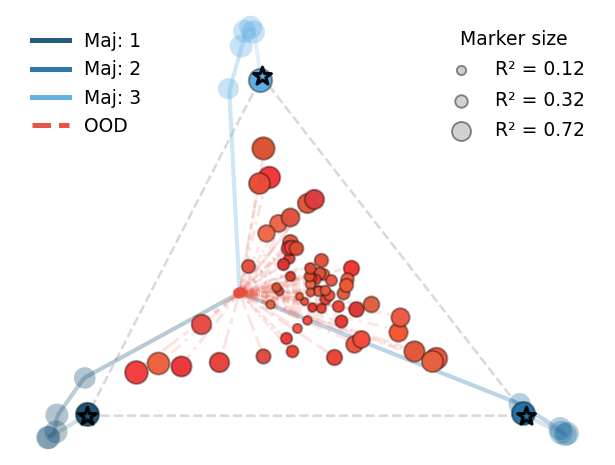}
    \caption*{\texttt{E2}, $N_{\mathrm{minor}}=0$}
  \end{subfigure}
  \hfill
  \begin{subfigure}[b]{0.45\linewidth}
    \centering
    \includegraphics[width=\linewidth]{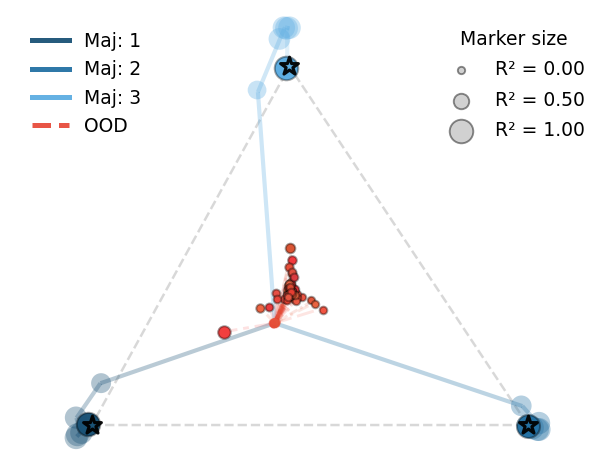}
    \caption*{\texttt{E2}, $N_{\mathrm{minor}}=1024$}
  \end{subfigure}

  \medskip

  \begin{subfigure}[b]{0.45\linewidth}
    \centering
    \includegraphics[width=\linewidth]{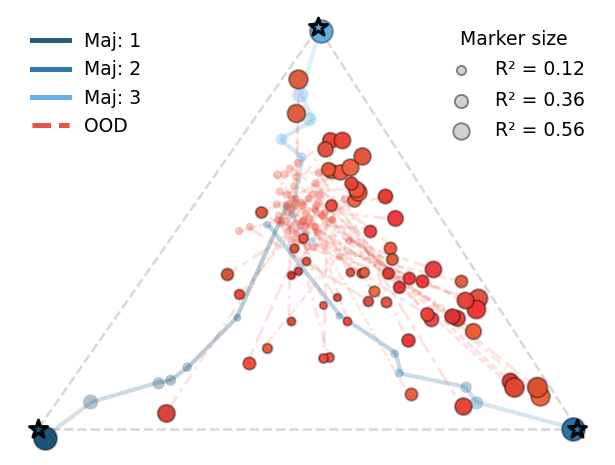}
    \caption*{\texttt{E3}, $N_{\mathrm{minor}}=0$}
  \end{subfigure}
  \hfill
  \begin{subfigure}[b]{0.45\linewidth}
    \centering
    \includegraphics[width=\linewidth]{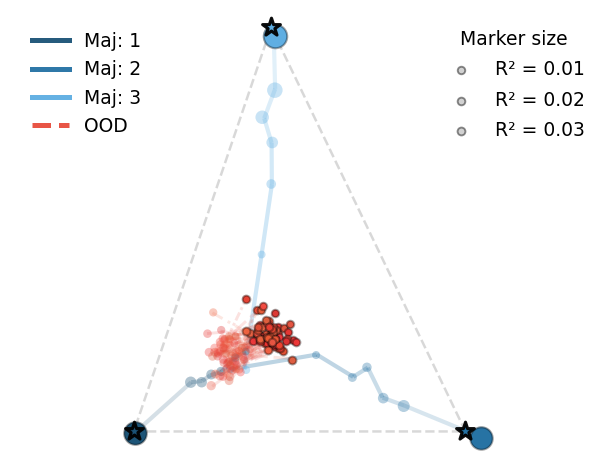}
    \caption*{\texttt{E3}, $N_{\mathrm{minor}}=1024$}
  \end{subfigure}

  \caption{%
    \textbf{Affine projection trajectories onto the task-vector simplex for \texttt{E1}--\texttt{E3}.}
    Each panel plots the affine OLS coordinates $(\beta_{t,1}^{\mathrm{aff}},\beta_{t,2}^{\mathrm{aff}},\beta_{t,3}^{\mathrm{aff}})$ (sum-to-one, no nonnegativity constraint) of the batch-mean centered hidden state at an intermediate layer onto the barycentric frame defined by the three averaging-based task vectors $\hat{\vtheta}_1,\hat{\vtheta}_2,\hat{\vtheta}_3$ (simplex vertices, $\bigstar$).
    Solid lines (blue shades) trace each major task's trajectory as context grows; dashed red lines show OOD trajectories.
    Marker size encodes projection $R^2$: larger markers indicate that $\operatorname{col}(\hat{\mTheta})$ explains more of the hidden-state variance.
    \emph{Left column} ($N_{\mathrm{minor}}=0$): representations are diffuse and not cleanly mode-separated; OOD points remain inside or near the major-task simplex with $R^2$ comparable to major-task trajectories.
    \emph{Right column} ($N_{\mathrm{minor}}=1024$): major-task trajectories converge toward the simplex vertices with high $R^2$; despite having no nonnegativity constraint, the affine coordinates of major-task points lie approximately inside the simplex.
    OOD coordinates, by contrast, fall far outside the simplex with smaller $R^2$, directly reflecting their near-orthogonality to $\operatorname{col}(\hat{\mTheta})$ at high task diversity.%
  }
  \label{fig:traj_simplex_combined}
\end{figure}


\subsection{Pretrained LM Experiments}\label{app:real_llm}



This subsection draws a qualitative analogy between our controlled two-mode framework and the representation geometry of a real pretrained language model, Qwen2.5-7B~\citep{qwen2025qwen25technicalreport}. The comparison should be read in light of the property framework of \Secref{sec:task_retrieval} and the two-mode geometry developed in \Secref{sec:bayesian-mode}, \Secref{sec:generalization-mode}, and \Secref{sec:representation_geometry}. In the synthetic setting, \texttt{P0}--\texttt{P3} can be tested directly because the latent tasks, task prior, token space, and oracle Bayesian posterior are all known. In a pretrained LLM, none of these objects are available: natural-language tasks do not come with a specified latent prior, the relevant token space is open-ended, and there is no ground-truth posterior over tasks. We therefore do not treat the pretrained-LLM experiments as formal validations of \texttt{P0}--\texttt{P3}; instead, we ask whether the same qualitative geometric signatures appear.

Figure~\ref{fig:real_llm_traj} in \Secref{sec:representation_geometry} provides a trajectory-level view: ID prompts move toward the corresponding task-vector vertices, mirroring the long-context stabilization of \texttt{P0} and the finite-context interpolation of \texttt{P2}, while OOD prompts have small projection $R^2$ onto the ID task subspace, giving a real-LLM analog of the near-orthogonal \texttt{M2} geometry of \Secref{sec:representation_geometry}. Figure~\ref{fig:real_llm_lambda_id} provides a complementary coefficient-level view: on ID prompts, the affine coefficient associated with the prompted task rises toward one as demonstrations accumulate, echoing the posterior-concentration behavior of \texttt{P3} in the synthetic experiments. The analogy is intentionally limited: a direct test of \texttt{P1} would require reliable estimation of task--token interaction terms over a large set of real tokens, and a direct test of \texttt{P3} would require an oracle Bayesian posterior over natural-language tasks. Taken together, the pretrained-LLM results support the external plausibility of the same geometric motifs---task-subspace retrieval for ID prompts and near-orthogonal representations for OOD prompts---while the formal property tests remain those carried out in the controlled synthetic setting.

\paragraph{Model.}
We use Qwen2.5-7B \citep{qwen2025qwen25technicalreport}, a decoder-only transformer with 7.62\,B parameters and 28 transformer layers, loaded in \texttt{bfloat16} precision via the HuggingFace \texttt{transformers} library \citep{wolf2019huggingface}.
Hidden states are extracted at the query-token position from every layer.

\paragraph{Task pool.}
We consider 12 word-function ICL tasks drawn from the function-vectors dataset \citep{toddfunction}: \texttt{present\_to\_past}, \texttt{singular\_to\_plural}, \texttt{english\_to\_french}, \texttt{english\_to\_spanish}, \texttt{english\_to\_german}, \texttt{antonyms}, \texttt{synonyms}, \texttt{word\_to\_category}, \texttt{country\_to\_capital}, \texttt{person\_to\_occupation}, \texttt{landmark\_to\_country}, and \texttt{product\_to\_company}.
Each task provides a set of (input, output) word pairs.

\paragraph{Task selection.}
Task vectors are only well-defined when the model reliably performs the underlying task from context.
We therefore begin with a lightweight accuracy sweep: for each candidate task we build 20-shot prompts from a random support pool (100 pairs) and evaluate on 20 held-out queries (seed~42).
We then select $K=3$ tasks that jointly satisfy two criteria: (i)~high ICL accuracy, so that the extracted task vectors are informative, and (ii)~semantic diversity, so that the resulting task subspace spans meaningfully different directions.
This yielded English$\to$French, Antonyms, and Present$\to$Past. 

\paragraph{Prompt construction.}
Each prompt consists of $N=20$ in-context demonstrations, each formatted as \texttt{x: y} (input word, colon-space separator, output word), separated by newlines, followed by a query line \texttt{query\_x:~} (no answer).
Following \citet{hendel2023context}, hidden states are extracted at the \texttt{:} separator token immediately after the query input (the second-to-last token of the prompt), which is the position at which the model must implicitly represent the task in order to predict the correct answer.

Task vectors $\vtheta_k$ are computed as the mean query-position hidden state over $P=100$ support prompts per task, drawn from a held-out support pool of 200 pairs that is disjoint from evaluation.
Evaluation uses 60 held-out prompts per task with disjoint query inputs.

\paragraph{OOD task construction.}
For each ID task $f_k$ we construct a paired OOD task of the form $(x,\, g(x))$, where $g$ is a surface-level transform applied to the \emph{input} $x$.
Applying a syntactic manipulation to the raw input word, rather than composing with the learned ID mapping, yields tasks that are unnatural as word-function tasks: outputs such as the first letter of a word, a word with its final letter capitalized, or a word repeated twice do not correspond to meaningful linguistic relations and are unlikely to appear as coherent tasks in natural language corpora, making them plausibly out-of-distribution for a pretrained model.
Concretely:
\begin{itemize}
    \item \textit{french\_input\_first}: $g(x) = $ first character of $x$ (e.g.\ ``table'' $\to$ ``t'').
    \item \textit{antonym\_input\_cap\_last}: $g(x) = x$ with its last letter capitalized (e.g.\ ``cat'' $\to$ ``caT'').
    \item \textit{past\_input\_double}: $g(x) = x$ concatenated with itself (e.g.\ ``run'' $\to$ ``runrun'').
\end{itemize}
At the same time, these transforms are applied directly to the input and are fully inferable from the 20 in-context demonstrations alone, without any prior knowledge of the ID task $f_k$.
This ensures the model can achieve reasonable OOD accuracy (see Table~\ref{tab:real_llm_icl_perf}), making the OOD evaluation meaningful rather than a trivial failure case.

\begin{table}[h]
\centering
\caption{ICL performance (next-token accuracy and cross-entropy loss) on ID and OOD tasks for Qwen2.5-7B with 20-shot prompts.}
\label{tab:real_llm_icl_perf}
\small
\begin{tabular}{llcc}
\toprule
\textbf{Type} & \textbf{Task} & \textbf{Accuracy} & \textbf{Cross-Entropy Loss} \\
\midrule
ID  & English$\to$French   & 71.7\% & 1.172 \\
ID  & Antonyms             & 60.0\% & 1.649 \\
ID  & Present$\to$Past     & 96.7\% & 0.386 \\
\midrule
OOD & \textit{french\_input\_first}     & 33.3\% & 1.914 \\
OOD & \textit{antonym\_input\_cap\_last} & 53.3\% & 1.501 \\
OOD & \textit{past\_input\_double}       & 73.3\% & 0.928 \\
\bottomrule
\end{tabular}
\end{table}

\paragraph{Metric: simplex-projection $R^2$.}
Following the analysis in the main paper, we project each query-position hidden state $\vh$ onto the affine hull of the $K$ task vectors and compute
\[
R^2 = 1 - \frac{\lVert \vh - \hat{\vh} \rVert^2}{\lVert \vh - \bar{\vh} \rVert^2},
\]
where $\hat{\vh}$ is the affine projection and $\bar{\vh}$ is the mean hidden state across all prompts.
$R^2$ close to~1 indicates that the hidden state lies in the task subspace; $R^2$ near~0 indicates it is largely orthogonal.

\paragraph{Trajectory visualization.}
To produce Figure~\ref{fig:real_llm_traj}, we treat the number of in-context demonstrations as a discrete ``time'' axis.
For each evaluation prompt, we construct a sequence of truncated prompts containing $1, 2, \ldots, N$ demonstrations, extract the hidden state $\vh$ at the query position for each, and project it onto the 2-simplex defined by the three task vectors $\vtheta_k$ (using barycentric coordinates after affine projection).
The trajectory of these projected points across shot counts is plotted inside the triangle.
ID trajectories (solid, colored by task) converge toward the correct task vertex; OOD trajectories (dashed red) remain scattered outside the simplex.
Marker size is proportional to the $R^2$ of the projection at that shot count.
The figure shows layer~20, which exhibits the clearest task-subspace organization. 

\paragraph{
Additional result: Posterior alignment in the pretrained LLM.}
In the synthetic experiments, the affine coefficients $\beta_{t,k}$ (obtained by projecting $\vh_t$ onto the task-vector simplex) track the Bayesian posterior over tasks as context accumulates (Property~\texttt{P3}, Eq.~\ref{eq:posterior_alignment_main}).
We test the analogous prediction for Qwen2.5-7B by computing unconstrained affine coefficients $\beta_{t,k}$ (constrained to $\sum_k \beta_{t,k} = 1$ but allowed to be negative) at each shot position $t$ and each layer, using an enlarged basis of $K=6$ task vectors: the three ID tasks above together with \texttt{singular\_to\_plural}, \texttt{landmark\_to\_country}, and \texttt{english\_to\_spanish}, all extracted by the same procedure.

Figure~\ref{fig:real_llm_lambda_id} shows the mean $\beta_{t,k}$ at layer~20 on ID evaluation prompts.
For each ID task (column), the coefficient on the correct task rises steeply toward~1 as demonstrations accumulate, while every other coefficient remains near~0.
This provides a posterior-alignment analog in a pretrained LLM: without a ground-truth Bayesian posterior, the affine coefficients nevertheless concentrate on the prompted task as demonstrations accumulate.

\begin{figure}[t]
\centering
\includegraphics[width=0.8\textwidth]{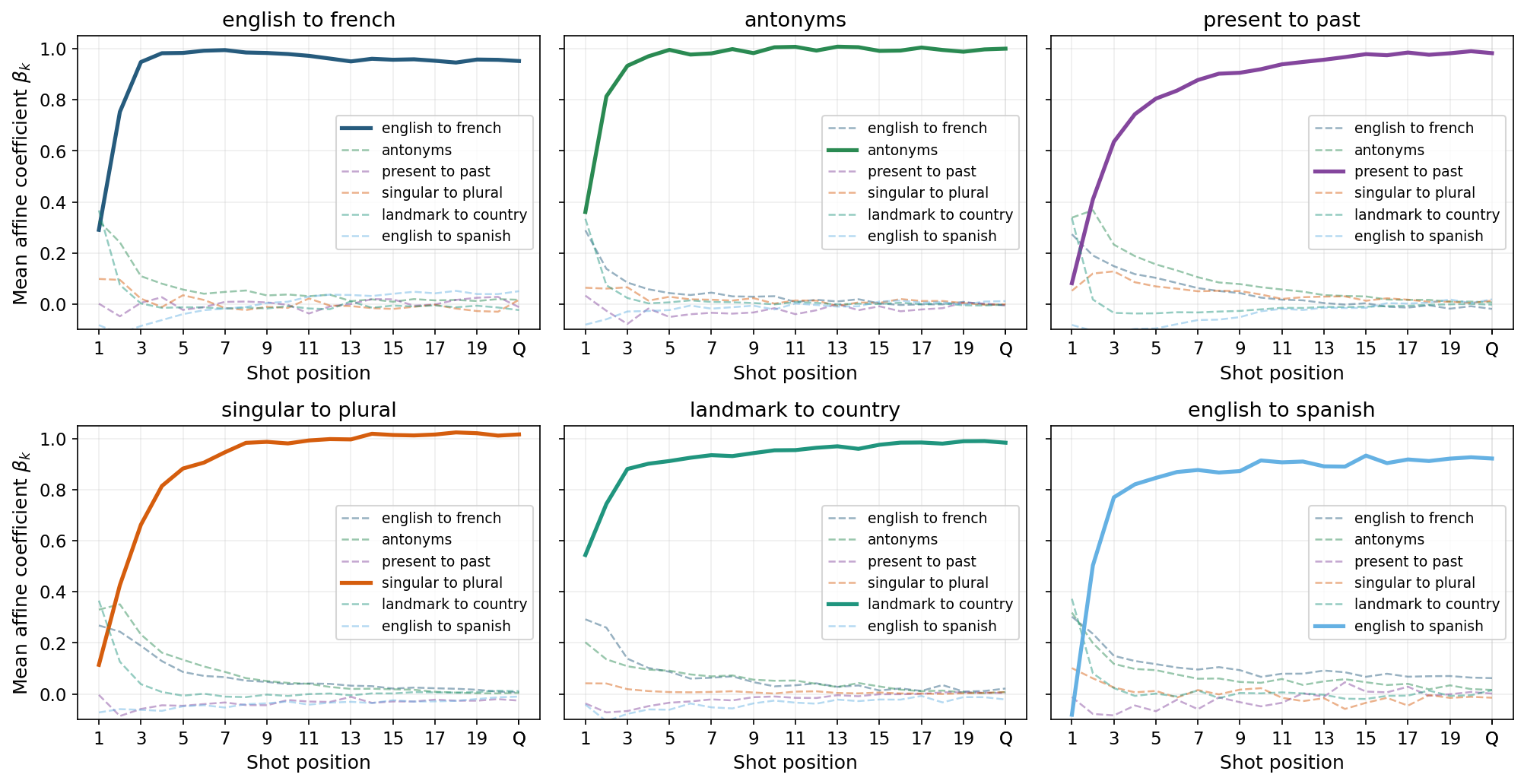}
\caption{
\textbf{Posterior alignment analog in Qwen2.5-7B (layer~20), ID prompts.}
Each panel shows one of the $K=6$ ID tasks (columns).
Lines show the mean affine coefficient $\beta_{t,k}$ for each task $k$ as a function of shot position $t$.
For each ID task, $\beta_{t,k}$ for the correct task~$k$ (matching column color) rises toward~1 as demonstrations accumulate, while coefficients for other tasks remain near~0, analogous to Bayesian posterior alignment observed in synthetic transformers (Figure~\ref{fig:beta_alpha_trajectory}).
}\label{fig:real_llm_lambda_id}
\end{figure}

\subsection{Per-Layer Intervention Results}
\label{app:intervention_perlayer}

This subsection reports the per-layer breakdown of the two complementary causal interventions introduced in \Secref{sec:orth_ablation_main} (suppressing the optimized orthogonal directions $\hat{\mV}_\mathrm{opt}$, and suppressing the task subspace $\operatorname{col}(\hat{\mTheta})$). The layer-resolved curves confirm that the major-vs-OOD/minor double dissociation aggregated in Table~\ref{tab:interventions} is a genuine middle-layer phenomenon rather than an averaging artifact, and reveal a structured three-phase depth profile that the main-text average hides.

\paragraph{Orthogonal subspace intervention.} Table~\ref{tab:interventions} in the main text reports $\Delta\gL_{\mathrm{mode}} / g_{\mathrm{mode}}$
averaged over the middle-layer ranges where orthogonal separation is cleanest;
Figure~\ref{fig:perlayer_all} shows the full per-layer breakdown.
The pattern across all layers follows a three-phase progression.
In \emph{early layers}, computation is shared across major, OOD,
and minor tasks (the model has not yet separated task-specific from
prediction-specific representations), so suppressing any direction in the
orthogonal complement affects all three loss types simultaneously
(e.g.\ layers~0--4 of \texttt{E2}, where major loss reaches
${\approx}95$--$253\%$ alongside large OOD and minor disruption).
In \emph{middle layers}, the orthogonal separation emerges cleanly: the model
has routed task-identity information into the protected subspace and
prediction-relevant information into the orthogonal complement, so
suppressing $\hat{\mV}_\mathrm{opt}$ selectively disrupts OOD and minor
inference while leaving major-task performance near the random baseline;
these are the layers whose averages appear in Table~\ref{tab:interventions}.
In \emph{final layers}, the residual stream feeds directly into the output
head and must encode the next-token prediction regardless of task type;
the distinction between task identity and predictive content collapses,
and the intervention again affects all three losses
(e.g.\ layer~5 of \texttt{E1} and~\texttt{E3}, layers~14--15 of~\texttt{E2}).

\begin{figure}[!htbp]
\centering
\begin{subfigure}[b]{0.48\linewidth}
  \centering
  \includegraphics[width=\linewidth]{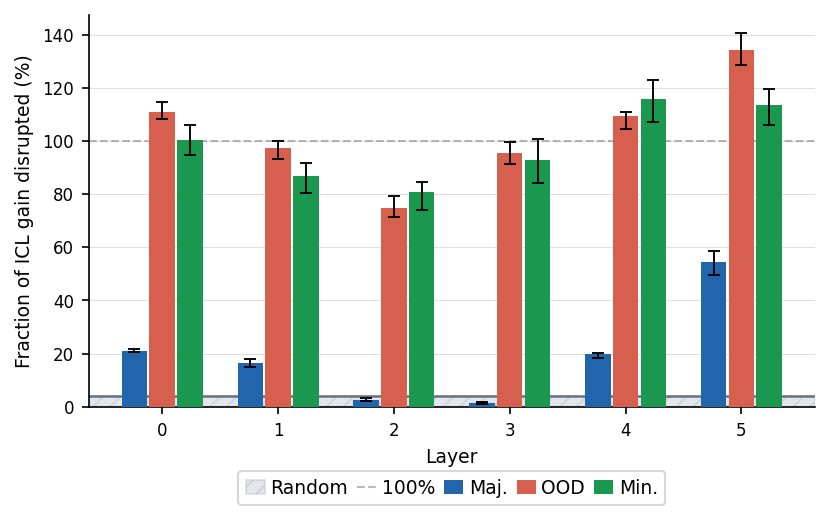}
\end{subfigure}
\hfill
\begin{subfigure}[b]{0.48\linewidth}
  \centering
  \includegraphics[width=\linewidth]{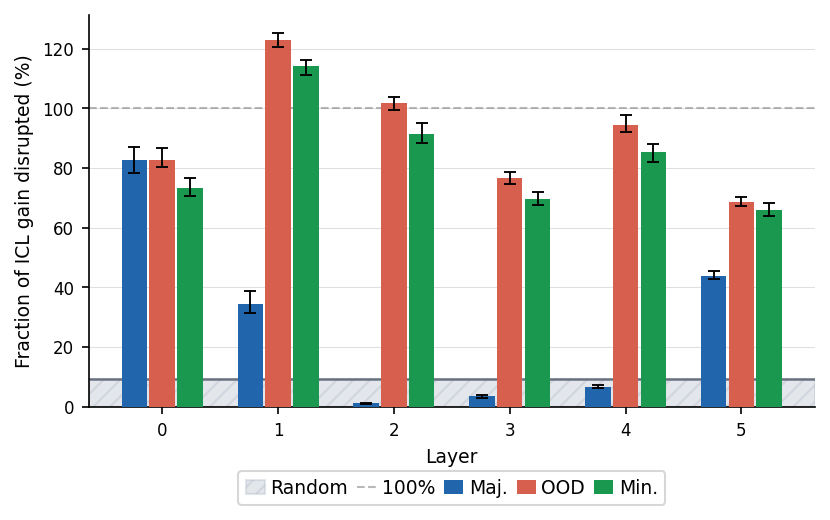}
\end{subfigure}

\vspace{0.6em}

\begin{subfigure}[b]{0.66\linewidth}
  \centering
  \includegraphics[width=0.66\linewidth]{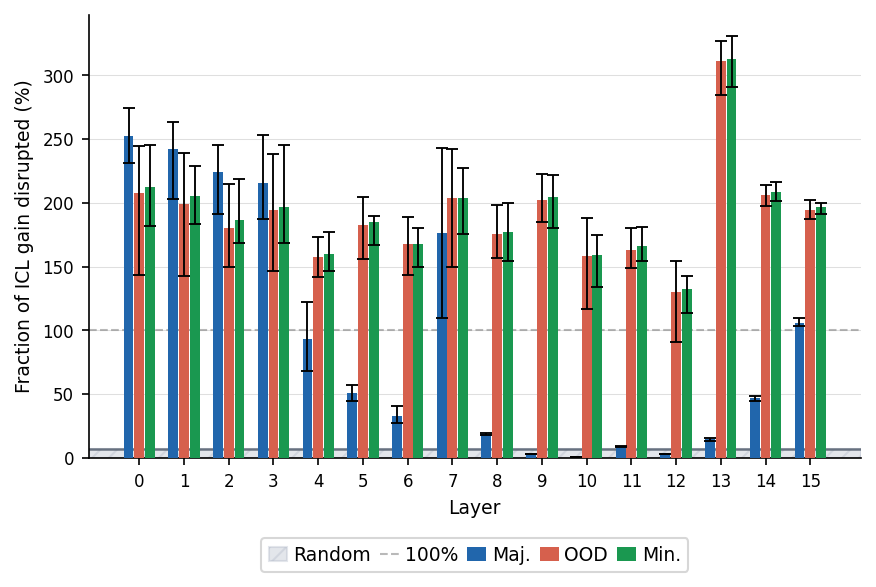}
\end{subfigure}
\caption{%
\textbf{Per-layer orthogonal-subspace intervention across all three experiments.}
Each bar shows $\Delta\gL_{\mathrm{mode}} / g_{\mathrm{mode}} \times 100\%$ when $\hat{\mV}_\mathrm{opt}$
is suppressed at that layer; error bars are the inter-quartile range over evaluation
batches; the shaded band and horizontal dashed line mark the random same-rank
baseline and the 100\% reference respectively.
\textbf{(a) \texttt{E1} (Dice):}
OOD loss is disrupted substantially at all layers (${\approx}53$--$155\%$) and
minor loss uniformly (${\approx}78$--$138\%$); major loss stays near zero at
layers~0--4 (${\approx}1$--$21\%$) but rises to ${\approx}80\%$ at layer~5.
\textbf{(b) \texttt{E3} (Latent Markov):}
OOD and minor losses are largest at layers~0--1 (${\approx}102$--$124\%$) with a
dip at layer~2 (${\approx}54$--$61\%$); major loss stays near the random baseline
at layers~1--4 (${\approx}5$--$17\%$); layer~5 reverses the pattern (major
${\approx}113\%$, OOD/minor ${\approx}63$--$66\%$).
\textbf{(c) \texttt{E2} (Linear Regression):}
major loss is elevated at layers~0--4 (${\approx}95$--$253\%$) and resurfaces
at layers~7 and 14--15; layers~5--6 show a transient dip in OOD disruption
(${\approx}30$--$50\%$); layer~13 produces the largest OOD/minor spike
(${\approx}295$--$310\%$); the focal layers~9--12 give the cleanest orthogonal separation (major ${\approx}5$--$15\%$, OOD/minor ${\approx}130$--$175\%$).
}
\label{fig:perlayer_all}
\end{figure}

\paragraph{Task subspace suppression.}
The intervention applied here is
$\vh' = \vh - \gamma\,\mP_{\mathrm{task}}\vh$, with experiment-specific
$\gamma = 1.5$ for \texttt{E1}, $\gamma = 2$ for \texttt{E2}, and
$\gamma = 2.5$ for \texttt{E3}.
As with the orthogonal subspace intervention, $\gamma = 1$ would correspond
to ordinary projection removal; since all values used here satisfy
$\gamma > 1$, the intervention is an amplified suppression
(over-subtraction) of the task component rather than a pure ablation.
The layer-averaged results for this intervention appear in the left block of
Table~\ref{tab:interventions} in the main text
(\Secref{sec:orth_ablation_main}).
Figure~\ref{fig:task_removal_all} provides the full per-layer breakdown.
In \texttt{E1} (Dice, panel~(a)), suppressing the task subspace devastates major
loss (${\approx}113$--$362\%$, increasing with depth) while OOD and minor losses
remain low (${\approx}5$--$60\%$), with the gap widening in later layers,
consistent with the $211\%$ / ${\approx}12\%$ contrast reported in
Table~\ref{tab:interventions}.
In \texttt{E3} (Latent Markov, panel~(b)), major loss rises sharply from
layer~2 onwards (${\approx}96$--$190\%$), while OOD and minor losses are
near zero at layers~2--4 (${\approx}0$--$1\%$); layers~0--1 show modest
OOD/minor disruption (${\approx}17$--$32\%$), consistent with the early-layer
shared-computation regime identified above and the $132\%$ / ${\approx}13\%$
contrast in Table~\ref{tab:interventions}.
In \texttt{E2} (Linear Regression, panel~(c)), major loss is elevated in early
and middle layers (${\approx}13$--$138\%$, peaking near layers~4--5 and~7--9),
while OOD and minor losses track major loss in the early shared-computation
phase (layers~1--5) but drop to near zero from layer~8 onwards, giving the
$84\%$ / ${\approx}{-}1\%$ pattern shown in Table~\ref{tab:interventions}.

\begin{figure}[!htbp]
\centering
\begin{subfigure}[b]{0.48\linewidth}
  \centering
  \includegraphics[width=\linewidth]{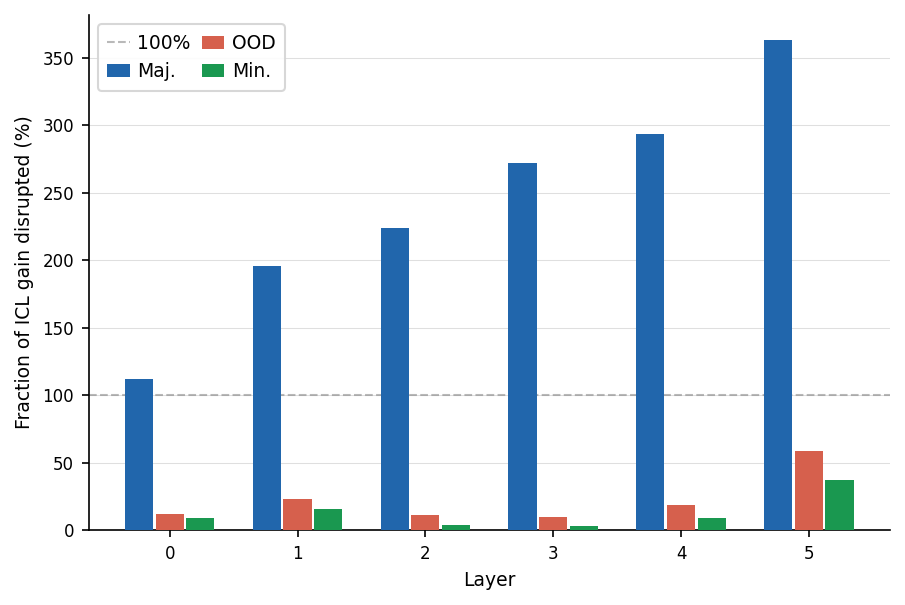}
\end{subfigure}
\hfill
\begin{subfigure}[b]{0.48\linewidth}
  \centering
  \includegraphics[width=\linewidth]{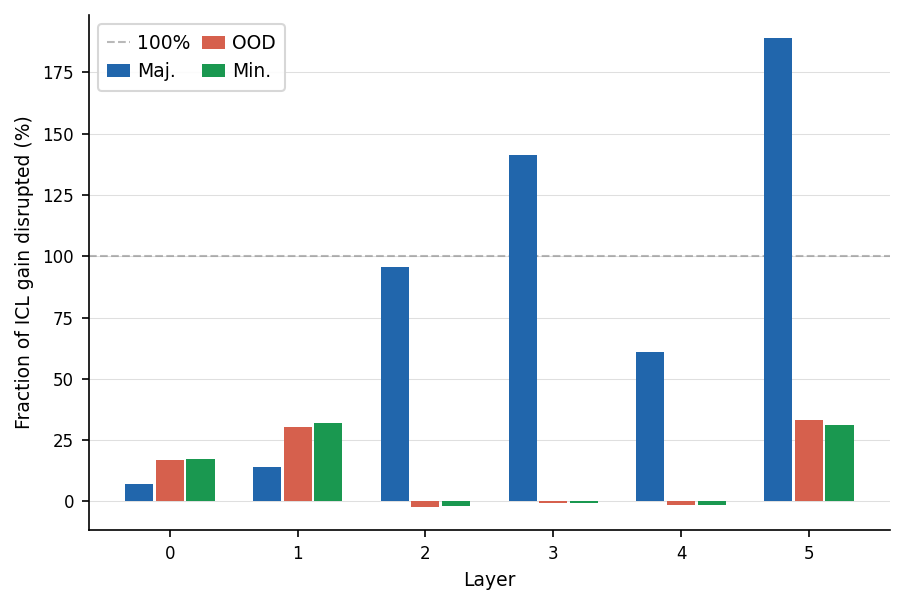}
\end{subfigure}

\vspace{0.6em}

\begin{subfigure}[b]{\linewidth}
  \centering
  \includegraphics[width=0.66\linewidth]{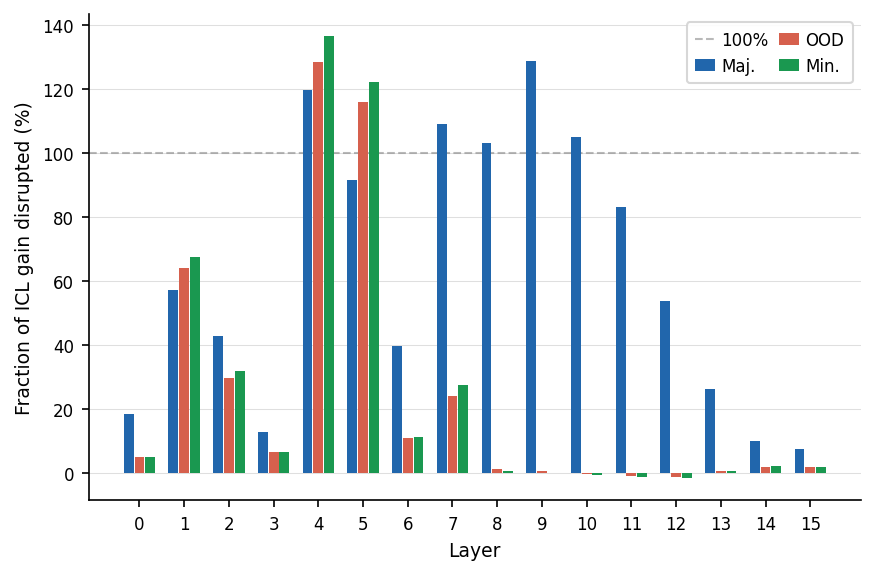}
\end{subfigure}
\caption{%
\textbf{Per-layer task subspace suppression across all three experiments.}
Each bar shows $\Delta\gL_{\mathrm{mode}} / g_{\mathrm{mode}} \times 100\%$ when the task subspace
$\operatorname{col}(\hat{\mTheta})$ is suppressed at that layer.
This is the complementary intervention to Figure~\ref{fig:perlayer_all}:
suppressing the task subspace selectively disrupts major-task performance
while leaving OOD and minor losses largely intact, confirming that the two
subspaces carry functionally distinct information.
\textbf{(a) \texttt{E1} (Dice):}
major loss rises from ${\approx}113\%$ at layer~0 to ${\approx}362\%$ at
layer~5; OOD and minor losses stay below ${\approx}60\%$ throughout.
\textbf{(b) \texttt{E3} (Latent Markov):}
major loss rises sharply from layer~2 (${\approx}96$--$190\%$); OOD and
minor losses are near zero at layers~2--4 but present at layers~0--1
(${\approx}17$--$32\%$) and layer~5 (${\approx}31$--$34\%$).
\textbf{(c) \texttt{E2} (Linear Regression):}
major loss peaks in the early-to-middle layers (${\approx}120$--$138\%$ at
layers~4--5); OOD and minor losses track major loss in the early shared
phase (layers~1--5) but drop to near zero from layer~8 onwards.
}
\label{fig:task_removal_all}
\end{figure}

\subsection{Orthogonal Subspace Intervention: Implementation Details}
\label{app:orth_ablation}

This appendix gives a self-contained algorithmic description of the orthogonal
subspace intervention introduced in \Secref{sec:orth_ablation_main}.

Recall that the training distribution mixes a small set of
\emph{major tasks} $\gZ_{\mathrm{major}}$ (probability~$0.9$) with a large
pool of \emph{minor tasks} $\gZ_{\mathrm{minor}}$ (probability~$0.1$);
OOD tasks are fresh prior draws disjoint from both pools (see
\Secref{app:training-mixture}).
At high diversity ($N_{\mathrm{minor}} \gg 1$), each individual minor task
receives negligible weight and cannot be memorized; the model must instead
rely on in-context demonstrations, just as it does for OOD tasks.
We exploit this similarity by using minor-task sequences to search for directions in the orthogonal complement
whose suppression maximally disrupts next-token prediction.
Crucially, \emph{no major-task data enters the optimization}, so the procedure
provides no explicit guarantee that major-task performance will be preserved.
The finding that major-task loss nonetheless remains near zero across a range
of middle layers (Table~\ref{tab:interventions}) therefore constitutes strong
evidence that major-task and OOD/minor-task inference operate through distinct
computational mechanisms.
All evaluation metrics are computed on independently sampled held-out
sequences that are disjoint from the data used to find optimization
directions, avoiding overfitting.

We extract hidden states $\vh^{(\ell,b)}_{k,t} \in \R^d$ from layer~$\ell$ of the
frozen model at batch sample~$b$, task~$k$, and sequence position~$t$.
All operations below are performed independently per layer~$\ell$; we suppress
the superscript~$\ell$ for readability.

\paragraph{Step 1: Task and token subspace estimation.}
The task subspace basis $\hat{\mTheta} \in \R^{d \times K}$ is obtained from
the averaging-based task vectors
$\hat{\vtheta}_1,\dots,\hat{\vtheta}_{K}$
(stacked as columns; $K=3$ is the number of major tasks)
exactly as in \Secref{sec:extracting-hiddens}.
In addition, we estimate \emph{token encoding vectors}
$\hat{\vnu}_v \in \R^d$ as the token main effects from the same two-way
(token~$\times$~task) ANOVA used for the separability analysis
(\Secref{sec:emp-task-vector}): after forming cell means
$\bar{\mH} \in \R^{V \times K \times d}$
from token-conditioned hidden states and removing positional effects,
$\hat{\vnu}_v = \frac{1}{K}\sum_k \bar{\mH}_{v,k} - \bar{\vh}$.

\paragraph{Step 2: Protected subspace construction.}
Next-token prediction in \texttt{E1}--\texttt{E3} depends on the current
token~$s_t$ (e.g.\ the bigram structure in \texttt{E3} or the input
$\vx_t$ in \texttt{E2}), so suppressing directions that encode token identity
would trivially degrade predictions for \emph{all} task types and render
the intervention uninformative.
We therefore protect the token subspace alongside the task subspace.
The token encoding vectors $\mW_{\mathrm{tok}} \in \R^{V \times d}$
(rows $\{\hat{\vnu}_v\}$) are projected onto
$\operatorname{col}(\hat{\mTheta})^\perp$ via
$\mW_{\mathrm{res}} = (\mI
  -  \mP_{\mathrm{task}})\mW_{\mathrm{tok}}^\top$, where $\mP_{\mathrm{task}}$ is given in Equation~\eqref{eq:task-projection}.
The top-$\rho$ left singular vectors of $\mW_{\mathrm{res}}$ are retained
as token-protection directions.
The task and token directions are concatenated and re-orthonormalized via
compact SVD to yield $\mU_{\mathrm{prot}} \in \R^{d \times p}$, and the
orthogonal complement basis
\[
  \mU_\perp \;\in\; \R^{d \times q},
  \qquad q = d - p,
\]
is extracted as the eigenvectors of
$\mP_\perp = \mI_d - \mU_{\mathrm{prot}}\mU_{\mathrm{prot}}^\top$
with eigenvalue exceeding $\tfrac{1}{2}$.

\paragraph{Step 3: Direction optimization.}
We search for a rank-$r$ subspace within $\operatorname{col}(\mU_\perp)$
whose suppression maximally disrupts minor-task prediction
($r = 6$ for \texttt{E1} and \texttt{E3}; $r = 2$ for \texttt{E2}).
The learnable parameter $\mG \in \R^{q \times r}$ is initialized as a
random unit-norm matrix and updated with Adam (learning rate $0.01$,
gradient clip norm $1.0$) for up to $N_{\mathrm{opt}}$ steps
($80$ for \texttt{E1}, $90$ for \texttt{E3}, $25$ for \texttt{E2}),
with early stopping after $50$ consecutive steps without improvement in a
smoothed objective (EMA coefficient $0.1$).
At each step the intervention directions are
$\mV_Q = \mathrm{QR}(\mU_\perp \mG)$,
giving the intervention
\[
  \vh' \;=\; \vh - \gamma\,\mV_Q\mV_Q^\top\vh,
\]
with $\gamma = 2.5$ for \texttt{E1} and \texttt{E3}, and $\gamma = 2$ for
\texttt{E2}, applied via a forward hook at layer~$\ell$.
Here \(\gamma=1\) would correspond to ordinary orthogonal projection removal. Since the
values above satisfy \(\gamma>1\), the intervention is an amplified suppression
(over-subtraction) of the selected component rather than a pure ablation; the optimized
directions and the random same-rank baseline use the same \(\gamma\) within each
experiment.
The objective is maximized over batches of size $B_{\mathrm{opt}} = 256$
drawn from the minor-task distribution.
An exponential moving average of $\mG$ with decay $0.99$ is maintained; the
final directions are $\hat{\mV}_{\mathrm{opt}} = \mathrm{QR}(\mU_\perp \hat{\mG})$.

\paragraph{Step 4: Evaluation.}
All evaluation is performed on \emph{independently sampled} sequences that are
disjoint from the batches used during Step~3.
On held-out batches of size $B$ from the major-task, OOD, and minor-task
distributions, we record the baseline and intervened loss at all evaluation
positions $t \in \gT_{\mathrm{eval}}$ and report $\Delta\gL_{\mathrm{mode}}$ for
$\mathrm{mode} \in \{\mathrm{maj}, \mathrm{ood}, \mathrm{minor}\}$ as defined in
\Secref{sec:orth_ablation_main}.
As a sanity check, the same intervention is applied with $r$ random orthogonal
directions sampled uniformly from $\operatorname{col}(\mU_\perp)$, and the
resulting loss increase is compared against $\Delta\gL_\mathrm{ood}$ to confirm
that the optimized directions are non-trivially better than chance.
Linear probes are then fitted from ground-truth latent features to
$\hat{\mV}_{\mathrm{opt}}^\top \vh$ on a separately held-out probe set of
size $N_{\mathrm{probe}}$, and $R^2$ is reported on an 80/20 validation split.

\paragraph{Hyperparameters.}
Unless stated otherwise, we use
$\rho = \lfloor 0.9 \cdot \operatorname{rank}(\mW_{\mathrm{res}})\rfloor$
token-protection components,
$|\gT_{\mathrm{eval}}| = T$ (all positions), and
$N_{\mathrm{probe}} = 1024$ probe samples.

\subsection{Decomposition of Orthogonal Directions}
\label{app:orth_decomp}

Together, the two complementary interventions above (suppressing
$\hat{\mV}_\mathrm{opt}$ and suppressing $\operatorname{col}(\hat{\mTheta})$)
establish a clean double dissociation: directions in
$\operatorname{col}(\mU_\perp)$ are \emph{causally} responsible for OOD and
minor-task inference, while the task subspace carries major-task information.
Having confirmed this functional separation, we now ask a more fine-grained
question: \emph{what} information does $\hat{\mV}_\mathrm{opt}$ concretely
encode?

\paragraph{Method.}
For each layer~$\ell$ and experiment we collect hidden states $\vh$ from
OOD/minor-task sequences and compute the low-dimensional projection
$\vz = \hat{\mV}_\mathrm{opt}^\top\vh \in \R^r$.
We regress $\vz$ onto individual observable features constructed solely from
the context $(s_1,\ldots,s_t)$ for \texttt{E1}/\texttt{E3}, or the input--output pairs for \texttt{E2}, then report the combined $R^2$ from
fitting all features jointly (dashed line in figures).
For experiments \texttt{E1} and \texttt{E3} we use the \emph{centred
log-ratio} (CLR) transform of \citet{aitchison1986statistical}, applied to
empirical token frequencies over the $V$-symbol vocabulary.
Raw empirical frequencies can contain zero entries (a symbol may not yet
have appeared in the prefix, and most rows of the empirical bigram table
are zero at small~$t$), and CLR requires strictly positive arguments. We
therefore first apply additive (Dirichlet) smoothing with the
Krichevsky--Trofimov prior $\alpha = \tfrac{1}{2}$, then take the centred
log-ratio of the resulting strictly positive frequency vector
$\vf \in \Delta^{V-1}_{>0}$:
\begin{equation}
  \operatorname{CLR}(\vf)_a
  \;=\;
  \log f_a
  \;-\;
  \frac{1}{V}\sum_{b=1}^{V}\log f_b,
  \qquad
  a = 1,\ldots,V.
  \label{eq:clr}
\end{equation}
This maps $\Delta^{V-1}_{>0}$ to the zero-sum hyperplane in $\R^{V}$, removing
overall scale while preserving relative frequency information.
In implementation a numerical floor of $10^{-12}$ is additionally applied
inside the logarithm to guard against underflow; with $\alpha=\tfrac{1}{2}$
this floor is never active in practice and is purely defensive.
Note that $V$ here is the symbol-alphabet size (cf.\ \Secref{sec:setup}); the
smoothing constant $\alpha$ is a feature hyperparameter chosen by convention
and is distinct from the generative Dirichlet hyperparameter $\alpha_0$ used
in \Secref{app:approx-posterior}.
\begin{itemize}
  \item \textbf{\texttt{E1}:}
    \emph{Unigram CLR}: $\operatorname{CLR}(\tilde{\vp}_t)$, where the
    $V$-vector $\tilde{\vp}_t \in \Delta^{V-1}_{>0}$ has entries
    \[
      \tilde{p}_{t,a}
      \;=\;
      \frac{n_a(t) + \alpha}{t + \alpha V},
      \qquad a = 1,\ldots,V,
    \]
    with the unigram count $n_a(t)$ defined in Eq.~\eqref{eq:token-count};
    \emph{Current token}: one-hot encoding $\ve_{s_t}\in\R^V$;
    \emph{Position}: fractional sequence position $t/T$.
  \item \textbf{\texttt{E3}:}
    \emph{Bigram CLR}: $\operatorname{CLR}(\tilde{\vq}_t)$, where the
    $V$-vector $\tilde{\vq}_t \in \Delta^{V-1}_{>0}$ is the smoothed conditional
    frequency of the next symbol given the current token~$s_t$,
    \[
      \tilde{q}_{t,b}
      \;=\;
      \frac{N_{s_t,\,b}(t) + \alpha}{N_{s_t,\cdot}(t) + \alpha V},
      \qquad b = 1,\ldots,V,
    \]
    with the bigram counts $N_{ab}(t)$ and row marginal $N_{a\cdot}(t)$ defined
    as in \Secref{app:approx-posterior};
    \emph{Current token}: one-hot encoding $\ve_{s_t}$;
    \emph{Position}: fractional sequence position $t/T$.
  \item \textbf{\texttt{E2}:}
    $\hat{\vw}_t := \hat{\vw}_{\mathrm{ridge}}(t-1)
    = \bigl(\mX_{<t}^\top \mX_{<t} + \tau^2 \mI_D\bigr)^{-1}
      \mX_{<t}^\top \vy_{<t}$,
    the running ridge estimator from \Secref{app:approx-posterior} that
    defines the extrapolative \texttt{M2} predictor. The ridge penalty is set
    to $\tau^2 = \sigma^2$ to match the unit-variance prior of
    \Secref{sec:setup} (cf.\ \Secref{app:approx-posterior}); with noise scale
    $\sigma = 0.5$ this gives $\tau^2 = 0.25$. The estimate is updated
    online using only the \emph{completed} pairs
    $(\vx_1,y_1),\ldots,(\vx_{t-1},y_{t-1})$ that are causally available at
    the extraction position $\vx_t$ (the prediction location for $y_t$;
    cf.\ \Secref{sec:extracting-hiddens}); by convention
    $\hat{\vw}_{\mathrm{ridge}}(0) = \vzero$ (the prior mean).
    $\vx_t$ (current input);
    $\vx_t^\top \hat{\vw}_t$ (diagnostic prediction score);
    $\|\vx_t\|_2^2$ (input norm).
\end{itemize}
We report held-out linear-probe $R^2$ (80/20 split) for each individual
feature and for the combined set.
To test the causal significance of the linearly accessible content, we extract
the \emph{filtered} subspace $\mV_\mathrm{filt}$: fit a linear map from the
combined features to $\vz$, take the SVD of the fitted values, and retain
directions explaining ${>}\,0.5\%$ of total variance.
We then run the causal intervention with $\mV_\mathrm{filt}$ in place of
$\hat{\mV}_\mathrm{opt}$ and compare the resulting loss increases.

\begin{figure}[!htbp]
\centering
\begin{subfigure}[b]{0.48\linewidth}
  \centering
  \includegraphics[width=\linewidth]{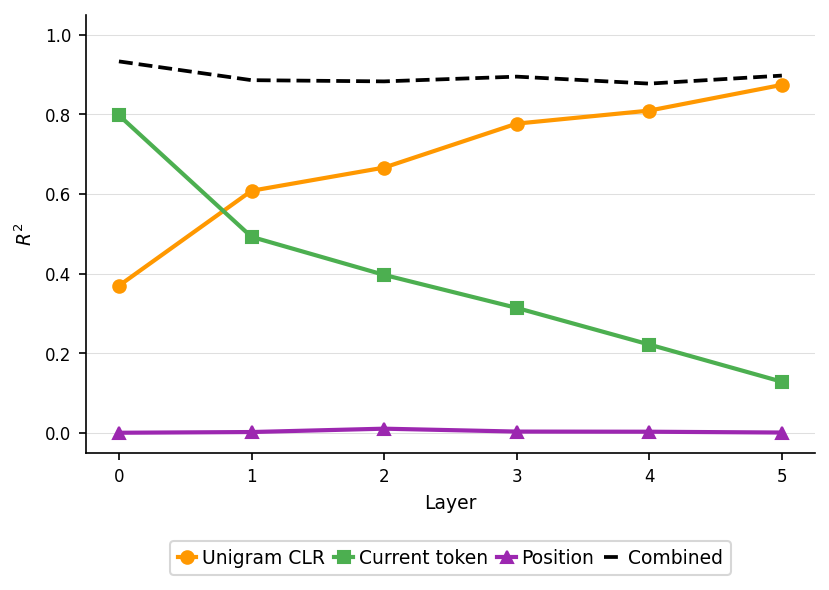}
\end{subfigure}
\hfill
\begin{subfigure}[b]{0.48\linewidth}
  \centering
  \includegraphics[width=\linewidth]{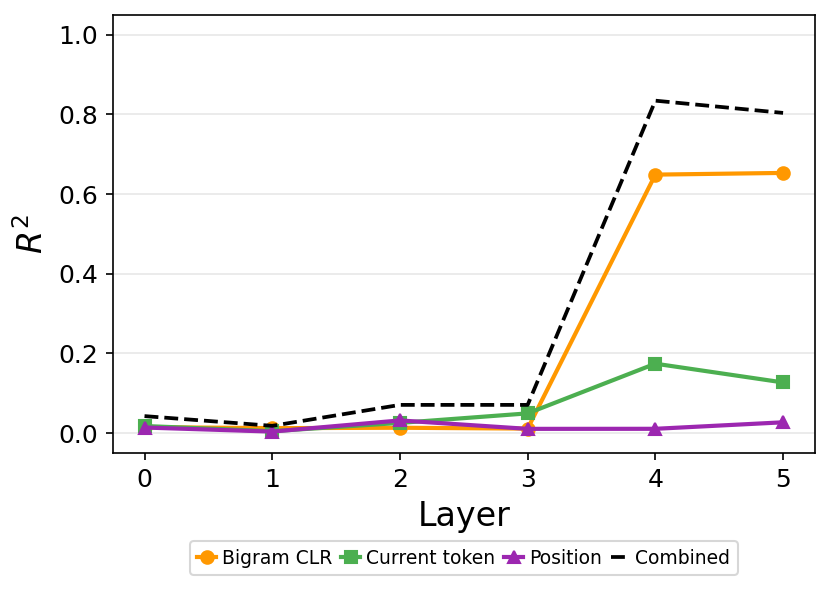}
\end{subfigure}

\vspace{0.6em}

\begin{subfigure}[b]{0.66\linewidth}
  \centering
  \includegraphics[width=\linewidth]{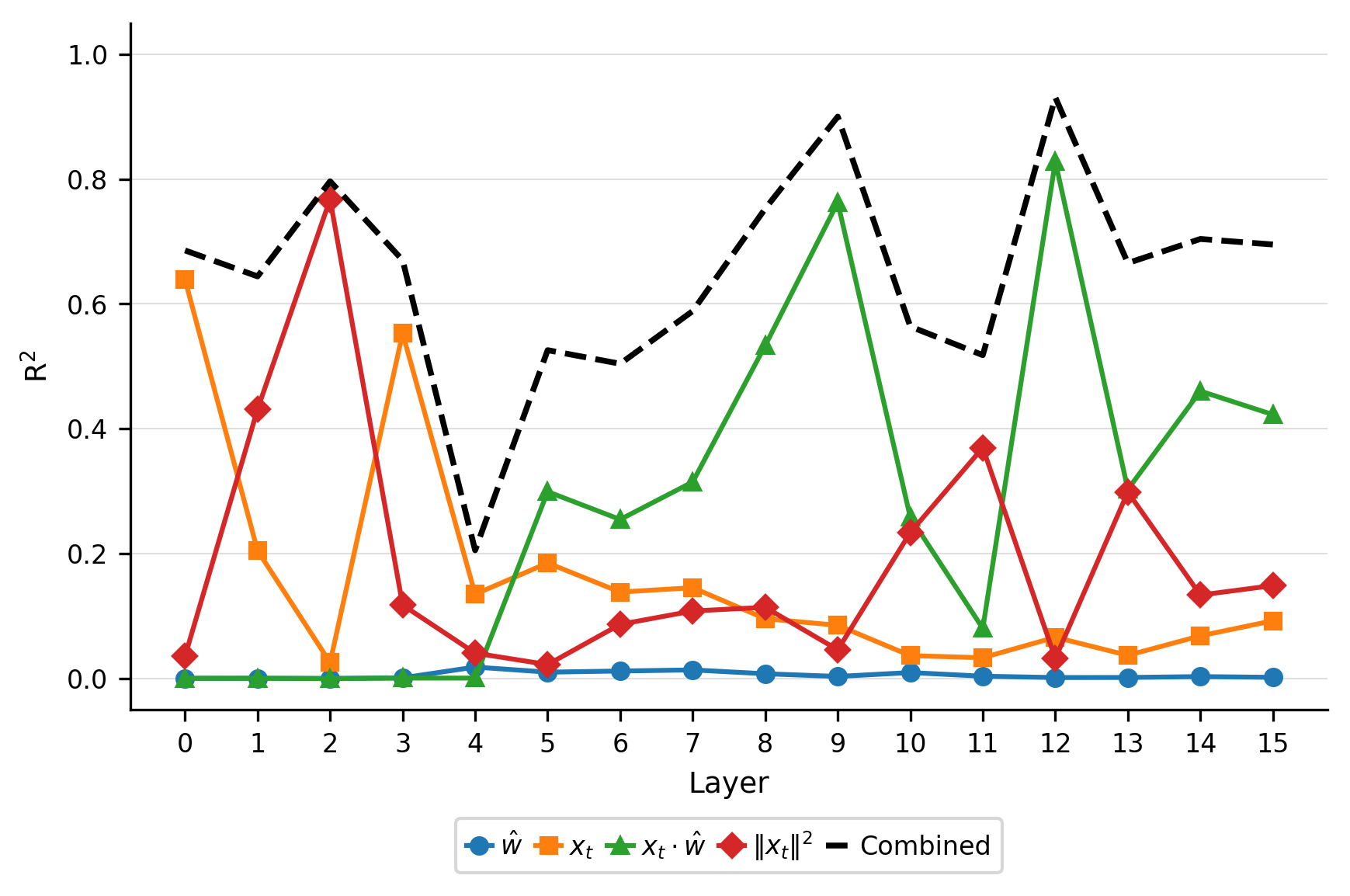}
\end{subfigure}
\caption{%
\textbf{Per-feature $R^2$ predicting $\hat{\mV}_\mathrm{opt}^\top\vh$
across all three experiments.}
Solid lines show individual feature contributions; the dashed line shows the
combined $R^2$ from regressing on all features jointly (see
Eq.~\ref{eq:clr} for the CLR definition used in \texttt{E1} and \texttt{E3}).
\textbf{(a) \texttt{E1} (Dice):}
current token dominates early ($R^2 \approx 0.81$ at layer~0) and decays,
while unigram CLR rises from $\approx 0.35$ to $\approx 0.88$ by layer~5;
combined $R^2$ remains $0.83$--$0.94$ throughout.
\textbf{(b) \texttt{E3} (Latent Markov):}
combined $R^2 \approx 0.01$--$0.04$ at layers~0--2, rising to $\approx 0.61$
at layer~3 and $0.79$--$0.85$ at layers~4--5;
bigram CLR becomes the dominant predictor ($R^2 \approx 0.75$ at layer~5).
\textbf{(c) \texttt{E2} (Linear Regression):}
in early layers, the input statistics dominate, with $\vx_t$ peaking at
$R^2 \approx 0.64$ at layer~0 and $\|\vx_t\|^2$ peaking at
$R^2 \approx 0.77$ at layer~2;
from layer~5 onward, the (causal) prediction score
$\vx_t\cdot\hat{\vw}_t$ takes over and peaks at $R^2 \approx 0.76$ at
layer~9 and $R^2 \approx 0.83$ at layer~12;
$\hat{\vw}_t$ stays near zero throughout; combined $R^2$ ranges from
$\approx 0.21$ at the layer-4 transition dip up to $\approx 0.93$ at layer~12.
}
\label{fig:r2_all}
\end{figure}

\paragraph{Results: Dice (\texttt{E1}).}
Figure~\ref{fig:r2_all}(a) reveals that the orthogonal direction encodes
different context statistics at different depths.
In the early layers (0--2), the current token one-hot already explains the
majority of $\hat{\mV}_\mathrm{opt}$ ($R^2 \approx 0.81$ at layer~0),
while the unigram CLR contributes $R^2 \approx 0.35$; together the combined
features achieve $R^2 \approx 0.94$.
As depth increases, the current token's contribution decays to $\approx 0.14$
by layer~5, while the unigram CLR rises steadily to $\approx 0.88$.
The combined $R^2$ remains high throughout ($0.83$--$0.94$), indicating that
observable token statistics explain $\hat{\mV}_\mathrm{opt}$ well at every
layer.
Figure~\ref{fig:filt_all}(a) confirms this causally: the filtered bars are
nearly indistinguishable from $\hat{\mV}_\mathrm{opt}$ at every layer
(${\approx}75$--$135\%$ for both OOD and minor), reproducing its causal
effect almost exactly.
This shows that what the orthogonal directions encode changes across layers,
shifting from raw token identity in early layers to the running empirical
frequency distribution in later layers, but the content is linearly
accessible from observable statistics throughout.

\paragraph{Results: Latent Markov (\texttt{E3}).}
The Latent Markov setting presents a markedly different pattern
(Figures~\ref{fig:r2_all}(b) and~\ref{fig:filt_all}(b)).
The combined observable features explain virtually none of
$\hat{\mV}_\mathrm{opt}$ in the first three layers ($R^2 \approx 0.01$--$0.04$
at layers~0--2), then rise sharply to $\approx 0.61$ at layer~3 and
$\approx 0.79$--$0.85$ at layers~4--5.
At layer~3, current token and bigram CLR contribute comparably
($R^2 \approx 0.34$ and $0.26$ respectively); beyond that, bigram CLR becomes
the dominant predictor ($R^2 \approx 0.75$ at layer~5) while current token
decays to $\approx 0.10$ and position contributes negligibly throughout.
The filtered intervention (Figure~\ref{fig:filt_all}(b)) mirrors this pattern
causally: at layer~1 the filtered bars are near zero (${\approx}10\%$) despite
$\hat{\mV}_\mathrm{opt}$ causing ${\approx}115$--$123\%$ disruption.
Layers~0 and~2--3 show partial filtered effects (${\approx}25$--$56\%$ vs
$\hat{\mV}_\mathrm{opt}$'s ${\approx}70$--$102\%$), and at layers~4--5 the
filtered bars closely track $\hat{\mV}_\mathrm{opt}$ (${\approx}65$--$85\%$).
This reveals a genuine developmental transition: in early layers
$\hat{\mV}_\mathrm{opt}$ encodes prediction-relevant information that is not
linearly accessible from simple token statistics, whereas in later layers the
orthogonal directions develop a linearly readable bigram-based predictor.

\paragraph{Results: Linear Regression (\texttt{E2}).}
Figures~\ref{fig:r2_all}(c) and~\ref{fig:filt_all}(c) show the results for
\texttt{E2}.
The $R^2$ plot (Figure~\ref{fig:r2_all}(c)) reveals a two-phase per-feature
progression separated by a sharp transition.
In the early layers (0--3) the input statistics dominate: the current
input $\vx_t$ peaks at $R^2 \approx 0.64$ at layer~0 (with a secondary peak
of $\approx 0.55$ at layer~3), and the input norm $\|\vx_t\|^2$ peaks at
$R^2 \approx 0.77$ at layer~2, indicating that the orthogonal direction
in this regime tracks the raw current input together with its scale.
From layer~5 onward, the prediction score
$\vx_t \cdot \hat{\vw}_t$ (the inner product of the current input with
the causal running ridge estimate, which measures how well the input
aligns with the estimated task direction) takes over as the dominant
feature, peaking at $R^2 \approx 0.76$ at layer~9 and
$R^2 \approx 0.83$ at layer~12;
$\|\vx_t\|^2$ persists as a moderate secondary contributor in the deeper
layers (with sporadic peaks of $\approx 0.23$--$0.37$ at layers~10--11 and
13).
The running estimate $\hat{\vw}_t$ itself contributes negligibly throughout,
confirming that the compressed orthogonal direction tracks the scalar
prediction score rather than the full ridge/weight estimate.
The combined $R^2$ ranges from $\approx 0.21$ at the layer-4 transition dip
to $\approx 0.93$ at layer~12, with a notable dip at layer~4 coinciding
with the transition between the input-statistics-dominated and
$\vx_t \cdot \hat{\vw}_t$-dominated regimes.
The filtered intervention (Figure~\ref{fig:filt_all}(c)) closely reproduces
$\hat{\mV}_\mathrm{opt}$'s causal effect on OOD and minor losses across
most layers, with filtered and full bars essentially indistinguishable at
every layer except layer~4.
At the layer-4 transition, the combined probe $R^2$ dips to ${\approx}0.21$
and the $R^2{\ge}0.05$ filter retains only one of the two $V_\mathrm{opt}$
directions (with low joint $R^2$); the filtered bar consequently collapses
to ${\approx}12\%$ even though $\hat{\mV}_\mathrm{opt}$ itself disrupts
${\approx}182\%$ of the ICL gain.
This single-layer outlier coincides exactly with the
input-statistics-to-prediction-score regime change identified in the
$R^2$ analysis above, and the filtered subspace recovers
$\hat{\mV}_\mathrm{opt}$'s causal effect once the joint $R^2$ rises again
at deeper layers, confirming that the causally active content elsewhere
is well explained by these observable statistics.

\begin{figure}[!htbp]
\centering
\begin{subfigure}[b]{0.48\linewidth}
  \centering
  \includegraphics[width=\linewidth]{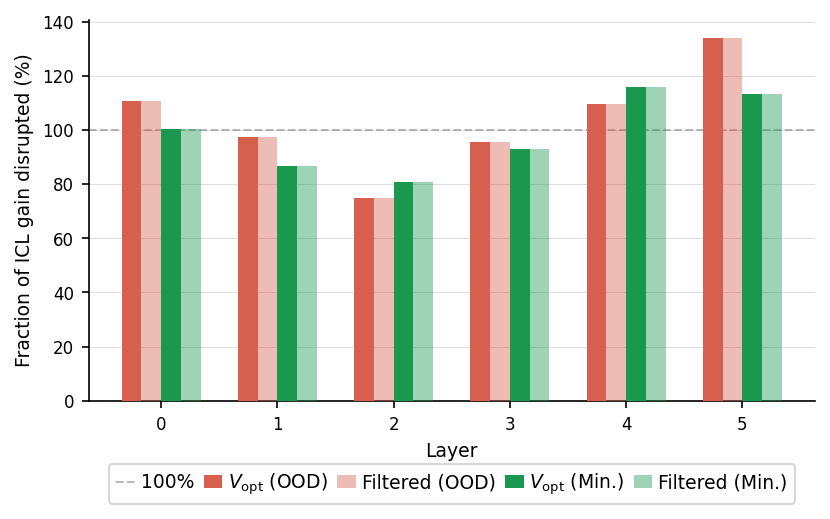}
\end{subfigure}
\hfill
\begin{subfigure}[b]{0.48\linewidth}
  \centering
  \includegraphics[width=\linewidth]{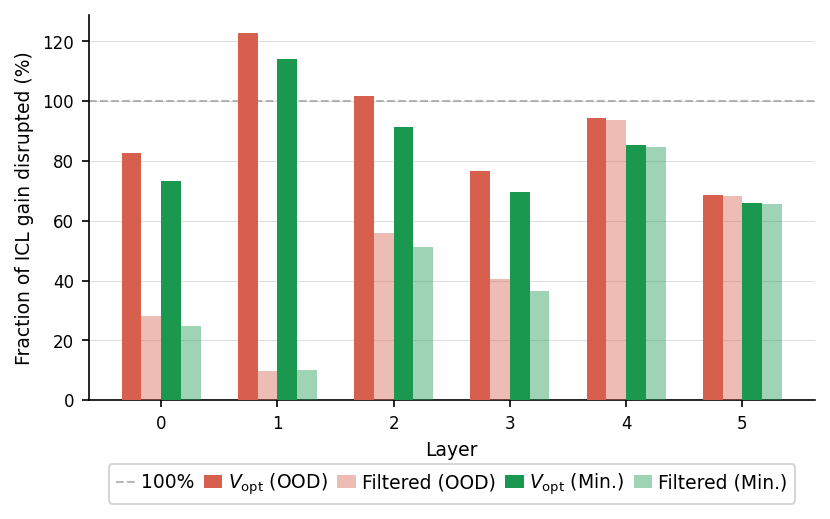}
\end{subfigure}

\vspace{0.6em}

\begin{subfigure}[b]{0.66\linewidth}
  \centering
  \includegraphics[width=\linewidth]{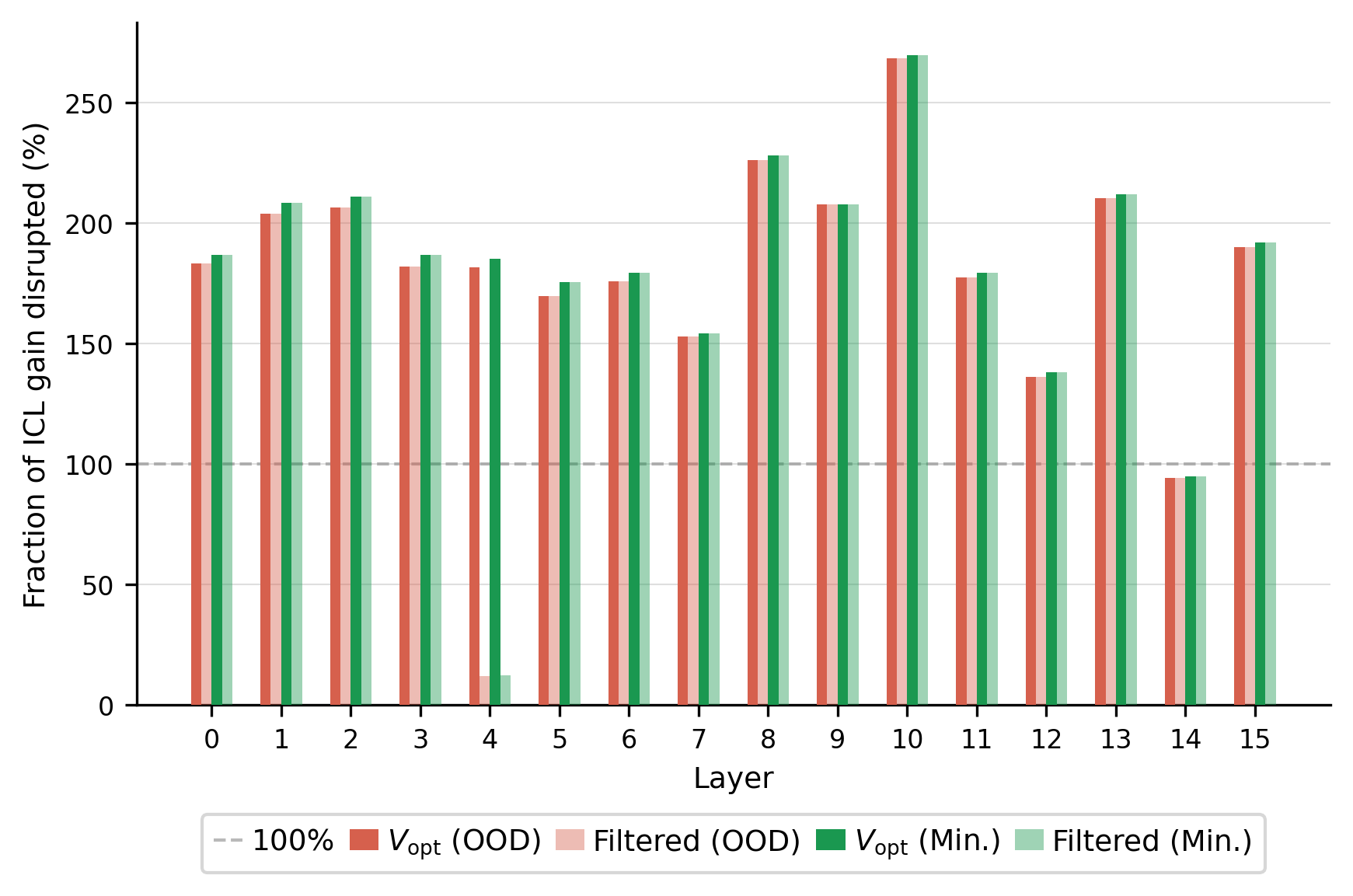}
\end{subfigure}
\caption{%
\textbf{Causal comparison of $\hat{\mV}_\mathrm{opt}$ vs.\ filtered subspace
$\mV_\mathrm{filt}$ across all three experiments.}
Dark bars show the full $\hat{\mV}_\mathrm{opt}$ intervention; light bars show
the filtered subspace derived from the combined observable features.
\textbf{(a) \texttt{E1} (Dice):}
filtered bars are nearly indistinguishable from $\hat{\mV}_\mathrm{opt}$
at every layer (${\approx}75$--$135\%$), confirming that the causally active
content is linearly accessible from token statistics throughout.
\textbf{(b) \texttt{E3} (Latent Markov):}
filtered bars are near zero at layer~1 (${\approx}10\%$) despite
$\hat{\mV}_\mathrm{opt}$ causing ${\approx}115$--$123\%$ disruption;
partial at layers~0 and 2--3 (${\approx}25$--$56\%$); closely match
$\hat{\mV}_\mathrm{opt}$ at layers~4--5 (${\approx}65$--$85\%$),
confirming a developmental transition in linear accessibility.
\textbf{(c) \texttt{E2} (Linear Regression):}
filtered bars closely match $\hat{\mV}_\mathrm{opt}$ at every layer
except layer~4; the layer-4 filtered bar collapses to ${\approx}12\%$
because the combined probe $R^2$ dips to ${\approx}0.21$ at the
input-statistics-to-prediction-score transition and the $R^2{\ge}0.05$
filter retains a direction with low joint explanatory power, missing the
causally active component there.
}
\label{fig:filt_all}
\end{figure}

\paragraph{Summary.}
The three settings reveal distinct patterns in both \emph{what} and \emph{when}
$\hat{\mV}_\mathrm{opt}$ encodes prediction-relevant context.
In \texttt{E1} (Dice), simple token statistics explain $\hat{\mV}_\mathrm{opt}$
well at all depths ($R^2 \ge 0.83$ combined), but the dominant feature shifts
across layers: current token identity predominates early while unigram CLR
takes over in later layers.
The filtered subspace accounts for essentially all of $\hat{\mV}_\mathrm{opt}$'s
causal power throughout the network.
In \texttt{E3} (Latent Markov), token statistics explain almost nothing in
layers~0--2 ($R^2 \approx 0.01$--$0.04$), then become strongly predictive from
layer~3 onwards as bigram CLR and current token emerge jointly
($R^2 \approx 0.61$ at layer~3, rising to $0.79$--$0.85$ at layers~4--5).
Correspondingly, the filtered intervention is near zero at layer~1 and
partial at layers~0 and 2--3, only reproducing the full causal effect at
layers~4--5; the large early-layer effects of $\hat{\mV}_\mathrm{opt}$ are
not linearly explainable by observable token statistics.
In \texttt{E2} (Linear Regression), the dominant feature shifts from input
statistics in the early layers, where $\vx_t$ peaks at $R^2 \approx 0.64$
at layer~0 and $\|\vx_t\|^2$ peaks at $R^2 \approx 0.77$ at layer~2, to
the (causal) prediction score $\vx_t \cdot \hat{\vw}_t$ from layer~5
onward (peak $R^2 \approx 0.83$ at layer~12), with $\hat{\vw}_t$
contributing negligibly throughout.
The filtered subspace accounts for the causal effect on OOD and minor tasks
across most layers, with the exception of the layer-4 R\textsuperscript{2}
transition dip where the filter discards the causally active directions.
In all cases, the filtered subspace $\mV_\mathrm{filt}$ reproduces essentially
all of $\hat{\mV}_\mathrm{opt}$'s causal power once the combined $R^2$ is
appreciable, directly confirming that $\operatorname{col}(\mU_\perp)$ stores
linearly accessible running prediction statistics that drive extrapolative
in-context learning.

\section{Non-Markovian Case: Dyck Path Experiment (\texttt{E4}) Details}\label{app:dyck-section}

\label{app:dyck}

This section provides the methodology and detailed results behind the planted Dyck experiment (\texttt{E4}) used as the motivating non-Markovian counterexample in \Secref{sec:motivating-example}. The experiment matters for our main thesis because it shows that property \texttt{P0} (long-context stability) is conditional on the data-generating structure rather than an innate property of trained transformers: when the source is non-Markovian and admits no compact sufficient statistic, the model retains the full prefix history instead of collapsing onto a low-dimensional summary. 
The main text (Figure~\ref{fig:dyck_combined}) reported the residual variance left in $\vh_t$ after conditioning on the task $z$ and a local context window, and visualized the cluster structure of the final-layer hidden states at a single prefix length.
This section sharpens that diagnosis along two complementary axes.
First, we train linear probes that classify hidden states by the identity of the \emph{full} planted Dyck prefix at every length $l=1,\ldots,7$ and report the resulting validation accuracies, providing direct quantitative evidence (beyond the visual cluster separation shown in the main text) that the model retains an essentially lossless record of the prefix history rather than a compact summary statistic.
Second, we show that this record is recoverable through a single shared low-dimensional projection across all prefix lengths and mask realizations, indicating that the prefix information is arranged along consistent directions in hidden-state space.
Together these results constitute a sharp failure of property~\texttt{P0} in \texttt{E4} that does not occur in \texttt{E1}--\texttt{E3}.

\paragraph{Sequence construction.}
The main text introduces \texttt{E4} (\Secref{sec:motivating-example}) by describing each sequence as a length-$T$ background of i.i.d.\ uniform tokens into which the latent Dyck string $z$ of length $2L$ is planted at positions selected with probability~$\rho$. We expand on the planting rule here so the case $\rho T > 2L$ is unambiguous. Concretely, given $z = (z_1,\ldots,z_{2L})$ we draw a Bernoulli planting indicator vector $\vxi \in \{0,1\}^T$ with $\xi_t \stackrel{\text{i.i.d.}}{\sim} \mathrm{Bernoulli}(\rho)$, then enumerate its successes $1 \le \tau_1 < \tau_2 < \cdots$ in left-to-right order and overwrite background token $b_{\tau_j}$ with $z_j$ for every $j \in \{1,\ldots,\min(2L,\,\sum_{t=1}^{T}\xi_t)\}$. In other words, $z$ is embedded \emph{in order} into the first $2L$ Bernoulli successes of $\vxi$, and \emph{all} other positions, including any successes beyond the $2L$-th, keep their original background token. With the experimental settings used throughout this paper, $T = 192$, $2L = 20$, $\rho = 0.25$, the expected number of Bernoulli successes is $\rho T = 48 \gg 2L$, so undersampling (fewer than $2L$ successes) occurs with vanishingly small probability and the entire latent string $z$ is essentially always planted; the suffix of the sequence after the $2L$-th plant is pure background and therefore carries no further task-conditional signal.

\paragraph{Prefix probe.}
Consider the model trained on experiment \texttt{E4} with $K$ Dyck tasks (balanced bracket strings of length $2L = 20$).
At each sequence position, define the \emph{Dyck prefix length} $l$ as the number of planted bracket characters observed so far.
Two tasks that share the same first $l$ planted characters are indistinguishable given the observed Dyck prefix up to that point.
We group tasks by their length-$l$ prefix and define the \emph{prefix class} of a hidden state as the identity of this prefix.
The number of distinct prefix classes grows with $l$: for instance, $l = 2$ yields only $2$ classes (open or close), while $l = 7$ yields $35$ classes.

We train a linear probe consisting of a shared $2$-dimensional linear projection $\mW \in \R^{2 \times d}$ followed by a separate two-layer $\texttt{MLP}$ classifier per prefix length.
The probe is trained jointly across all prefix lengths $l = 1, \ldots, 7$, using hidden states extracted from layer~$5$ (the final layer).
Training and validation sets are constructed by generating sequences under multiple random planting masks and collecting hidden states at positions where exactly $l$ Dyck characters have been planted.


\paragraph{Probing results.}
The probe achieves near-perfect validation accuracy ($>95\%$) for all prefix lengths up to $l=7$ (the longest tested), reaching $98.6\%$ at $l=7$ despite the rapid growth in the number of classes.
Figure~\ref{fig:dyck_proj} shows a shared $2$-dimensional linear projection at $l=7$, where the $35$ distinct prefix classes form well-separated clusters. Points are colored by the number of unmatched left parentheses, and the shaded regions depict the decision boundaries of the per-length \texttt{MLP} classifier. This visualization is identical to the right panel of Figure~\ref{fig:dyck_combined} in projection and coloring; the only difference is that here each cluster is labeled by its literal bracket prefix, whereas in the main-text panel the same prefixes are encoded compactly in hexadecimal for legibility. The hexadecimal encoding is obtained by mapping each step to a bit ($\text{`('}\!\to\!1$, $\text{`)'}\!\to\!0$), dropping the leading `(' (present for all prefixes), padding the remaining bits to a multiple of~$4$, and converting to hexadecimal; for example, the length-$7$ prefix $(()())($ yields bits $1,0,1,0,0,1$ after dropping the first `(', and padding to $8$ bits gives $10100100_2=\texttt{A4}$. 

This result has two implications.
First, the hidden states encode a representation of the full Dyck prefix, not a compressed statistic such as the running difference between opening and closing brackets (which would yield only $\lfloor l/2 \rfloor + 1$ distinguishable states at prefix length $l$, not the valid Dyck-prefix classes observed).
Second, because this encoding is recoverable through a shared $2$-dimensional linear projection, the prefix information is arranged along consistent directions across different prefix lengths and mask realizations.

\begin{figure}[t]
\centering
\includegraphics[width=0.5\linewidth]{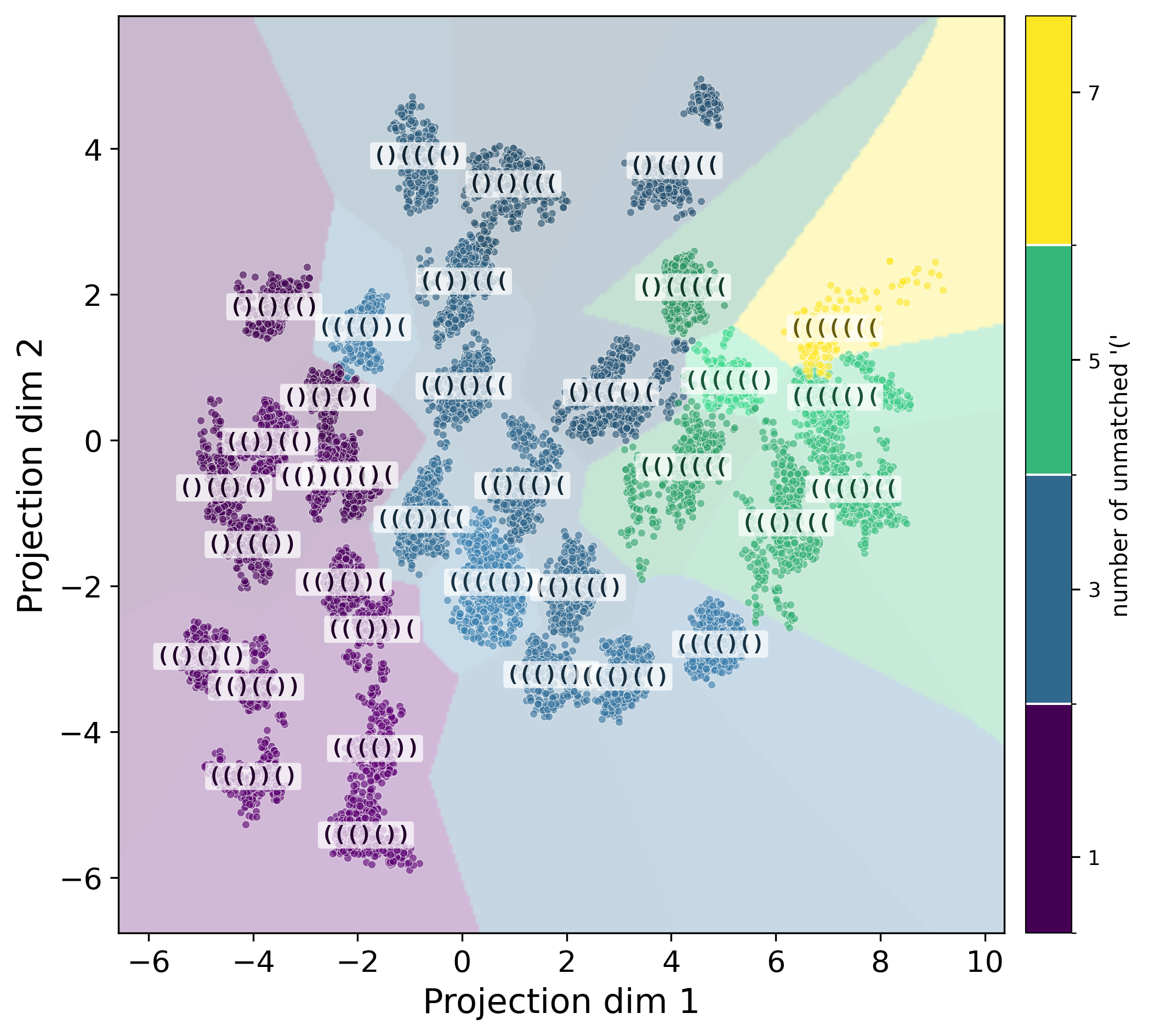}
\caption{\textbf{Hidden-state geometry for the planted Dyck language (\texttt{E4}).}
Two-dimensional projection of final-layer hidden states at prefix length $l=7$ (yielding $35$ distinct Dyck prefixes). Points form well-separated clusters, with both clusters and induced decision regions colored by the number of unmatched left parentheses. The clear separation across all prefix classes indicates that the representation preserves the full Dyck prefix history, rather than compressing it to low-dimensional summary statistics. 
This is the same projection (and the same coloring) as the right panel of Figure~\ref{fig:dyck_combined}; the only difference is that each cluster is annotated here with its literal bracket prefix, whereas the main-text panel encodes the same prefixes as compact hexadecimal codes for legibility at smaller figure size.
} 
\vspace{-15pt}
\label{fig:dyck_proj}
\end{figure}



\end{document}